\documentclass{article}

\usepackage[preprint]{neurips_2026}
\usepackage{amsthm}
\usepackage{amsmath}
\usepackage{graphicx}
\usepackage{multirow}
\usepackage{float}
\usepackage{booktabs}
\usepackage{tabularx}
\usepackage{array}
\usepackage{placeins}
\newcommand{\condvar}{C}              % condition random variable
\newcommand{\condset}{\mathcal{C}}    % condition set
\newcommand{\ncond}{K}                % number of conditions
\newcommand{\condval}{c}              % a realization of C
\newcommand{\partmap}{\pi}           % label-to-condition partition map
\newcommand{\featenc}{f_{\mathrm{enc}}} % feature extractor / backbone map

\newcommand{\embedmat}{Q}             % simplex embedding matrix
\newcommand{\Id}{\mathbf{I}}          % identity matrix
\newcommand{\infoscale}{\gamma}       % scalar MI level in product-space theorem
\newcommand{\priorbias}{b_0}          % global shift for conditional prior

\newcommand{\cleanrule}{y^\star}      % deterministic clean-label rule
\newcommand{\nuis}{N}                 % nuisance variable
\newcommand{\aggq}{\bar q_\theta}     % aggregated posterior   

\newcommand{\noreg}{\textsc{No Reg}}
\newcommand{\stdkl}{\textsc{StdKL}}
\newcommand{\condkl}{\textsc{CondKL}}
\newcommand{\dualkl}{\textsc{DualKL}}

\newtheorem{theorem}{Theorem}
\newtheorem{lemma}{Lemma}
\newtheorem{proposition}{Proposition}
\newtheorem{corollary}{Corollary}

\theoremstyle{remark}

\usepackage[utf8]{inputenc} % allow utf-8 input
\usepackage[T1]{fontenc}    % use 8-bit T1 fonts
\usepackage{hyperref}       % hyperlinks
\usepackage{url}            % simple URL typesetting
\usepackage{booktabs}       % professional-quality tables
\usepackage{amsfonts}       % blackboard math symbols
\usepackage{nicefrac}       % compact symbols for 1/2, etc.
\usepackage{microtype}      % microtypography
\usepackage{xcolor}         % colors
\usepackage{float}          % [H] float placement in appendix figures
\usepackage{wrapfig}        % text wrapping around figures

\AtBeginEnvironment{thebibliography}{\small}

\title{Rethinking the Information Bottleneck: Structured Decomposition under Label-Induced Partitions}

\author{%
  Jingyao Zhang \quad
  Yuxuan Li \quad
  Lu Han \quad
  Ali Anaissi \quad
  Nguyen H. Tran \\
  School of Computer Science, The University of Sydney \\
  Sydney, NSW 2006, Australia
}

\begin{document}

\maketitle

\begin{abstract}
Standard information bottleneck (IB) regularization constrains representations via a single scalar $I(Z;X)$, implicitly treating all information as homogeneous. However, a single global compression control couples label-relevant structure with residual within-condition variation, rather than regulating their allocation independently, allowing nuisance information to persist in learned representations. For example, in medical imaging applications, residual variation often stems from acquisition conditions, background factors, or subject-specific appearance. This issue becomes particularly pronounced in data-limited settings, where models tend to overfit such variation, hindering generalization. While existing regularization methods can stabilize training, control capacity, or shape representation geometry, they do not explicitly separate nuisance-like variation from task-supporting structure. To address this limitation, we revisit IB from a structured perspective based on a label-induced partition, where condition-level structure and within-condition information play distinct roles. This leads to a dual-bottleneck formulation: a standard KL term controls global information capacity, while a conditional KL term targets within-condition information. We show that the conditional KL admits an exact decomposition into a within-condition information term and a prior-mismatch term, explaining its alignment with the design objective. With a simplex-structured conditional prior, the method provides controllable latent geometry and integrates seamlessly into existing pipelines. Experiments on classification and segmentation show the clearest gains in low-data classification and consistent improvements across dense prediction benchmarks.

\end{abstract}

\section{Introduction}
\label{sec:intro}

Learning latent representations with strong generalization is a central goal in supervised learning. Not all encoded information should be treated equally: some components support label-induced separability, while others merely reflect residual within-condition variation that is easy to memorize but unreliable for prediction. This is especially salient in medical imaging, where acquisition conditions, background factors, and subject-specific appearance contribute substantial residual variation. In this work, we formalize this distinction under a label-induced condition variable \(\condvar=\partmap(Y)\), where \(Y\) denotes the class label and \(\partmap\) maps labels to condition groups, and study how to preserve condition-level structure while more directly constraining within-condition information.

Among principled approaches, the Information Bottleneck (IB) principle and its variational formulation, the Variational Information Bottleneck (VIB), view regularization as balancing predictive relevance against compression \citep{tishby1999information,tishby2015deep,alemi2017deep}. In standard VIB, this is implemented by constraining \(I(Z;X)\). However, under the label-induced partition, the total information admits
\[
I(Z;X)=I(Z;C)+I(Z;X\mid C).
\]
This reveals a regularizer-level structural coupling: a single global compression control jointly penalizes condition-level structure and within-condition variation, leaving their allocation without an independent control axis. Encoders with the same global budget may therefore retain different amounts of residual information, a limitation formalized in Sec.~\ref{subsec:theory_single_axis}. This issue is especially relevant in medical imaging, where acquisition and subject-specific variation can be substantial, but it also applies more broadly when residual variation is easy to memorize yet weakly aligned with label-induced separability \citep{arpit2017closer}; see Sec.~\ref{subsec:residual_variation_concern}. We therefore introduce a dual-bottleneck objective, where a standard KL term controls global capacity and a conditional KL term targets within-condition information under \(\condvar=\partmap(Y)\).

To instantiate this realization in a structured way, we further introduce a class-conditional prior constrained by regular-simplex geometry. This design provides a symmetric and controllable geometric scaffold for the conditional prior, injects a separability-oriented inductive bias into the latent space, and reduces direct reliance on empirical prototype construction at initialization.

The main contributions of this paper are summarized as follows:
\begin{itemize}
    \item We present a structured information-control view under a label-induced partition, showing that global compression controls the aggregate information budget but not its allocation between condition-level and within-condition information.
    \item We propose a dual-bottleneck objective as a direct operational realization of this view, and derive an exact decomposition of the conditional KL term into a within-condition information component and a condition-wise prior-mismatch component---a closure property that makes the conditional KL a principled surrogate for constraining $I(Z;X\mid C)$.
    \item We establish a task-constrained two-axis upper-bound result: the dual regularizer simultaneously upper-bounds visible within-condition nuisance dependence (through $J_{\mathrm{cond}}$) and conditional-invisible condition-level encoding (through $J_{\mathrm{std}}$), neither of which is controlled by the other surrogate alone; the corresponding family-inclusion property follows as an immediate corollary.
    \item We introduce a simplex-constrained conditional prior with fixed shared variance for practical instantiation, and experiments on supervised classification and segmentation, including medical-image benchmarks, demonstrate consistent improvements over the unregularized baseline, with the clearest gains in data-limited regimes.
\end{itemize}

\section{Related work}
\label{sec:related_work}
\paragraph{Information bottleneck and variational formulations.}
The Information Bottleneck (IB) principle formalizes supervised representation
learning as a trade-off between predictive relevance and compression
\citep{tishby1999information}. This view was later extended to deep models
\citep{tishby2015deep} and made practical through variational objectives such as
the Variational Information Bottleneck (VIB) \citep{alemi2017deep}. However,
a single global bottleneck on $I(Z;X)$ can be too coarse for deep classification,
especially in deterministic or continuous-input regimes
\citep{amjad2020learning}. Related work also suggests that a single principle may
be insufficient in complex representation-learning settings
\citep{ahuja2021invariance}. Our work addresses a different limitation: under a label-induced partition, the global budget conflates condition-level structure with within-condition variation, motivating a dual bottleneck.

\paragraph{Conditional bottlenecks and conditional priors.}
Our conditional term is related to prior conditional information-control objectives, including conditional VIB-style formulations for multi-view representation learning \citep{hwang2021multiview}, conditional compression objectives for self-supervised visual representations \citep{lee2021compressive}, and recent conditional information-bottleneck formulations for sequence modeling and time-series imputation \citep{kalajdzievski2023transfer,choi2024conditional}. These works show that conditioning the bottleneck can be useful, but their conditions arise from views or augmentations rather than from a supervised label-induced partition. In contrast, our condition variable is \(\condvar=\partmap(Y)\), which is used to separate condition-level structure
$I(Z;C)$ from within-condition information $I(Z;X\mid C)$. Moreover, we retain
both $J_{\mathrm{std}}$ and $J_{\mathrm{cond}}$, so global capacity control is
not replaced by conditional control. Condition-dependent priors also appear in
CVAEs and semi-supervised generative models
\citep{sohn2015learning,kingma2014semi}, where they serve generative likelihoods
or ELBOs. Here, the conditional prior instead acts as a supervised
information-control prior whose KL term decomposes into $I(Z;X\mid C)$ plus
condition-wise prior mismatch, with simplex geometry providing a symmetric
instantiation.

\paragraph{Representation geometry and class structure.}
A complementary line of work improves supervised representations by shaping
their geometry. Center loss, prototype learning, supervised contrastive learning,
and MCR$^2$ promote intra-class compactness, inter-class separation, or
class-wise compressibility through empirical centers, prototypes, pairwise
objectives, or coding-rate criteria
\citep{wen2016discriminative,yang2018robust,khosla2020supervised,yu2020learning}. In contrast, we impose geometric structure directly at the conditional prior, and ground it in an information-theoretic decomposition that enables targeted control of within-condition information without empirical centers. %To our knowledge, this is the first formulation that links representation geometry to an explicit decomposition of mutual information, replacing surrogate geometric objectives with a principled information-level control mechanism.

\begin{table}[h]
\centering
\caption{Objective-level comparison. $J_{\mathrm{std}}$ and $J_{\mathrm{cond}}$
denote global and label-conditioned KL controls.}
\label{tab:objective_comparison}
\footnotesize
\setlength{\tabcolsep}{2.6pt}
\renewcommand{\arraystretch}{0.92}
\begin{tabularx}{\linewidth}{
@{}
>{\raggedright\arraybackslash}p{0.20\linewidth}
>{\centering\arraybackslash}p{0.12\linewidth}
>{\centering\arraybackslash}p{0.12\linewidth}
>{\raggedright\arraybackslash}p{0.22\linewidth}
>{\raggedright\arraybackslash}X
@{}
}
\toprule
Method & Global & Cond. & Geometry / prior & Main distinction \\
\midrule
VIB / \textsc{StdKL}
& $J_{\mathrm{std}}$
& --
& global prior
& only global bottleneck \\

\textsc{CondKL} (cond-only)
& --
& $J_{\mathrm{cond}}$
& conditional prior
& no label-induced dual control \\

Center / prototype
& --
& implicit
& empirical centers
& feature-space geometry \\

SupCon
& --
& implicit
& contrastive pairs
& pairwise geometry \\

Generic regularizers
& generic
& --
& none
& orthogonal heuristics \\

\textsc{DualKL} (ours)
& $J_{\mathrm{std}}$
& $J_{\mathrm{cond}}$
& simplex conditional prior
& joint two-axis control \\
\bottomrule
\end{tabularx}
\end{table}

\paragraph{Simplex geometry and neural collapse.}
Neural Collapse studies show that deep classifiers often approach highly
symmetric Simplex ETF configurations, and that such geometry can be optimal
under simplified feature models
\citep{papyan2020prevalence,zhu2021geometric}. Inspired by this line of work, we adopt simplex geometry as a structured prior template for class-conditioned regularization. In our setting, this choice provides a symmetric and analytically transparent starting geometry while reducing reliance on empirical center construction at initialization.

\section{Problem description, basic assumptions, and design goal}
\label{sec:problem_assumptions_design}

\subsection{Residual variation as a representation concern}
\label{subsec:residual_variation_concern}

In this work, we treat regularization not as generic capacity control alone, but as representation-level selection of what to preserve and what to suppress. Classical regularizers mainly control capacity or stabilize optimization \citep{krogh1991simple,srivastava2014dropout,prechelt1998early,cubuk2019autoaugment}, while information-bottleneck methods make this selection explicit through compression-oriented objectives \citep{tishby1999information,alemi2017deep}. Our specific concern is residual variation that is easy to memorize yet weakly aligned with label-induced separability \citep{arpit2017closer}.

\begin{wrapfigure}{r}{0.4\columnwidth}
    \vspace{-0.5em}
    \centering
    \includegraphics[width=\linewidth]{figs/statisticaldata/figure_dispersion_vs_fg_bg_label_correlation.pdf}
    \vspace{-1.5em} 
    \caption{\textbf{BloodMNIST feature heterogeneity.} Points denote low-level statistics with sample separation on the x-axis and label separation on the y-axis. Lower-right points correspond to high variation but weak label relevance.}
    \label{fig:bloodmnist_feature_heterogeneity}
    \vspace{2pt}
\end{wrapfigure}

This concern is especially salient in medical-image settings, where acquisition conditions, background factors, and subject-specific appearance can vary substantially. At the same time, the underlying issue is not domain-specific: in supervised representation learning more broadly, residual within-condition variation can coexist with task-supporting structure and therefore requires explicit control.

This view is consistent with prior information-theoretic analysis. 
Under the assumptions that \(N\) is a nuisance for \(Y\) (i.e., \(I(N;Y)=0\)),
\(Z\) is sufficient for \(Y\) so that \(I(Z;Y)=I(X;Y)\), and
\(N\to X\to Z\), Achille and Soatto~\citep{achille2018emergence} derive
\[
I(Z;\nuis) \le I(Z;X)-I(X;Y).
\]
Equivalently, under sufficiency this can be read as \(I(Z;\nuis)\le I(Z;X)-I(Z;Y)\). We use this result only as motivation: information retained beyond what is needed for the task can carry nuisance dependence, so residual variation is a meaningful object of control.
Thus, retaining more input information than the task strictly requires does not guarantee that nuisance-related dependence vanishes. Fig.~\ref{fig:bloodmnist_feature_heterogeneity} provides dataset-level evidence on BloodMNIST: some low-level statistics vary strongly across samples but separate labels weakly. Although this analysis is not an information-plane estimate, it supports treating residual variation as a distinct object of control.

\subsection{Information categories under a label-induced partition}
\label{subsec:information_categories}

We now formalize the above concern in a standard supervised representation learning setting. Let \(X \in \mathcal{X}\) denote the observed input, \(Y \in \mathcal{Y}\) the supervised target, and \(Z \in \mathbb{R}^d\) the latent representation produced by a stochastic encoder \(q_\theta(z\mid x)\). We further introduce a finite condition variable induced by the supervised target, \(\condvar=\partmap(Y)\), where \(\partmap\) maps labels to condition groups. This condition variable is not intended to exhaust all task semantics; rather, it serves as an analytical coordinate for distinguishing different informational roles in the learned representation.

Under this coordinate system, we distinguish two categories of information. The first is condition-level structure \(I(Z;\condvar)\), namely the component aligned with the label-induced partition and supportive of separability across conditions. The second is within-condition information \(I(Z;X\mid \condvar)\), namely the dependence that remains between the representation and the input after conditioning on the partition. The former is the structure we aim to preserve relatively, while the latter is the component we aim to constrain more directly.
Accordingly, the design goal is to realize a more selective form of representation control:
\[
I(Z;X)=I(Z;\condvar)+I(Z;X\mid \condvar).
\]
This decomposition follows from the chain rule and the encoder-induced independence \(Z\perp \condvar\mid X\); see Appendix~\ref{app:notation}--\ref{app:thm1} (Theorem~\ref{thm:decomp}). We use it to separate condition-level from within-condition information.

\subsection{Assumptions and interpretation scope}
\label{subsec:assumptions_scope}

We conduct the analysis in a clean supervised learning setting. For interpretation, we assume that the observed input can be written as
\begin{equation}
X=T(S,\nuis), \qquad Y=f(S),
\end{equation}
where \(S\) denotes the semantic pattern supporting the supervised target, \(\nuis\) denotes additional perturbation or background variation, and \(T\) combines them into the observed input. This setup allows us to focus on input complexity and residual variation, rather than confounding effects from noisy or corrupted labels.

Let \(\condvar=\partmap(Y)\) denote the label-induced condition variable introduced above. We treat labels as reliable so that the decomposition and conditional modelling are well defined, and use \(\condvar\) as the coordinate for separating condition-level structure from within-condition information. While medical imaging provides a motivating regime with substantial residual variation, the formulation is domain-agnostic and applies broadly to supervised representation learning settings where residual variation coexists with task-relevant structure.

These assumptions are introduced for interpretability; the proposed method does not depend on them in practice. Minimally, \(I(Z;X\mid \condvar)\) is the residual dependence between \(Z\) and \(X\) after conditioning on \(\condvar\). A stronger nuisance-based interpretation requires additional conditional-independence structure and is introduced later as a complementary view rather than as a prerequisite for the method; see Sec.~\ref{subsec:nuisance} and Appendix~\ref{app:nuisance} (Proposition~\ref{prop:nuisance}).

\section{Theory: information-control view}
\label{sec:theory_control}

\subsection{Label-induced information coordinates}
\label{subsec:theory_decomposition}

We analyze representation control under \(\condvar=\partmap(Y)\), introduced in Sec.~\ref{subsec:information_categories}, as a coordinate for two informational roles. Since the posterior encoder is \(q_\theta(z\mid x)\), \(Z\perp \condvar\mid X\); Theorem~\ref{thm:decomp} gives
\[
I(Z;X)
=
I(Z;\condvar)
+
I(Z;X\mid \condvar).
\]
The first term captures condition-level structure aligned with the label-induced partition, while the second term captures residual dependence on the input after conditioning on that partition.
Thus, the central control question is not merely how much information \(Z\) retains from \(X\), but how the total information budget is allocated between condition-level structure and within-condition residual variation.

A formal proof of the decomposition is given in Appendix~\ref{app:thm1} (Theorem~\ref{thm:decomp}); see Fig.~\ref{fig:info_plane} for a schematic illustration.

\begin{figure}[H]
    \centering
    \begin{minipage}[t]{0.24\textwidth}
        \centering
        \includegraphics[width=\linewidth]{figs/noiseheatmap/condOnly_left.pdf}
        \vspace{-12pt}
        
        {\footnotesize \textbf{(a)} Information decomposition view}
    \end{minipage}\hfill
    \begin{minipage}[t]{0.24\textwidth}
        \centering
        \includegraphics[width=\linewidth]{figs/noiseheatmap/stdOnly_right.pdf}
        \vspace{-12pt}
        
        {\footnotesize \textbf{(b)} Effect of global control}
    \end{minipage}\hfill
    \begin{minipage}[t]{0.24\textwidth}
        \centering
        \includegraphics[width=\linewidth]{figs/noiseheatmap/condOnly_right.pdf}
        \vspace{-12pt}
        
        {\footnotesize \textbf{(c)} Effect of conditional control}
    \end{minipage}\hfill
    \begin{minipage}[t]{0.24\textwidth}
        \centering
        \includegraphics[width=\linewidth]{figs/noiseheatmap/dual_right.pdf}
        \vspace{-12pt}
        
        {\footnotesize \textbf{(d)} Effect of dual control}
    \end{minipage}

    \caption{\textbf{Information-plane view of representation control.}
Panel (a) illustrates the decomposition
\(I(Z;X)=I(Z;\condvar)+I(Z;X\mid \condvar)\).
Panels (b)--(d) schematically compare global-only, conditional-only, and dual control directions.}
    \label{fig:info_plane}
\end{figure}

\subsection{Why single-axis control is insufficient}
\label{subsec:theory_single_axis}
\label{subsec:insufficient}
\label{subsec:why_both}

Even when \(C=Y\), where the exact IB objective can be rewritten in these coordinates, its single compression coefficient couples the reward on \(I(Z;Y)\) with the penalty on \(I(Z;X\mid Y)\). More generally, a global-only bottleneck penalizes the aggregate \(I(Z;X)\), but does not provide an independent axis for regulating its split between \(I(Z;C)\) and \(I(Z;X\mid C)\).
Indeed, Appendix~\ref{app:thm2} constructs encoders with identical \(I(Z;X)\) but different \(I(Z;X\mid \condvar)\)
(Theorems~\ref{thm:insufficient}--\ref{thm:productspace}).
Therefore, global information control alone is insufficient for selectively suppressing within-condition residual variation.

Conversely, conditional control alone is also incomplete.
It can target within-condition dependence, but it does not by itself constrain global or condition-level encoding.
Appendix~\ref{subsec:invisible_condition_level_encoding} gives a structural regime in which a condition-level component \(B\) is invisible to \(I(Z;X\mid \condvar)\), while still occupying positive global information capacity
(Proposition~\ref{prop:invisible_condition_level_encoding}).
This shows that a purely conditional bottleneck can miss information aligned with the condition variable, making a global control axis still necessary.

\label{subsec:nuisance}
The within-condition term also admits an optional nuisance interpretation.
Suppose that
\[
X=T(S,\nuis),
\qquad
Y=f(S),
\qquad
\condvar=\partmap(Y),
\]
where \(S\) denotes task-relevant signal and \(\nuis\) denotes within-condition nuisance variation.
If, conditioned on \(\condvar\), the variables satisfy the Markov chain
\[
\nuis \to X \to Z,
\]
then the conditional data-processing inequality gives
\[
I(Z;\nuis\mid \condvar)
\le
I(Z;X\mid \condvar).
\]
Thus, when residual variation can be modeled as \(\nuis\), controlling \(I(Z;X\mid \condvar)\) also controls a visible nuisance-dependence axis.
This nuisance reading is optional: \(I(Z;X\mid \condvar)\) is not nuisance information in general, but it upper-bounds nuisance dependence under the stated conditional Markov assumption.
A formal statement is provided in Appendix~\ref{app:nuisance} (Proposition~\ref{prop:nuisance}).

Taken together, the theory identifies two complementary residual-information axes:
a visible within-condition nuisance axis, controlled through \(I(Z;X\mid \condvar)\), and a conditional-invisible condition-level axis, controlled through global information capacity.
This motivates a dual control view rather than a single-axis bottleneck, as illustrated schematically in Fig.~\ref{fig:info_plane}(b)--(d) and visualized in latent space in Fig.~\ref{fig:latent_compare}.

\begin{figure}[H]
    \centering
    \begin{minipage}[t]{0.24\textwidth}
        \centering
        \includegraphics[width=\linewidth]{figs/demodistribute/vib_no_kl.pdf}
        \vspace{-12pt}
        
        {\footnotesize \textbf{(a)} \noreg{}}
    \end{minipage}\hfill
    \begin{minipage}[t]{0.24\textwidth}
        \centering
        \includegraphics[width=\linewidth]{figs/demodistribute/vib_std_kl.pdf}
        \vspace{-12pt}
        
        {\footnotesize \textbf{(b)} Global control}
    \end{minipage}\hfill
    \begin{minipage}[t]{0.24\textwidth}
        \centering
        \includegraphics[width=\linewidth]{figs/demodistribute/vib_cond_kl.pdf}
        \vspace{-12pt}
           
        {\footnotesize \textbf{(c)} Conditional control}
    \end{minipage}\hfill
    \begin{minipage}[t]{0.24\textwidth}
        \centering
        \includegraphics[width=\linewidth]{figs/demodistribute/vib_std_cond_kl.pdf}
        \vspace{-12pt}
        
        {\footnotesize \textbf{(d)} Dual control}
    \end{minipage}

    \caption{\textbf{Visual illustration of complementary control in latent space.}
    Global control mainly contracts overall spread, conditional control tightens within-condition structure, and dual control combines both effects in this illustrative example.}
    \label{fig:latent_compare}
    \vspace{0pt}
\end{figure}

\subsection{Main theorem: task-constrained two-axis upper-bound control}
\label{subsec:theory_main_theorem}
\label{subsec:pareto_synthesis}

We now collect the preceding information-control arguments into the main formal statement.
The goal is to characterize the regularizer-level control capacity of dual regularization under task adequacy, rather than to claim universal improvement for every optimized model.
Let
\[
\mathcal F_\varepsilon=\{(\theta,\omega,\psi):R_{\mathrm{task}}(\theta,\omega)\le\varepsilon\},
\quad
R_{\mathrm{task}}(\theta,\omega)
=
\mathbb{E}_{p(x,y)q_\theta(z\mid x)}
[\ell_{\mathrm{task}}(g_\omega(z),y)] .
\]
Under the nuisance interpretation of Appendix~\ref{app:nuisance}
(Proposition~\ref{prop:nuisance}) and the structural refinement regime of
Appendix~\ref{subsec:invisible_condition_level_encoding}
(Proposition~\ref{prop:invisible_condition_level_encoding}), define
\[
H_\lambda(\theta)
=
\lambda_{\mathrm{vis}}I_\theta(Z;\nuis\mid\condvar)
+
\lambda_{\mathrm{inv}}I_\theta(Z;B),
\qquad
U_\lambda(\theta,\psi)
=
\lambda_{\mathrm{vis}}J_{\mathrm{cond}}(\theta,\psi)
+
\lambda_{\mathrm{inv}}J_{\mathrm{std}}(\theta),
\]
with \(\lambda_{\mathrm{vis}},\lambda_{\mathrm{inv}}>0\).
Here \(H_\lambda\) measures visible nuisance dependence and conditional-invisible condition-level encoding, while \(U_\lambda\) is its KL-regularizer surrogate; \(J_{\mathrm{cond}}\) and \(J_{\mathrm{std}}\) are defined in Sec.~\ref{subsec:objective} and Appendix~\ref{app:final-logic}.

\begin{theorem}[Task-constrained upper-bound dominance of dual control]
\label{thm:pareto_main}
For every \((\theta,\omega,\psi)\in\mathcal F_\varepsilon\),
\[
H_\lambda(\theta)\le U_\lambda(\theta,\psi).
\]
\end{theorem}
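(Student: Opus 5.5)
The plan is to prove the inequality term by term. Since \(\lambda_{\mathrm{vis}},\lambda_{\mathrm{inv}}>0\), it suffices to show
\[
I_\theta(Z;\nuis\mid\condvar)\le J_{\mathrm{cond}}(\theta,\psi)
\qquad\text{and}\qquad
I_\theta(Z;B)\le J_{\mathrm{std}}(\theta),
\]
and then add the two bounds with the positive weights. The constraint \((\theta,\omega,\psi)\in\mathcal F_\varepsilon\) plays no active role in the inequalities themselves; it only restricts the family on which the claim is asserted. I would remark on this explicitly, so the theorem is read as a regularizer-level statement valid uniformly over task-adequate parameters.

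For the visible axis, I would chain two earlier results. First, Proposition~\ref{prop:nuisance} applies the conditional data-processing inequality under the conditional Markov chain \(\nuis\to X\to Z\) given \(\condvar\), giving \(I_\theta(Z;\nuis\mid\condvar)\le I_\theta(Z;X\mid\condvar)\). Second, the exact decomposition of the conditional KL writes \(J_{\mathrm{cond}}(\theta,\psi)\) as \(I_\theta(Z;X\mid\condvar)\) plus a condition-wise prior-mismatch term:
\[
J_{\mathrm{cond}}
=\mathbb E_{c}\,\mathbb E_{x\mid c}\,\mathrm{KL}\big(q_\theta(z\mid x)\,\|\,p_\psi(z\mid c)\big)
= I_\theta(Z;X\mid\condvar)+\mathbb E_c\,\mathrm{KL}\big(q_\theta(z\mid c)\,\|\,p_\psi(z\mid c)\big).
\]
The mismatch term is nonnegative, so \(J_{\mathrm{cond}}\ge I_\theta(Z;X\mid\condvar)\) for every \(\psi\). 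For the invisible axis, the standard variational argument gives \(J_{\mathrm{std}}(\theta)=I_\theta(Z;X)+\mathrm{KL}(\aggq\,\|\,p(z))\ge I_\theta(Z;X)\). In the regime of Proposition~\ref{prop:invisible_condition_level_encoding}, \(B\) is a function of \(X\) (or at least \(B\to X\to Z\) holds), so data processing yields \(I_\theta(Z;B)\le I_\theta(Z;X)\).

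The main obstacle is making sure the structural assumptions line up. The nuisance Markov condition and the invisibility regime for \(B\) must both hold for the same \(\theta\), with \(B\) measurable with respect to \(X\), so that both data-processing steps are legitimate simultaneously. If \(B\) is only defined through \(\condvar\), for example \(B=h(\condvar)\), I would instead route the bound through \(I(Z;B)\le I(Z;\condvar)\le I(Z;X)\), using Theorem~\ref{thm:decomp}. A second check is that the KL terms are finite and that the aggregated posterior is well defined, so the two decompositions hold as equalities rather than only formally. With these in place, summing the weighted bounds gives \(H_\lambda(\theta)\le U_\lambda(\theta,\psi)\) on all of \(\mathcal F_\varepsilon\).
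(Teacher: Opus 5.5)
Your proposal is correct and follows the paper's proof step for step. You bound $I_\theta(Z;\nuis\mid\condvar)\le I_\theta(Z;X\mid\condvar)\le J_{\mathrm{cond}}$ via Proposition~\ref{prop:nuisance} and Theorem~\ref{thm:condkl}, bound $I_\theta(Z;B)\le I_\theta(Z;X)\le J_{\mathrm{std}}$ via the encoder-induced chain $B\to X\to Z$ and Lemma~\ref{lem:stdkl}, and add the two bounds with the positive weights; your remark that $R_{\mathrm{task}}\le\varepsilon$ is never used also matches the paper's argument.
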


The theorem states a pointwise upper bound: \(U_\lambda\) jointly controls the two residual-information axes identified in Sec.~\ref{subsec:theory_single_axis}.
This is a regularizer-level statement, not a claim that every optimized dual model universally improves task accuracy or realized nuisance information.
A complete proof is given in Appendix~\ref{app:final-logic} (Theorem~\ref{thm:pareto_upper_bound}).

\begin{corollary}[Family inclusion]
\label{cor:family_inclusion}
Let \(\Phi_{\mathrm{dual}}^\star(\varepsilon;\lambda)\),
\(\Phi_{\mathrm{std}}^\star(\varepsilon;\lambda)\), and
\(\Phi_{\mathrm{cond}}^\star(\varepsilon;\lambda)\) denote the task-feasible optimal values of \(U_\lambda\) over the supported dual, std-only, and cond-only families, respectively.
Then
\[
\Phi_{\mathrm{dual}}^\star(\varepsilon;\lambda)
\le
\min\{\Phi_{\mathrm{std}}^\star(\varepsilon;\lambda),
\Phi_{\mathrm{cond}}^\star(\varepsilon;\lambda)\}.
\]
\end{corollary}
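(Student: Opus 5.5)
The statement immediately above is Corollary~\ref{cor:family_inclusion}. I would derive it from the definitions of the three families and not from any new information-theoretic estimate; Theorem~\ref{thm:pareto_main} is needed only to interpret what is being minimized. The plan is to show that the dual family, restricted to the task-feasible set \(\mathcal F_\varepsilon\), contains both the std-only and the cond-only families, each embedded so that the objective \(U_\lambda\) is unchanged. An infimum over a larger set is no larger, which gives the claim.

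\emph{Step 1: fix the families.} The dual family consists of all \((\theta,\omega,\psi)\in\mathcal F_\varepsilon\) with encoder, head, and simplex conditional-prior parameters free. The std-only family consists of the configurations reachable by training with \(J_{\mathrm{std}}\) alone. The cond-only family consists of those reachable with \(J_{\mathrm{cond}}\) alone. In all three cases the value assigned to a configuration is \(U_\lambda(\theta,\psi)=\lambda_{\mathrm{vis}}J_{\mathrm{cond}}(\theta,\psi)+\lambda_{\mathrm{inv}}J_{\mathrm{std}}(\theta)\), evaluated on the same \(\mathcal F_\varepsilon\). The families differ only in which parameter tuples they support, not in the objective.

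\emph{Step 2: show the inclusions.} Both \(J_{\mathrm{std}}\) and \(J_{\mathrm{cond}}\) are functions of \((\theta,\psi)\) alone. The task constraint \(R_{\mathrm{task}}(\theta,\omega)\le\varepsilon\) does not depend on which regularizer is active. Hence any feasible std-only tuple \((\theta,\omega)\), paired with any admissible prior parameter \(\psi\) (for instance the default simplex initialization), is a feasible dual tuple. Similarly, any feasible cond-only tuple \((\theta,\omega,\psi)\) is itself a feasible dual tuple. On each embedded point \(U_\lambda\) takes the same value, because \(U_\lambda\) is defined identically across families. Consequently
\[
\Phi_{\mathrm{dual}}^\star\le\Phi_{\mathrm{std}}^\star
\quad\text{and}\quad
\Phi_{\mathrm{dual}}^\star\le\Phi_{\mathrm{cond}}^\star,
\]
and taking the minimum of the two right-hand sides gives the corollary. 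If a family is empty, its optimal value is \(+\infty\) and the inequality holds trivially.

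\emph{Main obstacle.} The delicate point is the precise meaning of ``supported'' std-only and cond-only families. If the std-only family does not carry a prior parameter \(\psi\), I must check that \(U_\lambda\) is still well defined on it. The natural reading is that \(\psi\) ranges over the same admissible set, or that \(\Phi_{\mathrm{std}}^\star\) already includes an infimum over \(\psi\). I would fix this convention explicitly. I would also note that Theorem~\ref{thm:pareto_main} then shows each \(\Phi^\star\) upper-bounds the corresponding infimum of \(H_\lambda\), which is what makes the inclusion meaningful as two-axis control.
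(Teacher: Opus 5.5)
Your skeleton (embed the single-term families in the dual family, then use monotonicity of the infimum) is the paper's, but you prove the embedding for the wrong set. In the paper all three supported families are unions of argmin sets over $\mathcal F_\varepsilon$. The dual family is
\[
\mathcal A^{\mathrm{dual}}_\varepsilon=\bigcup_{\substack{\beta_{\mathrm{std}},\beta_{\mathrm{cond}}\ge 0\\ \beta_{\mathrm{std}}+\beta_{\mathrm{cond}}>0}}\operatorname*{arg\,min}_{(\theta,\omega,\psi)\in\mathcal F_\varepsilon}\bigl[\beta_{\mathrm{std}}J_{\mathrm{std}}(\theta)+\beta_{\mathrm{cond}}J_{\mathrm{cond}}(\theta,\psi)\bigr].
\]
$\mathcal A^{\mathrm{std}}_\varepsilon$ and $\mathcal A^{\mathrm{cond}}_\varepsilon$ are the analogous unions for $\beta_{\mathrm{std}}J_{\mathrm{std}}$ with $\beta_{\mathrm{std}}>0$ and for $\beta_{\mathrm{cond}}J_{\mathrm{cond}}$ with $\beta_{\mathrm{cond}}>0$.

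You instead take the dual family to be all of $\mathcal F_\varepsilon$, and your Step~2 only shows that std-only and cond-only tuples are task-feasible. That makes the inequality trivial under your definition, but it does not show membership in $\mathcal A^{\mathrm{dual}}_\varepsilon$. A minimizer of $J_{\mathrm{std}}$ over $\mathcal F_\varepsilon$ is dual-supported only if it minimizes some admissible scalarization $\beta_{\mathrm{std}}J_{\mathrm{std}}+\beta_{\mathrm{cond}}J_{\mathrm{cond}}$. The facts you invoke (the task constraint ignores the regularizer, and $U_\lambda$ is the same function on every family) say nothing about that.

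The missing idea is the one the paper uses. The dual weight range admits the boundary rays $\beta_{\mathrm{cond}}=0<\beta_{\mathrm{std}}$ and $\beta_{\mathrm{std}}=0<\beta_{\mathrm{cond}}$. So each single-term argmin set is literally one of the sets in the union defining $\mathcal A^{\mathrm{dual}}_\varepsilon$. Hence $\mathcal A^{\mathrm{std}}_\varepsilon\subseteq\mathcal A^{\mathrm{dual}}_\varepsilon$ and $\mathcal A^{\mathrm{cond}}_\varepsilon\subseteq\mathcal A^{\mathrm{dual}}_\varepsilon$, and your infimum step finishes the proof with no attainment assumption.

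This also settles your $\psi$ concern. Every argmin is over the full triple $(\theta,\omega,\psi)$, and neither $J_{\mathrm{std}}$ nor $R_{\mathrm{task}}$ involves $\psi$. So the std-only family pairs each optimal feasible $(\theta,\omega)$ with every $\psi$, and $\Phi^\star_{\mathrm{std}}$ already includes the infimum over $\psi$.

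If you prefer to keep your identification $\Phi^\star_{\mathrm{dual}}=\inf_{\mathcal F_\varepsilon}U_\lambda$, you must justify it. Note that $U_\lambda$ is itself the dual scalarization with $(\beta_{\mathrm{std}},\beta_{\mathrm{cond}})=(\lambda_{\mathrm{inv}},\lambda_{\mathrm{vis}})$. If that argmin is nonempty, then $\Phi^\star_{\mathrm{dual}}=\min_{\mathcal F_\varepsilon}U_\lambda$, which lies below the infimum of $U_\lambda$ over any subset of $\mathcal F_\varepsilon$. That variant avoids the boundary weights but needs attainment at this particular weight; the paper's inclusion argument needs only the definitions.
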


This corollary records the family-level consequence that std-only and cond-only objectives are degenerate subfamilies of the dual family.
It is therefore a non-strict objective-family dominance statement under the same task-feasibility constraint.

\paragraph{Tightness.}
The pointwise bound becomes tight only when both data-processing gaps and variational prior-mismatch gaps vanish.
For the visible axis, this requires \(I(Z;X\mid\nuis,\condvar)=0\) and exact conditional prior calibration \(r_\psi(z\mid\condval)=\aggq(z\mid\condval)\) for \(p(\condval)\)-a.e. \(\condval\).
For the conditional-invisible condition-level axis controlled through global capacity, it requires \(I(Z;X\mid B)=0\) and exact global prior calibration \(r_0(z)=\aggq(z)\).
Thus, Theorem~\ref{thm:pareto_main} should be read as a controllable upper-bound result with strong equality conditions.

\section{Engineering instantiation}
\label{sec:engineering_instantiation}

\subsection{From mutual-information control to KL surrogates}
\label{subsec:objective}
\label{subsec:interpretation}
\label{subsec:kl_surrogate}

The theory in Sec.~\ref{sec:theory_control} identifies the information quantities that should be controlled, but the mutual informations \(I(Z;X)\) and \(I(Z;X\mid \condvar)\) are not directly optimized in our neural training pipeline.
We therefore instantiate the information-control view through variational KL upper bounds.

Define the global KL control
\(J_{\mathrm{std}}(\theta)=
\mathbb{E}_{p(x)}
\mathrm{KL}\!\bigl(q_\theta(z\mid x)\,\|\,r_0(z)\bigr)\),
where \(r_0(z)\) is an unconditional prior, and let
\(\aggq(z)=\int q_\theta(z\mid x)p(x)\,dx\)
be the aggregated posterior. Then
\begin{equation}
\label{eq:std_decomp}
J_{\mathrm{std}}(\theta)
=
I_\theta(Z;X)
+
\mathrm{KL}\!\bigl(\aggq(z)\,\|\,r_0(z)\bigr).
\end{equation}
Thus \(J_{\mathrm{std}}\) upper-bounds the global information budget \(I_\theta(Z;X)\); see Appendix~\ref{app:stdkl} (Lemma~\ref{lem:stdkl}).
The bound is tight under exact global prior calibration \(r_0(z)=\aggq(z)\), as stated in Appendix~\ref{app:stdkl} (Proposition~\ref{prop:tight_calibration}).

Similarly, define the conditional KL control
\(J_{\mathrm{cond}}(\theta,\psi)=
\mathbb{E}_{p(x,\condval)}
\mathrm{KL}\!\bigl(q_\theta(z\mid x)\,\|\,r_\psi(z\mid \condval)\bigr)\),
where \(r_\psi(z\mid \condval)\) is a condition-dependent prior, and let
\(\aggq(z\mid \condval)=
\int q_\theta(z\mid x)p(x\mid \condval)\,dx\)
be the condition-wise aggregated posterior. Then the conditional KL admits the exact decomposition
\begin{equation}
\label{eq:conddecomp}
J_{\mathrm{cond}}(\theta,\psi)
=
I_\theta(Z;X\mid \condvar)
+
\mathbb{E}_{p(\condval)}
\mathrm{KL}\!\bigl(\aggq(z\mid \condval)\,\|\,r_\psi(z\mid \condval)\bigr).
\end{equation}
A formal derivation is provided in Appendix~\ref{app:thm3} (Theorem~\ref{thm:condkl}).
Consequently,
\begin{equation}
\label{eq:cond_upper}
I_\theta(Z;X\mid \condvar)
\le
J_{\mathrm{cond}}(\theta,\psi).
\end{equation}
Equation~\eqref{eq:conddecomp} is the key closure property used by the engineering instantiation: the conditional KL is not merely a heuristic penalty, but an exact decomposition into within-condition information plus a condition-wise prior-mismatch term.
Under exact conditional prior calibration
\(r_\psi(z\mid \condval)=\aggq(z\mid \condval)\) for \(p(\condval)\)-a.e. \(\condval\),
the mismatch term vanishes and \(J_{\mathrm{cond}}(\theta,\psi)=I_\theta(Z;X\mid \condvar)\).

We therefore optimize the following dual-bottleneck objective:
\begin{equation}
\label{eq:finalobjective}
\mathcal{L}(\theta,\omega,\psi)
=
\mathbb{E}_{p(x,y)q_\theta(z\mid x)}
\bigl[\ell_{\mathrm{task}}(g_\omega(z),y)\bigr]
+
\beta_{\mathrm{std}}J_{\mathrm{std}}(\theta)
+
\beta_{\mathrm{cond}}J_{\mathrm{cond}}(\theta,\psi).
\end{equation}
Equivalently, expanding the two KL controls gives
\[
\mathcal{L}
=
\mathcal{L}_{\mathrm{task}}
+
\beta_{\mathrm{std}}
\mathbb{E}_{p(x)}
\mathrm{KL}\!\bigl(q_\theta(z\mid x)\,\|\,r_0(z)\bigr)
+
\beta_{\mathrm{cond}}
\mathbb{E}_{p(x,\condval)}
\mathrm{KL}\!\bigl(q_\theta(z\mid x)\,\|\,r_\psi(z\mid \condval)\bigr).
\]
Here \(\condval=\partmap(y)\) indexes the conditional prior, not the posterior encoder. The standard KL implements global capacity control, while the conditional KL implements condition-aware residual control through Eq.~\eqref{eq:conddecomp}; Fig.~\ref{fig:framework} summarizes the instantiation.

\begin{figure*}[t!]
    \centering
    \includegraphics[width=1.0\textwidth]{figs/implement/NIPSpaper.drawio.pdf}
    \vspace{-24pt}
    \caption{\textbf{Engineering instantiation of the dual-bottleneck objective.}
The encoder produces a stochastic latent representation \(Z\), the task head optimizes the supervised loss, and two KL surrogates implement global and condition-aware information control.}
    \label{fig:framework}
\end{figure*}

\subsection{Structured conditional prior}
\label{subsec:simplex_prior}

For the default simplex-constrained instantiation, we use a Gaussian conditional prior
\[
r_\psi(z\mid \condval)
=
\mathcal{N}(\mu_{\condval},\tau^2 \Id_d),
\qquad
\mu_{\condval}
=
\rho \embedmat s_{\condval}+\priorbias,
\]
where \(s_{\condval}\) is the \(\condval\)-th vertex of a regular simplex, \(\rho=\operatorname{softplus}(\rho_{\mathrm{raw}})>0\), \(\rho_{\mathrm{raw}}\), \(R_{\mathrm{raw}}\), and \(\priorbias\) are learned, and \(\tau=1\) is fixed by default.

We take the simplex coordinates to satisfy
\(s_{\condval}\in\mathbb{R}^{\ncond-1}\), \(\condval\in\{1,\ldots,\ncond\}\), and assume \(d\ge \ncond-1\).
Accordingly, \(\embedmat=\operatorname{qf}(R_{\mathrm{raw}})\) satisfies \(\embedmat^\top\embedmat=\Id_{\ncond-1}\).
Thus \(\embedmat\) acts as an isometric embedding of the simplex coordinates into the latent space.

In particular, the pairwise center distance is constant:
\begin{equation}
\label{eq:simplex_distance}
\|\mu_{\condval}-\mu_{\condval'}\|^2
=
2\rho^2\frac{\ncond}{\ncond-1},
\qquad
\condval\neq \condval'.
\end{equation}

This prior does not introduce an additional theoretical assumption for the decomposition in Sec.~\ref{sec:theory_control}.
Instead, it provides a symmetric and controllable parameterization of \(r_\psi(z\mid \condval)\), avoiding direct dependence on empirical class centers at initialization.
The induced symmetric separability property is proved in Appendix~\ref{app:simplex} (Proposition~\ref{prop:simplex}).

\subsection{Shared variance and operational training}
\label{subsec:tau}
\label{subsec:scope}
\label{subsec:operational_instantiation}

In practice, we use a shared isotropic variance \(\tau^2\Id_d\) across conditions.
This choice is motivated by control and interpretability rather than by a universal noise model.
Under a Gaussian posterior family \(q_\theta(z\mid x)=\mathcal{N}(m_x,\Sigma_x)\), the average conditional KL given current centers takes the form
\begin{equation}
\label{eq:tau_objective}
J(\tau^2)
=
\frac12
\left(
\frac{A}{\tau^2}
+
d\log \tau^2
\right)
+
\mathrm{const},
\end{equation}
where
\[
A
=
\mathbb{E}_{p(x,\condval)}
\bigl[
\mathrm{tr}(\Sigma_x)+\|m_x-\mu_{\condval}\|^2
\bigr].
\]
In the nondegenerate case \(A>0\), its unique minimizer is
\[
\tau_\star^2=\frac{A}{d}.
\]
Thus, under a shared isotropic parameterization, learning \(\tau\) mainly adjusts a single global dispersion scale.
Fixing \(\tau=1\) provides a controlled way to decouple condition-center geometry from variance adaptation during regularization.
The corrected minimization argument is given in Appendix~\ref{app:tau} (Proposition~\ref{prop:tau}).

Operationally, the method follows a standard variational pipeline with minimal changes.
The backbone \(\featenc\), task head \(g_\omega\), and supervised loss \(\ell_{\mathrm{task}}\) are unchanged.
The encoder parameterizes a stochastic posterior \(q_\theta(z\mid x)\), and the latent representation \(Z\) is used by the task head as shown in Fig.~\ref{fig:framework}.
Under Gaussian posterior and prior families, both KL terms admit closed-form evaluation, so the additional computational overhead is negligible relative to standard variational objectives.

The two weights \(\beta_{\mathrm{std}}\) and \(\beta_{\mathrm{cond}}\) tune the two controls independently.
Together with the simplex-structured conditional prior in Sec.~\ref{subsec:simplex_prior} and the shared-variance setting above, Eq.~\eqref{eq:finalobjective} gives a symmetric and operationally interpretable regularization scheme that can be integrated into existing classification or segmentation pipelines without architectural changes.
The loss-agnostic representational interpretation of this regularizer is summarized in Appendix~\ref{app:universality} (Proposition~\ref{prop:loss_agnostic}).

\section{Experiments}
\label{sec:experiments}

We structure the main experiments as an objective-level ablation motivated by the two control axes
in our theory. Under the same backbone, data split, optimizer, schedule, and task loss, we compare
\textsc{No Reg}, \textsc{StdKL}, \textsc{CondKL}, and \textsc{DualKL}, isolating global control,
conditional within-condition control, and their combination. We report BAcc/Macro-F1 for
classification and IoU/Dice for segmentation, and evaluate both full-data and low-data regimes.
A separate instantiation ablation studies the conditional-prior geometry and $\tau$ parameterization.

On BloodMNIST~\citep{yang2023medmnistv2,acevedo2020blood}, detailed results are reported in Tab.~\ref{tab:app_bloodmnist_results}. Both single-KL variants already improve over the no-regularization baseline, but \textsc{DualKL} consistently achieves the best BAcc and Macro-F1 across all regimes. The improvement is visible even in the standard regime, where BAcc/Macro-F1 increase from 92.90/93.14 to 94.52/94.72, and becomes substantially larger when supervision is limited: under 1\% training data, BAcc/Macro-F1 increases from 63.49/61.43 to 72.22/69.98, and under 4-shot training, they increase from 45.31/43.19 to 48.72/46.57. This pattern supports our motivation that, when supervision is scarce, models are more vulnerable to unstable residual cues, so representation-level regularization, and especially the additional structured control in \textsc{DualKL}, becomes more beneficial (Fig.~\ref{fig:bloodmnist_lowdata_vis}). Compared with external regularization and geometry baselines, the strongest baseline is Center Loss, while \textsc{DualKL} remains higher by 2.44 BAcc and 1.99 Macro-F1 points in the 1\% regime; see Appendix~\ref{app:exp_results}, Tab.~\ref{tab:app_bloodmnist_external_baselines}. Appendix~\ref{app:cifar100_lowdata} further reports a CIFAR-100~\citep{krizhevsky2009learning} low-data natural-image check, where KL-regularized variants improve over \noreg{} and \dualkl{} gives the best mean scores.

\begin{figure}[!htbp]
    \centering
    \includegraphics[width=0.213\linewidth]{figs/bloodmnist1percent/embedgeo/vib_no_kl.pdf}\hspace{0.004\linewidth}
    \includegraphics[width=0.237\linewidth]{figs/bloodmnist1percent/latentgeo/vib_no_kl_latent_geometry.pdf}\hspace{0.010\linewidth}
    \includegraphics[width=0.213\linewidth]{figs/bloodmnist1percent/embedgeo/vib_std_cond_kl.pdf}\hspace{0.004\linewidth}
    \includegraphics[width=0.237\linewidth]{figs/bloodmnist1percent/latentgeo/vib_std_cond_kl_latent_geometry.pdf}
    \caption{BloodMNIST, 1\% training data. \noreg{} (left two panels) versus \dualkl{} (right two panels), shown in PCA and latent-statistics views. Full four-way comparisons are in Appendix~\ref{app:vis_results}.}
    \label{fig:bloodmnist_lowdata_vis}
\end{figure}

For retinal vessel segmentation~\citep{staal2004ridge,hoover2000locating,fraz2012ensemble}, Tab.~\ref{tab:app_retinal_results} shows a consistent cross-dataset pattern. \textsc{DualKL} obtains the best IoU and Dice on DRIVE, STARE, and CHASEDB1, while both single-KL variants also improve over the unregularized baseline. The consistent gains across KL variants suggest that dense prediction benefits from both forms of information control, with their combination giving the strongest overall setting. On GLAS~\citep{sirinukunwattana2017gland}, Tab.~\ref{tab:app_glas_results} further shows that \textsc{DualKL} improves over the unregularized baseline in both full-data and few-shot regimes, with the clearest gain appearing in the 4-shot setting (IoU 0.5368 to 0.5478; Dice 0.6958 to 0.7049). Representative latent-geometry comparisons on CHASEDB1 are shown in Fig.~\ref{fig:chasedb1_main_vis}, where \textsc{DualKL} yields a more compact and separable structure than the unregularized baseline. Exact experimental protocols are provided in Appendix~\ref{app:experimental_setup}.

\begin{figure}[!t]
    \centering
    \includegraphics[width=0.205\linewidth]{figs/chasedb1/embedgeo/chasedb1_core_chasedb1_full_v1_none.pdf}\hspace{0.003\linewidth}
    \includegraphics[width=0.225\linewidth]{figs/chasedb1/latentgeo/chasedb1_core_chasedb1_full_v1_none.pdf}\hspace{0.008\linewidth}
    \includegraphics[width=0.205\linewidth]{figs/chasedb1/embedgeo/chasedb1_core_chasedb1_full_v1_stdkl_condkl.pdf}\hspace{0.003\linewidth}
    \includegraphics[width=0.225\linewidth]{figs/chasedb1/latentgeo/chasedb1_core_chasedb1_full_v1_stdkl_condkl.pdf}
    \caption{CHASEDB1. \noreg{} versus \dualkl{} in PCA and latent-statistics views; full four-way comparisons are in Appendix~\ref{app:vis_results}.}
    \label{fig:chasedb1_main_vis}
\end{figure}

Additional BloodMNIST ablations further support the default practical instantiation of our method. In the 1\% training-data regime, Tab.~\ref{tab:bloodmnist_ablation_1pct} shows that the default configuration, \textit{\dualkl{} + simplex + shared fixed $\tau$}, achieves the best mean BAcc and Macro-F1 among the tested variants. The same qualitative pattern also appears in the 4-shot and full-training settings (Appendix~\ref{app:bloodmnist_prior_tau_results}, Tabs.~\ref{tab:4shot_tuned_ablation_meanstd} and \ref{tab:full_ablation_subset_meanstd}), where the default configuration reaches 48.66/46.59 and 94.74/94.86 in BAcc/Macro-F1, respectively. The margin is clearest in the low-data settings and remains positive in the full-training regime, supporting the simplex prior with shared fixed $\tau$ as a stable and interpretable default among the tested variants.

\begin{table}[h]
\centering
\caption{BloodMNIST 1\% training-data ablation results reported as mean $\pm$ standard deviation.}
\label{tab:bloodmnist_ablation_1pct}
\small
\setlength{\tabcolsep}{4.5pt}
\begin{tabular}{lcc}
\toprule
Configuration & BAcc & Macro-F1 \\
\midrule
\dualkl{} + simplex + shared fixed $\tau$ & \textbf{72.11 $\pm$ 1.42} & \textbf{69.97 $\pm$ 1.71} \\
\dualkl{} + random + shared fixed $\tau$ & 71.42 $\pm$ 1.53 & 69.11 $\pm$ 1.37 \\
\dualkl{} + empirical + shared fixed $\tau$ & 71.63 $\pm$ 1.72 & 69.94 $\pm$ 2.08 \\
\dualkl{} + simplex + classwise learned $\tau$ & 71.88 $\pm$ 2.62 & 69.65 $\pm$ 2.54 \\
\bottomrule
\end{tabular}
\end{table}

As an auxiliary check, we vary $\pi$ in segmentation using local mask-derived
conditions. Appendix Tab.~\ref{tab:partition_choice_ablation} shows that the average
IoU/Dice over CHASEDB1, DRIVE, GLAS, and STARE changes from 62.40/76.24 under
direct label conditioning ($K=2$) to 62.66/76.44 under foreground-count
conditioning ($K=10$) and 62.75/76.52 under local-pattern conditioning
($K=512$). This indicates that \(\condvar=\partmap(Y)\) is not merely a notational
generalization; full details and sensitivity analyses are in
Appendices~\ref{app:partition_choice}--\ref{app:beta_sensitivity}.

\section{Conclusion, limitation, and future work}
\label{sec:futurework}

We introduced a structured information-control view of bottleneck regularization under label-induced partitions. 
Unlike standard IB/VIB objectives that constrain a single global information quantity, our formulation separates global capacity control from within-condition residual control. 
The main theorem shows that this dual regularizer simultaneously upper-bounds two structurally distinct residual-information axes\,---\,visible within-condition nuisance dependence and conditional-invisible condition-level encoding\,---\,neither of which is controlled by either single-term bottleneck alone. 
Operationally, we instantiate this view through KL upper bounds, a simplex-structured conditional prior, and a shared-variance parameterization, yielding a regularizer that can be integrated into existing supervised learning pipelines. 
Across classification and segmentation benchmarks, the method shows consistent improvements over the unregularized baseline, with the strongest gains in low-data classification and stable improvements across dense prediction benchmarks.

The proposed method is best viewed as a representation-level regularizer for settings where label-relevant structure coexists with residual within-condition variation, consistent with prior analyses showing that the benefit of information constraints depends on the data-generating structure~\citep{federici2021information,kawaguchi2023}. Its empirical benefit can depend on the model family, optimization, task loss, and whether within-condition variation is residual rather than task-relevant; gains may be smaller when classes are already highly separable or intra-class variation itself carries task semantics. Beyond medical imaging, related settings such as PCB defect inspection may also benefit when similar residual variation is present. Future work may explore richer conditional-prior families, more flexible geometries, and heterogeneous or condition-dependent residual structures.

\bibliographystyle{unsrt}
\bibliography{references}

%%%%%%%%%%%%%%%%%%%%%%%%%%%%%%%%%%%%%%%%%%%%%%%%%%%%%%%%%%%%
\newpage
\appendix{}

\section{Detailed proofs for the label-induced conditional bottleneck}
\label{app:proofs}

This appendix provides detailed derivations for the mathematical claims used in the main method.
Our goal is not only to state the final objective, but also to make explicit:
(i) which statements are exact identities,
(ii) which statements are upper bounds,
(iii) which statements are existence/insufficiency results,
and (iv) which parts specify practical geometric or initialization choices used to instantiate the regularizer.

\subsection{Notation and proof roadmap}
\label{app:notation}

\paragraph{Standing regularity assumptions.}
Throughout this note, we work with stochastic encoder families \(q_\theta(z\mid x)\).
We assume that the KL divergences and mutual informations appearing below are well-defined and finite,
and that the relevant aggregated posteriors \(\aggq(z)\) and \(\aggq(z\mid \condval)\) exist.
In the continuous setting, this excludes degenerate deterministic limits of the encoder
for the purpose of the stated identities and bounds.

Let \(X \in \mathcal{X}\) be the input, \(Y \in \mathcal{Y}\) the supervised target, and
\(Z \in \mathbb{R}^d\) the stochastic latent variable generated by the encoder \(q_\theta(z\mid x)\).

We use a label-induced finite condition random variable
\[
\condvar = \partmap(Y),
\]
where
\[
\partmap : \mathcal{Y}\to\condset,
\qquad
\condset=\{1,\dots,\ncond\},
\]
is a finite partition map. In the clean supervised interpretation used in
Sec.~\ref{subsec:assumptions_scope}, one may write \(Y=\cleanrule(X)\), and hence
\(\condvar=\partmap(\cleanrule(X))\). This deterministic clean-label case is a
concrete instance of our setting, but the Markov-based decomposition in
Theorem~\ref{thm:decomp} only requires the encoder-induced conditional
independence stated below.

\paragraph{Encoder-induced Markov convention.}
All information quantities involving \(Z\) are evaluated under the joint law induced by the data distribution and the stochastic encoder. For any upstream variable \(W\) that is not directly provided as an encoder input,
\[
p_\theta(w,x,z)=p(w,x)q_\theta(z\mid x),
\]
and hence
\[
p_\theta(z\mid x,w)=q_\theta(z\mid x)=p_\theta(z\mid x).
\]
Thus \(W\to X\to Z\) forms a Markov chain. We use this convention with \(W=\condvar\) in Theorem~\ref{thm:decomp}, with \(W=\nuis\) in the conditional nuisance interpretation of Proposition~\ref{prop:nuisance}, and with \(W=B\) in the proof of Theorem~\ref{thm:pareto_upper_bound}.

\paragraph{Proof roadmap.}
The proof chain is organized into four layers according to their role in the paper.

\begin{enumerate}
    \item \textbf{Core justification.}
    These results establish the main theoretical backbone of the method:
    \begin{itemize}
        \item Theorem~\ref{thm:decomp}: exact decomposition of \(I(Z;X)\) under a label-induced condition.
        \item Theorems~\ref{thm:insufficient} and~\ref{thm:productspace}: insufficiency of a global-only bottleneck for the stated design goal.
        \item Theorem~\ref{thm:condkl}: exact decomposition of the conditional KL term.
        \item Theorems~\ref{thm:conditionalinsufficient} and~\ref{thm:conditionalinsufficient_product}: insufficiency of a purely conditional bottleneck for controlling total global capacity.
    \end{itemize}

    \item \textbf{Supporting calibration results.}
    These results clarify how the two KL terms relate to their corresponding information quantities under variational parameterization:
    \begin{itemize}
        \item Lemma~\ref{lem:stdkl}: exact decomposition of the standard KL term.
        \item Proposition~\ref{prop:tight_calibration}: tightness statements under exact prior calibration.
    \end{itemize}

    \item \textbf{Optional structural interpretations.}
    These results are not required for the main dual-bottleneck justification, but provide additional structural readings of the framework:
    \begin{itemize}
        \item Proposition~\ref{prop:nuisance}: a nuisance-suppression interpretation under an additional conditional Markov assumption.
        \item Proposition~\ref{prop:invisible_condition_level_encoding}: a structural regime in which condition-level encoding is invisible to the conditional term.
    \end{itemize}

    \item \textbf{Instantiation results.}
    These results concern the specific parameterization used to operationalize the framework:
    \begin{itemize}
        \item Proposition~\ref{prop:simplex}: the symmetric separability induced by the simplex prior.
        \item Proposition~\ref{prop:tau}: the corrected minimization argument for the shared variance parameter \(\tau\).
    \end{itemize}
\end{enumerate}

\subsection{Exact decomposition under a label-induced condition}
\label{app:thm1}

\begin{theorem}[Exact decomposition under encoder-induced Markovity]
\label{thm:decomp}
Suppose that the representation is generated from the input through the stochastic
encoder \(q_\theta(z\mid x)\), so that \(\condvar\to X\to Z\) forms a Markov chain
as in Appendix~\ref{app:notation}. Then
\[
I(Z;X)=I(Z;\condvar)+I(Z;X\mid \condvar).
\]
\end{theorem}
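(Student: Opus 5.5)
The plan is to apply the chain rule for mutual information to the pair \((X,\condvar)\) in two different orders, and then use the encoder-induced Markov convention of Appendix~\ref{app:notation} to remove one term. Concretely, I will expand \(I(Z;X,\condvar)\) both ways:
\[
I(Z;X,\condvar)=I(Z;X)+I(Z;\condvar\mid X)
\qquad\text{and}\qquad
I(Z;X,\condvar)=I(Z;\condvar)+I(Z;X\mid \condvar).
\]
Equating the two right-hand sides reduces the theorem to showing that \(I(Z;\condvar\mid X)=0\).

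For this vanishing step I use the convention \(p_\theta(c,x,z)=p(c,x)\,q_\theta(z\mid x)\) with \(W=\condvar\). It gives \(p_\theta(z\mid x,c)=p_\theta(z\mid x)\), so
\[
I(Z;\condvar\mid X)=\mathbb{E}_{p_\theta(x,c)}\,\mathrm{KL}\bigl(p_\theta(z\mid x,c)\,\|\,p_\theta(z\mid x)\bigr)=0.
\]
Substituting this into the equated expansions yields \(I(Z;X)=I(Z;\condvar)+I(Z;X\mid\condvar)\). In the deterministic clean-label case \(\condvar=\partmap(\cleanrule(X))\) there is a shortcut: \(\condvar\) is a function of \(X\), so \(I(Z;\condvar\mid X)=0\) trivially and \(I(Z;X,\condvar)=I(Z;X)\). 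The Markov argument above covers the general case, so the shortcut is not needed.

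The main obstacle is measure-theoretic rather than algebraic. \(X\) may be continuous and \(Z\in\mathbb{R}^d\) is continuous, so the chain rule must be justified for mixed discrete/continuous variables. I would handle this by writing each mutual information as an expected log density ratio with respect to the appropriate product measures. Because \(\condvar\) is finite, conditioning on it only introduces finite sums, and \(p_\theta(z\mid x)=q_\theta(z\mid x)\) exists as a density by assumption. The chain rule then follows from linearity of expectation applied to
\[
\log\frac{p(z\mid x,c)}{p(z)}=\log\frac{p(z\mid c)}{p(z)}+\log\frac{p(z\mid x,c)}{p(z\mid c)},
\]
together with the analogous identity in the other order. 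The standing finiteness assumptions on all KL and MI terms guarantee that no \(\infty-\infty\) ambiguity arises when the expectations are split and the common term \(I(Z;X,\condvar)\) is cancelled.
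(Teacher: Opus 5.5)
Your proposal is correct and follows essentially the same route as the paper's proof: both expand $I(Z;X,\condvar)$ by the chain rule in two orders, use the encoder-induced Markov chain $\condvar\to X\to Z$ to get $I(Z;\condvar\mid X)=0$, and note the deterministic clean-label case only as a special instance. Your measure-theoretic remarks on the mixed discrete/continuous chain rule are a harmless refinement that the paper leaves implicit under its standing finiteness assumptions.
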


\begin{proof}
Start from the chain rule for mutual information:
\[
I(Z;X,\condvar)=I(Z;\condvar)+I(Z;X\mid \condvar).
\]
On the other hand, the chain rule also gives
\[
I(Z;X,\condvar)=I(Z;X)+I(Z;\condvar\mid X).
\]
By the encoder-induced Markov chain \(\condvar\to X\to Z\),
\[
I(Z;\condvar\mid X)=0.
\]
Hence \(I(Z;X,\condvar)=I(Z;X)\). Combining the two identities yields
\[
I(Z;X)=I(Z;\condvar)+I(Z;X\mid \condvar).
\]
This proves the claim. In the clean supervised setting described in
Appendix~\ref{app:notation} and Sec.~\ref{subsec:assumptions_scope},
\(\condvar=\partmap(\cleanrule(X))\) is a sufficient concrete instance of this
Markov condition, but determinism of \(\condvar\) is not required for the decomposition.
\end{proof}

\paragraph{Interpretation.}
The decomposition is exact. It shows that the total information budget \(I(Z;X)\) mixes:
\[
\underbrace{I(Z;C)}_{\text{condition-level / label-induced structure}}
\quad\text{and}\quad
\underbrace{I(Z;X\mid \condvar)}_{\text{within-condition information}}.
\]
Therefore, a single global information budget does not by itself distinguish these two roles.

\subsection{Why a single global bottleneck is insufficient}
\label{app:thm2}

\begin{theorem}
\label{thm:insufficient}
There exist two encoders \(q_1(z\mid x)\) and \(q_2(z\mid x)\) such that
\[
I_{q_1}(Z;X)=I_{q_2}(Z;X),
\qquad
I_{q_1}(Z;X\mid \condvar)\neq I_{q_2}(Z;X\mid \condvar).
\]
Hence, any regularizer depending only on \(I(Z;X)\) cannot, in general, separately control
within-condition information.
\end{theorem}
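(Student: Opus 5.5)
The statement is an existence claim, so I will prove it with an explicit discrete example and then use Theorem~\ref{thm:decomp} to read off the two terms. Take $X$ uniform on $\{1,2,3,4\}$ and a deterministic clean-label rule with $\condvar=\partmap(\cleanrule(X))$: $\condvar=1$ for $X\in\{1,2\}$ and $\condvar=2$ for $X\in\{3,4\}$. The Markov chain $\condvar\to X\to Z$ then holds automatically by the encoder-induced convention of Appendix~\ref{app:notation}. I will use two deterministic (or nearly deterministic) encoders, each revealing exactly one bit of $X$, and invoke the discrete case of the standing regularity assumptions, where deterministic maps give finite mutual informations.

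\textbf{Encoder 1.} Let $Z=\condvar$, i.e.\ $q_1$ maps $\{1,2\}\mapsto 0$ and $\{3,4\}\mapsto 1$. Then $I_{q_1}(Z;X)=H(Z)=\log 2$ and $I_{q_1}(Z;\condvar)=\log 2$. Theorem~\ref{thm:decomp} therefore gives $I_{q_1}(Z;X\mid\condvar)=0$.

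\textbf{Encoder 2.} Let $Z$ be the within-group parity, mapping $\{1,3\}\mapsto 0$ and $\{2,4\}\mapsto 1$. Again $I_{q_2}(Z;X)=H(Z)=\log 2$. Now $Z$ is uniform given each value of $\condvar$, so $Z$ is independent of $\condvar$ and $I_{q_2}(Z;\condvar)=0$. Theorem~\ref{thm:decomp} then yields $I_{q_2}(Z;X\mid\condvar)=\log 2\neq 0$.

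This establishes the two displayed relations. The "hence" clause follows directly: any regularizer of the form $\Omega(I(Z;X))$ takes the same value on $q_1$ and $q_2$, yet these encoders differ in within-condition information.

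The only delicate point is compatibility with the paper's continuous latent setting $Z\in\mathbb{R}^d$. There, deterministic encoders are excluded, and embedding finite $Z$ values into $\mathbb{R}^d$ should be made precise. I would handle this with a Gaussian version of the same construction. Place the two bits on the first coordinate, $q_i(z\mid x)=\mathcal N(a\,b_i(x)e_1,\Id_d)$, where $b_1(x)$ is the group indicator and $b_2(x)$ the parity. The joint law of $(b_i(X),\condvar)$ differs between the two encoders, but $b_i(X)$ is a uniform bit in both cases. Hence $I_{q_i}(Z;X)=I(Z;b_i(X))$ equals the same binary-input Gaussian channel capacity value for $i=1,2$, and it is strictly positive. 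For $q_1$, $Z$ depends on $X$ only through $\condvar$, so $I_{q_1}(Z;X\mid\condvar)=0$. For $q_2$, $Z\perp\condvar$, so by Theorem~\ref{thm:decomp} $I_{q_2}(Z;X\mid\condvar)=I_{q_2}(Z;X)>0$. This keeps all quantities finite and the claim intact.
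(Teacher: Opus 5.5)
Your proof is correct and is essentially the paper's construction: $X$ uniform on $\{1,2,3,4\}$ with a group label and a within-group parity is exactly $X=(\condvar,\nuis)$ with independent uniform bits, and your two encoders are the paper's $Z_1=\condvar$ and $Z_2=\nuis$. The only differences are that you obtain $I(Z;X\mid\condvar)$ from Theorem~\ref{thm:decomp} rather than by direct conditional-entropy computation, and your Gaussian-smoothed variant (which correctly keeps everything finite and non-degenerate) anticipates the paper's product-space Theorem~\ref{thm:productspace}.
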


\begin{proof}
Let \(\condvar\sim \mathrm{Bernoulli}(1/2)\) and \(\nuis\sim \mathrm{Bernoulli}(1/2)\) be independent random variables.
Define the input
\[
X=(\condvar,\nuis).
\]
Now define two deterministic encoders:
\[
Z_1=\condvar,\qquad Z_2=\nuis.
\]

Since \(Z_1\) is a deterministic one-bit function of \(X\),
\[
I(Z_1;X)=H(Z_1)-H(Z_1\mid X)=H(Z_1)=1.
\]
Similarly, since \(Z_2\) is also a deterministic one-bit function of \(X\),
\[
I(Z_2;X)=H(Z_2)-H(Z_2\mid X)=H(Z_2)=1.
\]
Therefore,
\[
I(Z_1;X)=I(Z_2;X).
\]

We next compare the conditional mutual informations.

For \(Z_1=\condvar\), once we condition on \(\condvar\), the latent variable \(Z_1\) is fully determined.
Hence
\[
H(Z_1\mid \condvar)=0,
\qquad
H(Z_1\mid X,\condvar)=0,
\]
and therefore
\[
I(Z_1;X\mid \condvar)=H(Z_1\mid \condvar)-H(Z_1\mid X,\condvar)=0.
\]

For \(Z_2=\nuis\), independence of \(\nuis\) and \(\condvar\) gives
\[
H(Z_2\mid \condvar)=H(\nuis\mid \condvar)=H(\nuis)=1.
\]
Moreover, since \(Z_2=\nuis\) is determined by \(X=(\condvar,\nuis)\),
\[
H(Z_2\mid X,\condvar)=0.
\]
Thus
\[
I(Z_2;X\mid \condvar)=H(Z_2\mid \condvar)-H(Z_2\mid X,\condvar)=1.
\]

We have shown
\[
I(Z_1;X)=I(Z_2;X)=1,
\qquad
I(Z_1;X\mid \condvar)=0,\quad I(Z_2;X\mid \condvar)=1.
\]
Hence, two encoders can have the same global information budget but different within-condition information. Therefore, a regularizer depending only on \(I(Z;X)\) cannot, in general, distinguish them.
\end{proof}

\paragraph{Scope.}
The theorem establishes that a condition-aware term is necessary for separately controlling within-condition variation under the stated design goal. It does not imply that the conditional KL term is the unique possible realization of such a term; that point is addressed by Theorem~\ref{thm:condkl}.

\begin{theorem}[Product-space necessity beyond the minimal discrete example]
\label{thm:productspace}
Let \(X=(\condvar,\nuis)\), where \(\condvar\) and \(\nuis\) are independent random variables defined on
arbitrary measurable spaces. Suppose there exist two encoder channels
\(T_C(z\mid \condval)\) and \(T_N(z\mid n)\) such that
\[
I_{T_C}(Z;\condvar)=I_{T_N}(Z;\nuis)=\infoscale \in (0,\infty).
\]
Define two encoders on \(X\) by
\[
q_1(z\mid x)=T_C(z\mid \condval),
\qquad
q_2(z\mid x)=T_N(z\mid n).
\]
Then
\[
I_{q_1}(Z;X)=I_{q_2}(Z;X)=\infoscale,
\]
but
\[
I_{q_1}(Z;X\mid \condvar)=0,
\qquad
I_{q_2}(Z;X\mid \condvar)=\infoscale.
\]
Therefore, any regularizer depending only on \(I(Z;X)\) cannot, in general,
separately control within-condition information in a broad product-space setting
covering both discrete and continuous instantiations.
\end{theorem}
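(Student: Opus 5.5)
We prove the statement by computing the four information quantities directly from the joint laws. Each encoder only ever sees one coordinate of $X=(\condvar,\nuis)$, so $Z$ is generated from that coordinate alone. Under $q_1$ the joint law is $p(\condval)p(n)T_C(z\mid\condval)$, so $\nuis\perp(\condvar,Z)$. Under $q_2$ the joint law is $p(\condval)p(n)T_N(z\mid n)$, so $\condvar\perp(\nuis,Z)$. The independence of $\condvar$ and $\nuis$ is what makes these factorizations hold. We will use the chain rule and Theorem~\ref{thm:decomp}, which applies because both encoders are channels from $X$.

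For $q_1$ we proceed in two steps. First, expand $I(Z;X)=I(Z;\condvar,\nuis)=I(Z;\condvar)+I(Z;\nuis\mid\condvar)$. The second term vanishes, because conditionally on $\condvar$ the law of $Z$ is $T_C(\cdot\mid\condval)$ and does not depend on $n$. Hence $I_{q_1}(Z;X)=I_{T_C}(Z;\condvar)=\infoscale$. Second, conditioning on $\condvar$ fixes the channel output law, so $I_{q_1}(Z;X\mid\condvar)=I(Z;\nuis\mid\condvar)=0$. This is consistent with Theorem~\ref{thm:decomp}: $\infoscale=\infoscale+0$.

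For $q_2$ we argue symmetrically. The expansion $I(Z;X)=I(Z;\nuis)+I(Z;\condvar\mid\nuis)$ has vanishing second term, so $I_{q_2}(Z;X)=\infoscale$. Next, $I(Z;\condvar)=0$, since $Z$ depends only on $\nuis$ and $\nuis\perp\condvar$. Theorem~\ref{thm:decomp} then gives $I_{q_2}(Z;X\mid\condvar)=I(Z;X)-I(Z;\condvar)=\infoscale$. Since $\infoscale>0$, the two conditional informations differ, and the final claim about regularizers depending only on $I(Z;X)$ follows as in Theorem~\ref{thm:insufficient}.

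The main obstacle is measure-theoretic rigor on arbitrary spaces, not the algebra. We need the chain rule for mutual information in its general form, as KL divergences between joint and product measures via regular conditional distributions, with finiteness guaranteed by $\infoscale<\infty$ and the standing regularity assumptions. We also must not subtract possibly infinite quantities; finiteness of $I(Z;X)=\infoscale$ rules this out. The key verification is that $I(Z;\nuis\mid\condvar)=0$ holds in the general setting. To show it, check that the conditional joint law of $(Z,\nuis)$ given $\condvar=\condval$ factors as $T_C(\cdot\mid\condval)\otimes p(n)$, which follows from the product structure of $p(\condval,n)$. The same factorization argument gives $I(Z;\condvar)=0$ under $q_2$.
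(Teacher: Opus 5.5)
Your proof is correct and follows essentially the same route as the paper: the chain rule $I(Z;X)=I(Z;C)+I(Z;X\mid C)$ combined with the factorizations $p(c)p(n)T_C(z\mid c)$ and $p(c)p(n)T_N(z\mid n)$ of the two joint laws. The only difference is minor: for $q_2$ you get $I(Z;X\mid C)=\gamma$ by subtracting $I(Z;C)=0$ from $I(Z;X)$, whereas the paper reads it off directly as $I(Z;N\mid C)=I(Z;N)$ using $C\perp(N,Z)$; your care about finiteness and general-space chain rules goes beyond what the paper states.
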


\begin{proof}
For \(q_1(z\mid x)=T_C(z\mid \condval)\), the encoder depends on \(x\) only through \(\condvar\).
Hence
\[
I_{q_1}(Z;X\mid \condvar)=0.
\]
Using the chain rule,
\[
I_{q_1}(Z;X)=I_{q_1}(Z;\condvar)+I_{q_1}(Z;X\mid \condvar)=I_{T_C}(Z;\condvar)=\infoscale.
\]

For \(q_2(z\mid x)=T_N(z\mid n)\), the encoder depends on \(x\) only through \(\nuis\).
Since \(\condvar\) and \(\nuis\) are independent, \(Z\) is independent of \(\condvar\) under \(q_2\).
Therefore,
\[
I_{q_2}(Z;X)=I_{q_2}(Z;\nuis)=I_{T_N}(Z;\nuis)=\infoscale.
\]
Moreover,
\[
I_{q_2}(Z;X\mid \condvar)=I_{q_2}(Z;\nuis\mid \condvar)=I_{q_2}(Z;\nuis)=\infoscale.
\]
This proves the claim.
\end{proof}

\subsection{Exact decomposition of the conditional KL term}
\label{app:thm3}

Define the condition-wise aggregated posterior
\[
\aggq(z\mid \condval)
=
\int q_\theta(z\mid x)\,p(x\mid \condval)\,dx.
\]

\begin{theorem}
\label{thm:condkl}
For any encoder \(q_\theta(z\mid x)\) and any conditional prior family \(r_\psi(z\mid \condval)\),
\[
\mathbb{E}_{p(x,\condval)}
\mathrm{KL}\!\bigl(q_\theta(z\mid x)\,\|\,r_\psi(z\mid \condval)\bigr)
=
I_\theta(Z;X\mid \condvar)
+
\mathbb{E}_{p(\condval)}
\mathrm{KL}\!\bigl(\aggq(z\mid \condval)\,\|\,r_\psi(z\mid \condval)\bigr).
\]
Consequently,
\[
I_\theta(Z;X\mid \condvar)
\le
\mathbb{E}_{p(x,\condval)}
\mathrm{KL}\!\bigl(q_\theta(z\mid x)\,\|\,r_\psi(z\mid \condval)\bigr).
\]
\end{theorem}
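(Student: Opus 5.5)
The plan is the standard "add and subtract the aggregated density" argument, carried out conditionally on each value \(\condval\). Fix \(\condval\) and write the joint law induced by the encoder as \(p(x,\condval)\,q_\theta(z\mid x)\). By the encoder-induced Markov convention of Appendix~\ref{app:notation}, the conditional law of \(Z\) given \(\condvar=\condval\) is exactly the condition-wise aggregated posterior \(\aggq(z\mid \condval)\). Likewise, the conditional law of \(Z\) given \((X,\condvar)=(x,\condval)\) is \(q_\theta(z\mid x)\).

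\textbf{Step 1: split the log-ratio.} Inside the expectation, insert \(\aggq(z\mid\condval)\) into the log-ratio:
\[
\log\frac{q_\theta(z\mid x)}{r_\psi(z\mid\condval)}
=
\log\frac{q_\theta(z\mid x)}{\aggq(z\mid\condval)}
+
\log\frac{\aggq(z\mid\condval)}{r_\psi(z\mid\condval)} .
\]
Then take expectations under \(p(x,\condval)\,q_\theta(z\mid x)\).

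\textbf{Step 2: identify the first term.} The first term equals
\[
\mathbb{E}_{p(\condval)}\,\mathbb{E}_{p(x\mid\condval)}\,
\mathrm{KL}\!\bigl(q_\theta(z\mid x)\,\|\,\aggq(z\mid\condval)\bigr).
\]
This is the standard expression for the conditional mutual information \(I_\theta(Z;X\mid\condvar)\). It is the expected KL between \(p(z\mid x,\condval)\) and \(p(z\mid\condval)\), and the Markov property lets us replace \(p(z\mid x,\condval)\) by \(q_\theta(z\mid x)\).

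\textbf{Step 3: identify the second term.} The integrand of the second term depends only on \((z,\condval)\). Integrating \(x\) out first, Fubini and the definition of \(\aggq(z\mid\condval)\) give
\[
\int p(x\mid\condval)\,q_\theta(z\mid x)\,dx=\aggq(z\mid\condval).
\]
So the second term becomes \(\mathbb{E}_{p(\condval)}\mathrm{KL}\bigl(\aggq(z\mid\condval)\,\|\,r_\psi(z\mid\condval)\bigr)\). Adding Steps 2 and 3 yields the identity.

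\textbf{Step 4: the inequality.} The bound follows immediately from nonnegativity of the KL divergence in the mismatch term.

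The only delicate point is justifying the split of the expectation into two separately finite pieces, i.e.\ avoiding an \(\infty-\infty\) situation, and the interchange of integrals. I would handle this through the standing regularity assumptions, which take all KLs and mutual informations to be finite and the aggregated posteriors to exist. A further requirement is that \(q_\theta(\cdot\mid x)\ll\aggq(\cdot\mid\condval)\ll r_\psi(\cdot\mid\condval)\) for \(p\)-a.e.\ \((x,\condval)\).

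The first absolute continuity holds automatically on the support of \(p(x\mid\condval)\). The second is implied by finiteness of the left-hand side: if \(\aggq(\cdot\mid\condval)\) charged an \(r_\psi\)-null set, so would \(q_\theta(\cdot\mid x)\) on a set of \(x\) of positive measure, making the left-hand side infinite. With these facts, Tonelli applied to the nonnegative parts suffices.
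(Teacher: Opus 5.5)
Your proof is correct and matches the paper's argument step for step: insert $\log\aggq(z\mid\condval)$ into the log-ratio, identify the first piece as $I_\theta(Z;X\mid\condvar)$ using the encoder-induced Markov structure, and integrate $x$ out of the second piece to obtain the condition-wise prior mismatch, from which the bound follows by nonnegativity of KL. Your additional care about absolute continuity and avoiding $\infty-\infty$ goes beyond the paper, which simply relies on its standing finiteness assumptions for these points.
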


\begin{proof}
Start from the definition:
\[
\mathbb{E}_{p(x,\condval)}
\mathrm{KL}\!\bigl(q_\theta(z\mid x)\,\|\,r_\psi(z\mid \condval)\bigr)
=
\mathbb{E}_{p(x,\condval)}\mathbb{E}_{q_\theta(z\mid x)}
\left[
\log\frac{q_\theta(z\mid x)}{r_\psi(z\mid \condval)}
\right].
\]

Add and subtract \(\log \aggq(z\mid \condval)\):
\[
\log\frac{q_\theta(z\mid x)}{r_\psi(z\mid \condval)}
=
\log\frac{q_\theta(z\mid x)}{\aggq(z\mid \condval)}
+
\log\frac{\aggq(z\mid \condval)}{r_\psi(z\mid \condval)}.
\]

Therefore,
\begin{align*}
&\mathbb{E}_{p(x,\condval)}
\mathrm{KL}\!\bigl(q_\theta(z\mid x)\,\|\,r_\psi(z\mid \condval)\bigr) \\
&=
\mathbb{E}_{p(x,\condval)}\mathbb{E}_{q_\theta(z\mid x)}
\left[
\log\frac{q_\theta(z\mid x)}{\aggq(z\mid \condval)}
\right]
+
\mathbb{E}_{p(x,\condval)}\mathbb{E}_{q_\theta(z\mid x)}
\left[
\log\frac{\aggq(z\mid \condval)}{r_\psi(z\mid \condval)}
\right].
\end{align*}

The first term is exactly
\[
I_\theta(Z;X\mid \condvar).
\]

For the second term, once \(\condval\) is fixed, the integrand no longer depends on \(x\) except through
the averaging induced by \(q_\theta(z\mid x)\). Integrating out \(x\) conditional on \(\condval\) yields
\[
\mathbb{E}_{p(\condval)}\mathbb{E}_{\aggq(z\mid \condval)}
\left[
\log\frac{\aggq(z\mid \condval)}{r_\psi(z\mid \condval)}
\right]
=
\mathbb{E}_{p(\condval)}
\mathrm{KL}\!\bigl(\aggq(z\mid \condval)\,\|\,r_\psi(z\mid \condval)\bigr).
\]

Combining the two terms gives
\[
\mathbb{E}_{p(x,\condval)}
\mathrm{KL}\!\bigl(q_\theta(z\mid x)\,\|\,r_\psi(z\mid \condval)\bigr)
=
I_\theta(Z;X\mid \condvar)
+
\mathbb{E}_{p(\condval)}
\mathrm{KL}\!\bigl(\aggq(z\mid \condval)\,\|\,r_\psi(z\mid \condval)\bigr).
\]

Finally, KL divergence is nonnegative, so
\[
\mathbb{E}_{p(\condval)}
\mathrm{KL}\!\bigl(\aggq(z\mid \condval)\,\|\,r_\psi(z\mid \condval)\bigr)\ge 0,
\]
which implies
\[
I_\theta(Z;X\mid \condvar)
\le
\mathbb{E}_{p(x,\condval)}
\mathrm{KL}\!\bigl(q_\theta(z\mid x)\,\|\,r_\psi(z\mid \condval)\bigr).
\]
\end{proof}

\paragraph{Remark.}
This is the key closure property of the method:
\[
\text{conditional KL}
=
\text{within-condition information}
+
\text{prior mismatch}.
\]
Thus the conditional KL term is not merely heuristic; it is an exact decomposition plus a tractable upper bound on \(I(Z;X\mid \condvar)\).

\subsection{Nuisance interpretation}
\label{app:nuisance}

We now add an optional structural interpretation.
Assume there exist latent variables \(S\) and \(\nuis\) such that
\[
X=T(S,\nuis),\qquad Y=f(S),\qquad \condvar=\partmap(Y).
\]
Here \(S\) denotes task-relevant signal and \(\nuis\) denotes within-condition nuisance variation.

This structural reading uses the same encoder-induced convention stated in
Appendix~\ref{app:notation}. In particular, whenever \(\nuis\) affects the
representation only through the observed input \(X\) after conditioning on
\(\condvar\), the encoder \(q_\theta(z\mid x)\) induces the conditional Markov
chain \(\nuis\to X\to Z\) given \(\condvar\).

\begin{proposition}
\label{prop:nuisance}
Suppose that, conditioned on \(\condvar\), the encoder-induced Markov chain
\[
\nuis \to X \to Z
\]
holds. Then
\[
I(Z;\nuis\mid \condvar)\le I(Z;X\mid \condvar).
\]
Combining this with Theorem~\ref{thm:condkl} in Appendix~\ref{app:thm3} yields
\[
I(Z;\nuis\mid \condvar)
\le
I(Z;X\mid \condvar)
\le
\mathbb{E}_{p(x,\condval)}
\mathrm{KL}\!\bigl(q_\theta(z\mid x)\,\|\,r_\psi(z\mid \condval)\bigr).
\]
\end{proposition}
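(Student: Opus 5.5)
The plan is to prove the first inequality with the conditional data-processing inequality, and then chain it with Theorem~\ref{thm:condkl}. Fix a condition value \(\condval\) with \(p(\condval)>0\) and work under the conditional law \(p_\theta(n,x,z\mid \condval)\). The hypothesis says that under this law \(\nuis\to X\to Z\) is Markov, that is, \(p_\theta(z\mid x,n,\condval)=p_\theta(z\mid x,\condval)\). By the encoder-induced convention of Appendix~\ref{app:notation}, both sides equal \(q_\theta(z\mid x)\). Equivalently, \(I(Z;\nuis\mid X,\condvar)=0\).

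We expand \(I(Z;X,\nuis\mid \condvar)\) by the conditional chain rule in two orders:
\[
I(Z;X,\nuis\mid \condvar)=I(Z;X\mid \condvar)+I(Z;\nuis\mid X,\condvar)
=I(Z;\nuis\mid \condvar)+I(Z;X\mid \nuis,\condvar).
\]
The Markov hypothesis kills \(I(Z;\nuis\mid X,\condvar)\). Nonnegativity of conditional mutual information gives \(I(Z;X\mid \nuis,\condvar)\ge 0\). Together these yield \(I(Z;\nuis\mid \condvar)\le I(Z;X\mid \condvar)\), with equality exactly when \(I(Z;X\mid\nuis,\condvar)=0\); this matches the tightness remark after Theorem~\ref{thm:pareto_main}. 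The argument mirrors the proof of Theorem~\ref{thm:decomp}, except that the vanishing term comes from the assumed conditional Markovity rather than from \(\condvar\) being upstream of \(X\).

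For the second inequality we invoke Theorem~\ref{thm:condkl}. It shows that \(I_\theta(Z;X\mid\condvar)\) equals the conditional KL minus the nonnegative term \(\mathbb{E}_{p(\condval)}\mathrm{KL}(\aggq(z\mid\condval)\,\|\,r_\psi(z\mid\condval))\). Concatenating the two inequalities gives the displayed chain.

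The only delicate step is measure-theoretic. In the continuous setting, the chain rule for conditional mutual information must hold with all terms finite. This is guaranteed by the standing regularity assumptions, since all quantities involved are finite and well-defined. Under those assumptions the chain rule holds via the Kolmogorov--Dobrushin definition of mutual information, so no further obstacle is expected.
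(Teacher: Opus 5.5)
Your proposal is correct and is essentially the paper's argument. The paper invokes the conditional data-processing inequality for each fixed $\condval$ and averages over $p(\condval)$; you instead unpack that inequality through the two-way chain-rule expansion of $I(Z;X,\nuis\mid\condvar)$, which is the standard proof of the DPI, yields the equality condition $I(Z;X\mid\nuis,\condvar)=0$ used in the paper's tightness remark, and, like the paper, takes the second inequality directly from Theorem~\ref{thm:condkl}.
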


\begin{proof}
Fix any value \(\condval\) of \(\condvar\).
Under the conditional Markov chain
\[
\nuis \to X \to Z \quad \text{given } \condvar=\condval,
\]
the conditional data processing inequality gives
\[
I(Z;\nuis\mid \condvar=\condval)\le I(Z;X\mid \condvar=\condval).
\]
Taking expectation over \(\condval\) yields
\[
I(Z;\nuis\mid \condvar)\le I(Z;X\mid \condvar).
\]
The second inequality follows directly from Theorem~\ref{thm:condkl}.
\end{proof}

\paragraph{Scope.}
The proposition does not identify \(I(Z;X\mid \condvar)\) with nuisance information in full generality. Under the stated structural assumption, it yields an upper bound on nuisance dependence through \(I(Z;X\mid \condvar)\).

\subsection{Exact decomposition of the standard KL term}
\label{app:stdkl}

The standard VIB term also admits an exact decomposition, completely parallel to the conditional case.

\begin{lemma}
\label{lem:stdkl}
Let
\[
\aggq(z)=\int q_\theta(z\mid x)p(x)\,dx
\]
be the aggregated posterior. Then for any unconditional prior \(r_0(z)\),
\[
\mathbb{E}_{p(x)}
\mathrm{KL}\!\bigl(q_\theta(z\mid x)\,\|\,r_0(z)\bigr)
=
I_\theta(Z;X)
+
\mathrm{KL}\!\bigl(\aggq(z)\,\|\,r_0(z)\bigr).
\]
Consequently,
\[
I_\theta(Z;X)\le
\mathbb{E}_{p(x)}
\mathrm{KL}\!\bigl(q_\theta(z\mid x)\,\|\,r_0(z)\bigr).
\]
\end{lemma}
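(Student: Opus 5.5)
The plan mirrors the proof of Theorem~\ref{thm:condkl}, with the condition variable removed. Write the expected KL as
\[
\mathbb{E}_{p(x)}\mathbb{E}_{q_\theta(z\mid x)}\!\left[\log\frac{q_\theta(z\mid x)}{r_0(z)}\right].
\]
Then insert the aggregated posterior by adding and subtracting \(\log\aggq(z)\). This splits the log-ratio into
\[
\log\frac{q_\theta(z\mid x)}{\aggq(z)}+\log\frac{\aggq(z)}{r_0(z)}.
\]
By the standing regularity assumptions of Appendix~\ref{app:notation}, all quantities are finite. Linearity of expectation therefore splits the left-hand side into two well-defined terms.

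\textbf{First term.} Under the encoder-induced joint \(p(x)q_\theta(z\mid x)\), the \(Z\)-marginal is exactly \(\aggq(z)\). The first term
\[
\mathbb{E}_{p(x)q_\theta(z\mid x)}\!\left[\log\frac{q_\theta(z\mid x)}{\aggq(z)}\right]
\]
is then the definition of \(I_\theta(Z;X)\), namely the KL divergence between the joint and the product of marginals.

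\textbf{Second term.} The integrand depends only on \(z\). Apply Fubini to integrate out \(x\) first, using \(\int p(x)q_\theta(z\mid x)\,dx=\aggq(z)\). This turns the second term into
\[
\mathbb{E}_{\aggq(z)}\!\left[\log\frac{\aggq(z)}{r_0(z)}\right]=\mathrm{KL}\!\bigl(\aggq(z)\,\|\,r_0(z)\bigr).
\]
Summing the two terms gives the identity. The inequality follows from nonnegativity of KL (Gibbs' inequality) applied to \(\mathrm{KL}(\aggq\|r_0)\).

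The only delicate point is measure-theoretic. Splitting the expectation of a sum into a sum of expectations requires that neither piece is of the form \(\infty-\infty\). I would handle this with the finiteness assumptions stated in Appendix~\ref{app:notation}. Alternatively, if \(\mathrm{KL}(\aggq\|r_0)=\infty\), one can note that the left-hand side is also infinite, since it is bounded below by that term, so the identity holds trivially.

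One could also obtain the result as the special case of Theorem~\ref{thm:condkl} with a trivial one-element condition variable, for which \(I(Z;X\mid\condvar)=I(Z;X)\) and \(\aggq(z\mid\condval)=\aggq(z)\). I would mention this as a remark but give the direct argument.
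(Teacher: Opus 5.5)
Your proposal is correct and follows the paper's proof essentially step for step: add and subtract $\log\aggq(z)$, identify the first term as $I_\theta(Z;X)$, integrate out $x$ in the second term to obtain $\mathrm{KL}(\aggq\,\|\,r_0)$, and conclude the inequality from nonnegativity of KL. Your extra remark on the infinite case is fine, provided the lower bound $\mathbb{E}_{p(x)}\mathrm{KL}(q_\theta(z\mid x)\,\|\,r_0(z))\ge \mathrm{KL}(\aggq\,\|\,r_0)$ is justified by convexity of KL in its first argument rather than by the identity itself; the paper simply relies on the standing finiteness assumptions.
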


\begin{proof}
We expand:
\[                           
\mathbb{E}_{p(x)}
\mathrm{KL}\!\bigl(q_\theta(z\mid x)\,\|\,r_0(z)\bigr)
=
\mathbb{E}_{p(x)}\mathbb{E}_{q_\theta(z\mid x)}
\left[
\log\frac{q_\theta(z\mid x)}{r_0(z)}
\right].
\]
Add and subtract \(\log \aggq(z)\):
\[
\log\frac{q_\theta(z\mid x)}{r_0(z)}
=
\log\frac{q_\theta(z\mid x)}{\aggq(z)}
+
\log\frac{\aggq(z)}{r_0(z)}.
\]
Therefore,
\begin{align*}
&\mathbb{E}_{p(x)}
\mathrm{KL}\!\bigl(q_\theta(z\mid x)\,\|\,r_0(z)\bigr) \\
&=
\mathbb{E}_{p(x)}\mathbb{E}_{q_\theta(z\mid x)}
\left[
\log\frac{q_\theta(z\mid x)}{\aggq(z)}
\right]
+
\mathbb{E}_{p(x)}\mathbb{E}_{q_\theta(z\mid x)}
\left[
\log\frac{\aggq(z)}{r_0(z)}
\right].
\end{align*}
The first term is
\[
I_\theta(Z;X).
\]
The second term becomes, after integrating out \(x\),
\[
\mathbb{E}_{\aggq(z)}
\left[
\log\frac{\aggq(z)}{r_0(z)}
\right]
=
\mathrm{KL}\!\bigl(\aggq(z)\,\|\,r_0(z)\bigr).
\]
Combining the two terms proves the identity. The inequality follows from the non-negativity of KL.

\end{proof}

\paragraph{Interpretation.}
The standard KL term controls global bandwidth:
\[
\text{standard KL}
=
\text{global information}
+
\text{global prior mismatch}.
\]
This makes its role fully parallel to the conditional KL term.

\begin{proposition}[Tightness under exact prior calibration]
\label{prop:tight_calibration}
Let
\[
\aggq(z)=\int q_\theta(z\mid x)p(x)\,dx
\]
and
\[
\aggq(z\mid \condval)=\int q_\theta(z\mid x)p(x\mid \condval)\,dx.
\]

Then the following hold.

\begin{enumerate}
    \item If the unconditional prior is chosen as
    \[
    r_0(z)=\aggq(z),
    \]
    then
    \[
    \mathbb{E}_{p(x)}
    \mathrm{KL}\!\bigl(q_\theta(z\mid x)\,\|\,r_0(z)\bigr)
    =
    I_\theta(Z;X).
    \]

    \item If the conditional prior is chosen as
    \[
    r_\psi(z\mid \condval)=\aggq(z\mid \condval)
    \qquad\text{for } p(\condval)\text{-a.e. } \condval,
    \]
    then
    \[
    \mathbb{E}_{p(x,\condval)}
    \mathrm{KL}\!\bigl(q_\theta(z\mid x)\,\|\,r_\psi(z\mid \condval)\bigr)
    =
    I_\theta(Z;X\mid \condvar).
    \]
\end{enumerate}
\end{proposition}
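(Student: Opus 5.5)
The plan is to specialize the exact identities already proved, Lemma~\ref{lem:stdkl} and Theorem~\ref{thm:condkl}, to the calibrated priors; each part then reduces to the vanishing of a single KL term. Both identities hold for an arbitrary prior, so no new computation is needed beyond checking that the calibrated prior is an admissible choice. Under the standing regularity assumptions of Appendix~\ref{app:notation}, the aggregated posteriors exist and the relevant KLs are finite, so this admissibility is automatic.

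For part~1, substitute \(r_0=\aggq\) into Lemma~\ref{lem:stdkl}:
\[
\mathbb{E}_{p(x)}\mathrm{KL}\bigl(q_\theta(z\mid x)\,\|\,\aggq(z)\bigr)
= I_\theta(Z;X)+\mathrm{KL}\bigl(\aggq\,\|\,\aggq\bigr).
\]
Since the KL of a distribution with itself is zero, the mismatch term vanishes and the left side equals \(I_\theta(Z;X)\). Alternatively, one can argue directly that the left side is the definition of mutual information written as \(\mathbb{E}\log q_\theta(z\mid x)/\aggq(z)\).

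For part~2, substitute \(r_\psi(\cdot\mid\condval)=\aggq(\cdot\mid\condval)\) into Theorem~\ref{thm:condkl}. The residual term is
\[
\mathbb{E}_{p(\condval)}\mathrm{KL}\bigl(\aggq(z\mid\condval)\,\|\,r_\psi(z\mid\condval)\bigr).
\]
For every \(\condval\) outside a \(p(\condval)\)-null set, the integrand is \(\mathrm{KL}(\aggq(\cdot\mid\condval)\,\|\,\aggq(\cdot\mid\condval))=0\). Since the integrand is nonnegative and vanishes almost everywhere, the expectation is zero, and the identity reduces to \(J_{\mathrm{cond}}=I_\theta(Z;X\mid\condvar)\).

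The only point requiring care is this a.e.\ qualifier. Changing the prior on a \(p(\condval)\)-null set of conditions cannot affect \(\mathbb{E}_{p(x,\condval)}\), because the outer expectation ignores null sets. With \(\condset\) finite, this just means the statement is unconstrained for conditions of zero probability. I expect no genuine obstacle here: the result is a direct corollary of the two decompositions.
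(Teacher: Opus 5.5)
Your proposal is correct and follows the paper's proof: both parts come from substituting the calibrated prior into Lemma~\ref{lem:stdkl} and Theorem~\ref{thm:condkl}, respectively, which makes the prior-mismatch KL term vanish. Your extra care with the $p(\condval)$-a.e.\ qualifier is a harmless refinement of the same argument.
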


\begin{proof}
The first claim follows directly from Lemma~\ref{lem:stdkl}:
\[
\mathbb{E}_{p(x)}
\mathrm{KL}\!\bigl(q_\theta(z\mid x)\,\|\,r_0(z)\bigr)
=
I_\theta(Z;X)
+
\mathrm{KL}\!\bigl(\aggq(z)\,\|\,r_0(z)\bigr).
\]
Setting \(r_0(z)=\aggq(z)\) makes the mismatch term vanish.

The second claim follows directly from Theorem~\ref{thm:condkl}:
\[
\mathbb{E}_{p(x,\condval)}
\mathrm{KL}\!\bigl(q_\theta(z\mid x)\,\|\,r_\psi(z\mid \condval)\bigr)
=
I_\theta(Z;X\mid \condvar)
+
\mathbb{E}_{p(\condval)}
\mathrm{KL}\!\bigl(\aggq(z\mid \condval)\,\|\,r_\psi(z\mid \condval)\bigr).
\]
Setting \(r_\psi(z\mid \condval)=\aggq(z\mid \condval)\) for \(p(\condval)\)-a.e.\ \(\condval\)
makes the second mismatch term vanish.
\end{proof}

\paragraph{Scope.}
Proposition~\ref{prop:tight_calibration} identifies the ideal calibration regime under which the KL surrogates coincide with the corresponding information quantities. In the main method, the structured Gaussian/simplex prior is not introduced to exactly realize the aggregated posterior, but to provide a stable and interpretable parameterized prior family for regularization and initialization.

\begin{theorem}[Insufficiency of a purely conditional bottleneck]
\label{thm:conditionalinsufficient}
There exist two encoders \(q_1(z\mid x)\) and \(q_2(z\mid x)\) such that
\[
I_{q_1}(Z;X\mid \condvar)=I_{q_2}(Z;X\mid \condvar),
\qquad
I_{q_1}(Z;X)\neq I_{q_2}(Z;X).
\]
Therefore, any regularizer depending only on \(I(Z;X\mid \condvar)\) cannot, in general,
control the total global information capacity.
\end{theorem}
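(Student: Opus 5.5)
We mirror the discrete construction of Theorem~\ref{thm:insufficient}, but now choose the two encoders so that they agree on the within-condition term and differ on the condition-level term. Take \(\condvar\sim\mathrm{Bernoulli}(1/2)\) and \(\nuis\sim\mathrm{Bernoulli}(1/2)\) independent, and set \(X=(\condvar,\nuis)\), so that \(\condvar\) is a deterministic function of \(X\) and the encoder-induced Markov chain \(\condvar\to X\to Z\) holds. Define two deterministic encoders
\[
Z_1=\nuis,\qquad Z_2=(\condvar,\nuis).
\]

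First compute the conditional terms. For \(Z_1=\nuis\), independence of \(\nuis\) and \(\condvar\) together with determinism given \(X\) gives
\[
I(Z_1;X\mid\condvar)=H(\nuis\mid\condvar)-H(Z_1\mid X,\condvar)=1-0=1.
\]
For \(Z_2=(\condvar,\nuis)\), conditioning on \(\condvar\) removes the first coordinate, so
\[
H(Z_2\mid\condvar)=H(\nuis\mid\condvar)=1
\qquad\text{and}\qquad
H(Z_2\mid X,\condvar)=0.
\]
Hence \(I(Z_2;X\mid\condvar)=1\), and the two within-condition quantities coincide.

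Next compute the global terms. Since both encoders are deterministic functions of \(X\), we have \(I(Z_i;X)=H(Z_i)\). This gives \(I(Z_1;X)=1\) and \(I(Z_2;X)=H(\condvar,\nuis)=2\). Equivalently, Theorem~\ref{thm:decomp} gives \(I(Z_i;X)=I(Z_i;\condvar)+I(Z_i;X\mid\condvar)\), where the condition-level term is \(I(Z_1;\condvar)=0\) and \(I(Z_2;\condvar)=1\). So the two encoders differ only through \(I(Z;\condvar)\).

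The main obstacle is technical rather than conceptual: the standing regularity assumptions exclude deterministic encoders in the continuous setting. Here everything is discrete, so all entropies are finite and the argument goes through directly. If a stochastic or continuous version is wanted, the fix is to mimic Theorem~\ref{thm:productspace}. Take channels \(T_C\) and \(T_N\) with \(I_{T_C}(Z';\condvar)=\infoscale_1>0\) and \(I_{T_N}(Z'';\nuis)=\infoscale_2\), and let \(q_1\) emit only the \(\nuis\)-channel output while \(q_2\) emits both outputs, drawn independently. The conditional term equals \(\infoscale_2\) for both encoders, and the global terms are \(\infoscale_2\) and \(\infoscale_1+\infoscale_2\) respectively. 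This conclusion follows from independence of \(\condvar\) and \(\nuis\) and of the channel noises, which makes the two outputs independent.
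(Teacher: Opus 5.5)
Your construction is correct and is essentially the paper's argument. Both use the product space \(X=(\condvar,\nuis)\) and two encoders that agree on \(I(Z;X\mid\condvar)\) and differ only through the condition-level term \(I(Z;\condvar)\). The paper takes the simpler pair \(Z_1\equiv z_0\), \(Z_2=\condvar\), with shared conditional term \(0\); your pair \(Z_1=\nuis\), \(Z_2=(\condvar,\nuis)\) fixes it at \(1\), which additionally shows the gap is not tied to the degenerate case where the conditional term vanishes. Your stochastic extension likewise parallels Theorem~\ref{thm:conditionalinsufficient_product}, which pairs a point mass with \(T_C\). Your independence argument for the two channel outputs checks out, given finite \(\infoscale_1,\infoscale_2\).
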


\begin{proof}
Let \(X=(\condvar,\nuis)\), where \(\condvar\) and \(\nuis\) are independent and \(H(\condvar)>0\).
Consider two encoders:
\[
Z_1 \equiv z_0,
\qquad
Z_2 = \condvar.
\]
For \(Z_1\),
\[
I(Z_1;X)=0,
\qquad
I(Z_1;X\mid \condvar)=0.
\]
For \(Z_2=\condvar\), conditioning on \(\condvar\) fully determines \(Z_2\), so
\[
I(Z_2;X\mid \condvar)=0.
\]
However,
\[
I(Z_2;X)=I(\condvar;X)=H(\condvar)>0.
\]
Thus the two encoders have the same conditional information but different global information.
Hence a regularizer depending only on \(I(Z;X\mid \condvar)\) cannot in general control total global capacity.
\end{proof}

\begin{theorem}[Product-space insufficiency of a purely conditional bottleneck]
\label{thm:conditionalinsufficient_product}
Let \(X=(\condvar,\nuis)\), where \(\condvar\) and \(\nuis\) are independent random variables on arbitrary measurable spaces.
Suppose there exists an encoder channel \(T_C(z\mid \condval)\) such that
\[
0 < I_{T_C}(Z;\condvar) < \infty.
\]
Define two encoders
\[
q_1(z\mid x)=\delta_{z_0}(z), \qquad q_2(z\mid x)=T_C(z\mid \condval).
\]
Then
\[
I_{q_1}(Z;X\mid \condvar)=I_{q_2}(Z;X\mid \condvar)=0,
\]
but
\[
I_{q_1}(Z;X)=0, \qquad I_{q_2}(Z;X)=I_{T_C}(Z;\condvar)>0.
\]
Therefore, any regularizer depending only on \(I(Z;X\mid \condvar)\) cannot, in general,
control total global information capacity in a broad product-space setting.
\end{theorem}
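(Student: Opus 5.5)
The statement to prove is Theorem~\ref{thm:conditionalinsufficient_product}. The plan mirrors the proof of Theorem~\ref{thm:productspace}: compute the two information quantities for each encoder separately. Throughout, we use the encoder-induced Markov convention of Appendix~\ref{app:notation}, under which \(\condvar\to X\to Z\), together with the decomposition of Theorem~\ref{thm:decomp}.

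For \(q_1=\delta_{z_0}\), the latent \(Z\) is almost surely constant. Hence its joint law with \(X\) factorizes as \(\delta_{z_0}\otimes p(x)\), so \(I_{q_1}(Z;X)=0\). Since conditional mutual information is nonnegative and bounded above by the unconditional one here (Theorem~\ref{thm:decomp}), this also gives \(I_{q_1}(Z;X\mid\condvar)=0\). Alternatively, for each fixed \(\condval\) the conditional joint law also factorizes.

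For \(q_2(z\mid x)=T_C(z\mid\condval)\), the key observation is that, given \(\condvar=\condval\), the conditional law of \(Z\) given \(X=(\condval,n)\) equals \(T_C(\cdot\mid\condval)\), which does not depend on \(n\). So \(Z\perp X\mid \condvar\), and \(I_{q_2}(Z;X\mid\condvar)=0\). Concretely, the condition-wise aggregated posterior \(\aggq(z\mid\condval)\) equals \(T_C(z\mid\condval)\), so the log-ratio \(\log q_2(z\mid x)/\aggq(z\mid\condval)\) vanishes identically. Then Theorem~\ref{thm:decomp} yields
\[
I_{q_2}(Z;X)=I_{q_2}(Z;\condvar)+0 .
\]
The remaining point is that the joint law of \((\condvar,Z)\) under \(q_2\) is exactly \(p(\condval)\,T_C(z\mid\condval)\), which is the law defining \(I_{T_C}(Z;\condvar)\). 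Thus \(I_{q_2}(Z;X)=I_{T_C}(Z;\condvar)\in(0,\infty)\) by hypothesis.

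The only delicate step is measure-theoretic, because the spaces are arbitrary and \(q_1\) is degenerate. The Standing regularity assumptions exclude deterministic limits, so I would argue directly via the definition of mutual information as the KL divergence between the joint law and the product of marginals, rather than via densities. For \(q_1\) the joint law coincides with the product of its marginals, so this KL divergence is zero. For \(q_2\), conditional independence given \(\condvar\) is expressed through regular conditional kernels, which exist because \(T_C\) is a given channel. With these facts, the conclusion follows: both encoders have zero conditional information but different global information. Hence no functional of \(I(Z;X\mid\condvar)\) alone can control \(I(Z;X)\).
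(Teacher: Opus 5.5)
Your proposal is correct and follows essentially the same route as the paper. Since \(q_1\) is constant, both informations vanish. Since \(q_2\) depends on \(x\) only through \(c\), you get \(I_{q_2}(Z;X\mid C)=0\), and the chain rule/decomposition then gives \(I_{q_2}(Z;X)=I_{q_2}(Z;C)=I_{T_C}(Z;C)\). The rest of your argument makes explicit what the paper leaves implicit: deriving \(I_{q_1}(Z;X\mid C)=0\) from the decomposition bound, identifying the joint law of \((C,Z)\) with \(p(c)\,T_C(z\mid c)\), and handling the degenerate point-mass encoder through the joint-versus-product-of-marginals KL instead of densities.
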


\begin{proof}
For \(q_1\), the latent variable is deterministic: \(Z\equiv z_0\). Hence
\[
I_{q_1}(Z;X\mid \condvar)=0,
\qquad
I_{q_1}(Z;X)=0.
\]

For \(q_2\), the encoder depends on \(x\) only through \(\condvar\). Therefore,
\[
I_{q_2}(Z;X\mid \condvar)=0.
\]
Using the chain rule,
\[
I_{q_2}(Z;X)=I_{q_2}(Z;\condvar)+I_{q_2}(Z;X\mid \condvar)
=I_{T_C}(Z;\condvar)>0.
\]

Thus \(q_1\) and \(q_2\) have the same conditional information but different global information.
Hence any regularizer depending only on \(I(Z;X\mid \condvar)\) cannot, in general,
control total global information capacity.
\end{proof}

\subsection{A structural regime where condition-level encoding is invisible to the conditional term}
\label{subsec:invisible_condition_level_encoding}

We now introduce a second optional structural regime, separate from the within-condition nuisance
interpretation in Proposition~1. The purpose of this subsection is to show that, even when the
conditional KL term correctly regularizes the within-condition information that it can see, there may
still exist condition-level encoding that does not appear in $I(Z;X\mid C)$ at all. In such a case,
a purely conditional bottleneck is insufficient, and the standard KL term remains necessary as a
global-capacity regularizer.

Assume that the input admits a decomposition
\[
X = T(S,\nuis,B), \qquad Y=f(S), \qquad C=\partmap(Y),
\]
where:
(i) $\nuis$ denotes within-condition nuisance variation,
and (ii) $B$ denotes a class-predictable / condition-level component.
We keep the visible within-condition assumption from Proposition~1 for $\nuis$, and now add the
following stronger condition for the invisible condition-level component:
\[
H(B\mid C)=0, \qquad H(B)>0.
\]
Thus, $B$ is a non-trivial measurable function of $C$.

\begin{proposition}[Invisible condition-level encoding under $X=T(S,\nuis,B)$]
\label{prop:invisible_condition_level_encoding}
Assume
\[
X = T(S,\nuis,B), \qquad Y=f(S), \qquad C=\partmap(Y),
\]
and suppose
\[
H(B\mid C)=0, \qquad H(B)>0.
\]
Then there exist a stochastic encoder $q_\theta(z\mid x)$ and a conditional prior $r_\psi(z\mid c)$
such that:
\begin{align}
\mathbb{E}_{p(x,c)}
\operatorname{KL}\!\bigl(q_\theta(z\mid x)\,\|\,r_\psi(z\mid c)\bigr) &= 0, \label{eq:inv-cond-kl-zero}\\
I_\theta(Z;X\mid C) &= 0, \label{eq:inv-ixc-zero}\\
I_\theta(Z;C) &> 0, \label{eq:inv-izc-pos}\\
I_\theta(Z;X) &= I_\theta(Z;C) > 0. \label{eq:inv-ix-eq-izc}
\end{align}
Consequently, there exists a valid structural regime in which some condition-level encoding is
completely invisible to the conditional term, yet still occupies positive global information capacity.
\end{proposition}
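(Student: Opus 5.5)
The plan is to construct the encoder explicitly so that it depends on the input only through the condition-level component $B$, and to take the conditional prior equal to the encoder's output given the condition. Since $H(B\mid C)=0$, there is a measurable map $h$ with $B=h(C)$ almost surely. I would pick a family of distributions $\{p_b\}$ on $\mathbb{R}^d$ indexed by the values of $B$, pairwise distinct and with finite mutual information. One concrete choice is Gaussians $\mathcal{N}(\mu_b,\Id_d)$ with distinct means. This choice keeps us inside the standing regularity assumptions, avoiding a degenerate deterministic encoder. The encoder is then defined as $q_\theta(z\mid x)=p_{b(x)}(z)$, where $b(x)$ is the $B$-component recovered from $x$.

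The one step that needs care, and which I expect to be the main obstacle, is that $B$ must actually be a measurable function of $X$. The statement writes $X=T(S,\nuis,B)$, so I would assume, or read into the regime, that $T$ is injective in its $B$ argument, or more generally that $B=\beta(X)$ for some measurable $\beta$. If that fails, I would instead make the encoder depend on $C$ itself. This requires $C$ to be recoverable from $X$, which holds in the clean-label setting $C=\partmap(\cleanrule(X))$ of Appendix~\ref{app:notation}. Given this, the identity $B=h(C)$ means that conditioned on $C=\condval$ the encoder is the fixed law $p_{h(\condval)}$, which does not depend on $x$.

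Next I set $r_\psi(z\mid\condval)=p_{h(\condval)}(z)$. Then $\mathrm{KL}\!\bigl(q_\theta(z\mid x)\,\|\,r_\psi(z\mid\condval)\bigr)=0$ for $p(x,\condval)$-a.e.\ pair, which gives \eqref{eq:inv-cond-kl-zero}. Since $\aggq(z\mid\condval)=p_{h(\condval)}=r_\psi(z\mid\condval)$, Theorem~\ref{thm:condkl} gives $I_\theta(Z;X\mid C)=0-0=0$, which is \eqref{eq:inv-ixc-zero}. This also follows directly, because $Z$ is conditionally independent of $X$ given $C$.

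For \eqref{eq:inv-izc-pos}, I note that $Z$ depends on $C$ only through $B$. Because $H(B)>0$, $B$ takes at least two values with positive probability. The laws $p_b$ are distinct, so the marginal of $Z$ differs from at least one conditional law $p_{h(\condval)}$ on a set of positive $p(\condval)$-mass, and hence $I_\theta(Z;C)=\mathbb{E}_{p(\condval)}\mathrm{KL}(p_{h(\condval)}\,\|\,\aggq)>0$. Finiteness follows from convexity of KL, or from the bound $I\le H(B)$ when $B$ is discrete. Finally, $C\to X\to Z$ holds by the encoder convention, so Theorem~\ref{thm:decomp} gives $I_\theta(Z;X)=I_\theta(Z;C)+I_\theta(Z;X\mid C)=I_\theta(Z;C)>0$, which is \eqref{eq:inv-ix-eq-izc}.
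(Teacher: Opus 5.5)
Your proposal is correct and matches the paper's proof in substance. The paper writes $B=m(C)$, defines $q_\theta(z\mid x)=\kappa_{Z\mid B}(z\mid m(c_{\partmap}(x)))$ with $c_{\partmap}(x)=\partmap(\cleanrule(x))$ and $r_\psi(z\mid c)=\kappa_{Z\mid B}(z\mid m(c))$, and concludes with Theorem~\ref{thm:decomp}. This is exactly your fallback, including the tacit clean-label recoverability of $C$ from $X$, which you rightly note is not implied by $X=T(S,\nuis,B)$ alone; the only difference is that the paper gets $I_\theta(Z;C)>0$ from the chain-rule identity $I_\theta(Z;C)=I_\theta(Z;B)$ for a channel chosen with $I(Z;B)>0$, whereas you argue it directly from distinct component laws and non-constant $B$.
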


\begin{proof}
Since $H(B\mid C)=0$, there exists a measurable map $m(\cdot)$ such that
\[
B=m(C) \qquad \text{almost surely.}
\]

Choose any non-degenerate stochastic channel $\kappa_{Z\mid B}(z\mid b)$ satisfying
\[
0 < I_{\kappa_{Z\mid B}}(Z;B) < \infty.
\]
Now, define the encoder by
\[
q_\theta(z\mid x) := \kappa_{Z\mid B}\!\bigl(z \mid m(c_{\partmap}(x))\bigr),
\]
where $c_{\partmap}(x):=\partmap(\cleanrule(x))$.
That is, the encoder depends on $x$ only through the induced condition $C=c_{\partmap}(x)$ (equivalently,
through $B=m(C)$). Define the conditional prior by
\[
r_\psi(z\mid c) := \kappa_{Z\mid B}(z\mid m(c)).
\]

Then for every pair $(x,c)$ in the support of $p(x,c)$ with $c=c_{\partmap}(x)$, we have
\[
q_\theta(z\mid x)=r_\psi(z\mid c).
\]
Therefore, the pointwise KL divergence is zero:
\[
\operatorname{KL}\!\bigl(q_\theta(z\mid x)\,\|\,r_\psi(z\mid c)\bigr)=0.
\]
Taking expectation over $(x,c)$ yields
\[
\mathbb{E}_{p(x,c)}
\operatorname{KL}\!\bigl(q_\theta(z\mid x)\,\|\,r_\psi(z\mid c)\bigr)=0,
\]
which proves \eqref{eq:inv-cond-kl-zero}.

Next, because $q_\theta(z\mid x)=q_\theta(z\mid c_{\partmap}(x))$, the conditional distribution of $Z$
depends only on $C$ and not on any residual variation of $X$ once $C$ is fixed. Hence
\[
Z \perp X \mid C,
\]
and therefore
\[
I_\theta(Z;X\mid C)=0,
\]
which proves \eqref{eq:inv-ixc-zero}.

Now observe that $B=m(C)$ and $Z$ is generated from $B$ through the non-degenerate channel $\kappa_{Z\mid B}$.
Hence
\[
C \to B \to Z.
\]
Moreover, because $B$ is a deterministic function of $C$, we have
\[
I_\theta(Z;B\mid C)=0.
\]
And since $Z$ is generated from $B$, we also have
\[
I_\theta(Z;C\mid B)=0.
\]
Applying the chain rule to the joint pair $(B,C)$ in two ways gives
\[
I_\theta(Z;B,C)
= I_\theta(Z;C)+I_\theta(Z;B\mid C)
= I_\theta(Z;C),
\]
and also
\[
I_\theta(Z;B,C)
= I_\theta(Z;B)+I_\theta(Z;C\mid B)
= I_\theta(Z;B).
\]
Thus
\[
I_\theta(Z;C)=I_\theta(Z;B).
\]
Since the channel $\kappa_{Z\mid B}$ was chosen to be non-degenerate with $I_{\kappa_{Z\mid B}}(Z;B)>0$, it follows that
\[
I_\theta(Z;C)>0,
\]
which proves \eqref{eq:inv-izc-pos}.

Finally, by Theorem~\ref{thm:decomp},
\[
I_\theta(Z;X)=I_\theta(Z;\condvar)+I_\theta(Z;X\mid \condvar).
\]
Substituting \eqref{eq:inv-ixc-zero} gives
\[
I_\theta(Z;X)=I_\theta(Z;C).
\]
Combining this with \eqref{eq:inv-izc-pos} yields
\[
I_\theta(Z;X)=I_\theta(Z;C)>0,
\]
which proves \eqref{eq:inv-ix-eq-izc}.
\end{proof}

\paragraph{Interpretation.}
This proposition is an existence result, not a universal statement about all datasets or all nuisance
decompositions. Its precise meaning is narrower and sufficient for our purpose:
the conditional KL term regularizes the within-condition information that is visible through
$I(Z;X\mid C)$, but there exist valid structural regimes in which some class-predictable,
condition-level encoding does not appear in $I(Z;X\mid C)$ at all. In such cases, the remaining
encoding lives entirely in the global / condition-level part $I(Z;C)$, and therefore still contributes
to the total information budget $I(Z;X)$. By Lemma~1, the standard KL term upper-bounds
$I_\theta(Z;X)$ up to a global prior mismatch, so it remains the mechanism that regularizes this
otherwise invisible condition-level capacity.

\paragraph{Connection to the dual-bottleneck objective.}
Together with Proposition~1 and Lemma~1, the picture is now explicit:
\begin{itemize}
    \item Proposition~1 explains why the conditional KL term regularizes visible within-condition
    nuisance dependence through $I(Z;X\mid C)$.
    \item Proposition~\ref{prop:invisible_condition_level_encoding} shows that a purely conditional
    bottleneck cannot, in general, exclude all condition-level encoding, because some of it may be
    invisible to $I(Z;X\mid C)$.
    \item Lemma~1 explains why the standard KL term remains necessary: it controls the total
    global information bandwidth, including condition-level encoding that does not show up in the
    conditional term.
\end{itemize}
Hence, the dual-bottleneck objective is justified not only by the visible within-condition role of the
conditional term, but also by the existence of hidden condition-level capacity that requires a
global regularizer.

\subsection{A structured conditional prior and its induced geometry}
\label{app:simplex}

We choose
\[
r_\psi(z\mid \condval)=\mathcal{N}(\mu_{\condval},\tau^2 \Id_d),
\qquad
\mu_{\condval}=\rho \embedmat s_{\condval}+\priorbias,
\]
where:
\begin{itemize}
    \item \(s_1,\dots,s_\ncond\in\mathbb{R}^{\ncond-1}\) are regular simplex vertices,
    \item \(\embedmat^\top \embedmat=\Id_{\ncond-1}\),
    \item \(\rho>0\), \(\priorbias\in\mathbb{R}^d\), and \(\tau>0\).
\end{itemize}
For a regular simplex,
\[
\|s_{\condval}\|=1,
\qquad
s_{\condval}^\top s_{\condval'}=-\frac{1}{\ncond-1}\quad (\condval\neq \condval').
\]

\paragraph{Dimensional assumption for the simplex construction.}
Throughout the simplex-prior analysis, we assume
\[
s_{\condval} \in \mathbb{R}^{\ncond-1}, \qquad
\embedmat \in \mathbb{R}^{d\times(\ncond-1)}, \qquad
\embedmat^\top \embedmat = \Id_{\ncond-1},
\]
which requires
\[
d \ge \ncond-1.
\]

\begin{proposition}
\label{prop:simplex}
For any \(\condval\neq \condval'\),
\[
\|\mu_{\condval}-\mu_{\condval'}\|^2
=
2\rho^2\frac{\ncond}{\ncond-1}.
\]
Hence all condition centers are equally separated.
\end{proposition}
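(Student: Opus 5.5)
The plan is a direct algebraic computation that reduces everything to the inner-product structure of the regular simplex vertices, with the embedding matrix and the global shift dropping out. First, the shift \(\priorbias\) cancels in the difference:
\[
\mu_{\condval}-\mu_{\condval'}=\rho\,\embedmat\,(s_{\condval}-s_{\condval'}).
\]
So the squared norm is
\[
\|\mu_{\condval}-\mu_{\condval'}\|^2=\rho^2\,(s_{\condval}-s_{\condval'})^\top\embedmat^\top\embedmat\,(s_{\condval}-s_{\condval'}).
\]
Here the isometry assumption \(\embedmat^\top\embedmat=\Id_{\ncond-1}\), which is available because \(d\ge\ncond-1\), collapses the middle factor. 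This leaves \(\rho^2\|s_{\condval}-s_{\condval'}\|^2\), a quantity that lives entirely in simplex coordinates.

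Next we expand the remaining norm using the stated regular-simplex facts \(\|s_{\condval}\|=1\) and \(s_{\condval}^\top s_{\condval'}=-1/(\ncond-1)\) for \(\condval\neq\condval'\):
\[
\|s_{\condval}-s_{\condval'}\|^2=\|s_{\condval}\|^2+\|s_{\condval'}\|^2-2s_{\condval}^\top s_{\condval'}=2+\frac{2}{\ncond-1}=\frac{2\ncond}{\ncond-1}.
\]
Multiplying by \(\rho^2\) gives the claimed value \(2\rho^2\ncond/(\ncond-1)\). This value is independent of the pair \((\condval,\condval')\), which is exactly the equal-separation conclusion.

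There is no real obstacle; the only point needing care is the normalization convention for the simplex. The inner-product value \(-1/(\ncond-1)\) holds for unit-norm vertices centered at the origin, and the paper fixes this convention just before the proposition, so I would take it as given. If one wanted to be fully self-contained, I would justify it in one line: centering gives \(\sum_{\condval}s_{\condval}=0\), and symmetry makes all off-diagonal inner products equal to a common value \(a\). Then \(0=\|\sum_{\condval} s_{\condval}\|^2=\ncond+\ncond(\ncond-1)a\), so \(a=-1/(\ncond-1)\).
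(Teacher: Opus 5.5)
Your proof is correct and follows the paper's argument essentially line by line: cancel the shift, use \(\embedmat^\top\embedmat=\Id_{\ncond-1}\) to reduce to \(\rho^2\|s_{\condval}-s_{\condval'}\|^2\), and expand with the simplex identities. Your one-line derivation of \(s_{\condval}^\top s_{\condval'}=-1/(\ncond-1)\) from centering and symmetry is a harmless extra; the paper simply takes this value as part of the regular-simplex convention.
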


\begin{proof}
By definition,
\[
\mu_{\condval}-\mu_{\condval'}=\rho \embedmat(s_{\condval}-s_{\condval'}).
\]
Therefore,
\[
\|\mu_{\condval}-\mu_{\condval'}\|^2
=
\rho^2\|\embedmat(s_{\condval}-s_{\condval'})\|^2.
\]
Since \(\embedmat^\top\embedmat=\Id_{\ncond-1}\), the linear map \(\embedmat\) preserves Euclidean norm on the simplex subspace, so
\[
\|\embedmat(s_{\condval}-s_{\condval'})\|^2
=
\|s_{\condval}-s_{\condval'}\|^2.
\]
Now expand:
\[
\|s_{\condval}-s_{\condval'}\|^2
=
\|s_{\condval}\|^2+\|s_{\condval'}\|^2-2s_{\condval}^\top s_{\condval'}
=
1+1-2\left(-\frac{1}{\ncond-1}\right)
=
2\frac{\ncond}{\ncond-1}.
\]
Hence
\[
\|\mu_{\condval}-\mu_{\condval'}\|^2
=
2\rho^2\frac{\ncond}{\ncond-1}.
\]
\end{proof}
\paragraph{Interpretation.}
This proposition shows that the simplex prior induces a symmetric separability pattern among the condition centers. In our method, this property justifies using simplex geometry as a constrained prior parameterization, offering a balanced and analytically transparent prior for conditional regularization.

\subsection{A corrected minimization argument for the shared variance}
\label{app:tau}

\begin{proposition}
\label{prop:tau}
Suppose the encoder family \(q_\theta(z\mid x)=\mathcal{N}(m_x,\Sigma_x)\), the centers \(\{\mu_c\}\),
and the data distribution \(p(x,c)\) are temporarily fixed.
Define
\[
A:=
\mathbb{E}_{p(x,c)}
\bigl[
\mathrm{tr}(\Sigma_x)+\|m_x-\mu_c\|^2
\bigr].
\]
Let
\[
J(\tau^2)
:=
\mathbb{E}_{p(x,c)}
\mathrm{KL}\!\left(
\mathcal{N}(m_x,\Sigma_x)\,\|\,\mathcal{N}(\mu_c,\tau^2 \Id_d)
\right).
\]
Then
\[
J(\tau^2)
=
\frac12\left(\frac{A}{\tau^2}+d\log\tau^2\right)+\mathrm{const},
\]
where \(\mathrm{const}\) does not depend on \(\tau\).
Furthermore, \(A\ge 0\). In the nondegenerate case \(A>0\), the unique minimizer is
\[
\tau_\star^2=\frac{A}{d}.
\]
\end{proposition}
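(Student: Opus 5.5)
The plan is to compute the Gaussian KL in closed form, isolate its dependence on \(\tau^2\), and then minimize a one-dimensional function of \(\tau^2\). For a fixed pair \((x,c)\), the standard formula for the KL between \(\mathcal{N}(m_x,\Sigma_x)\) and \(\mathcal{N}(\mu_c,\tau^2\Id_d)\) gives
\[
\mathrm{KL}=\frac12\left(\frac{\mathrm{tr}(\Sigma_x)+\|m_x-\mu_c\|^2}{\tau^2}-d+d\log\tau^2-\log\det\Sigma_x\right),
\]
where I use \(\log\det(\tau^2\Id_d)=d\log\tau^2\) and \((\tau^2\Id_d)^{-1}=\tau^{-2}\Id_d\). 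Taking the expectation over \(p(x,c)\) and using linearity, the \(\tau\)-dependent part collects into \(\frac12(A/\tau^2+d\log\tau^2)\). The remainder \(-\frac12\mathbb{E}[d+\log\det\Sigma_x]\) depends only on the fixed encoder, so it is the constant. Finiteness of the constant is guaranteed by the standing regularity assumptions, which require nondegenerate \(\Sigma_x\) and finite KLs.

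Next I establish \(A\ge 0\). This is immediate, since \(\Sigma_x\) is positive semidefinite (so \(\mathrm{tr}(\Sigma_x)\ge0\)) and a squared norm is nonnegative.

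For the minimization, set \(u=\tau^2>0\) and \(g(u)=A/u+d\log u\). Its derivative is \(g'(u)=-A/u^2+d/u=(du-A)/u^2\). When \(A>0\), this derivative is negative on \((0,A/d)\) and positive on \((A/d,\infty)\). Hence \(g\) is strictly decreasing and then strictly increasing, so \(u_\star=A/d\) is the unique global minimizer on \((0,\infty)\). As a sanity check, \(g(u)\to\infty\) both as \(u\to0^+\) (because \(A/u\) dominates) and as \(u\to\infty\), which is consistent with an interior minimum.

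There is no real obstacle here. The only points needing care are keeping the \(\log\det\Sigma_x\) and \(-d\) terms out of the \(\tau\)-dependent part, and observing that when \(A=0\) the function \(d\log u\) has no minimizer. That second observation is why the statement restricts to \(A>0\).
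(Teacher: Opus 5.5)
Your proposal is correct and follows the paper's proof essentially step for step. Both use the closed-form Gaussian KL, absorb $-\tfrac12\mathbb{E}[d+\log\det\Sigma_x]$ into the constant, and substitute $u=\tau^2$ with derivative proportional to $(du-A)/u^2$. Both then use the sign change at $u=A/d$ to get the unique global minimizer, and note that for $A=0$ the objective is strictly increasing and has no interior minimizer.
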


\paragraph{Remark.}
The quantity \(A\) is nonnegative because both
\(\operatorname{tr}(\Sigma_x)\) and \(\|m_x-\mu_c\|^2\) are nonnegative.
Moreover, \(A=0\) can occur only in the degenerate case where
\[
\Sigma_x=0
\quad\text{and}\quad
m_x=\mu_c
\qquad\text{almost surely},
\]
namely, when the posterior collapses to condition-wise point masses.
Hence, under any nondegenerate Gaussian posterior family, the condition \(A>0\) is natural.

\begin{proof}
For Gaussian distributions,
\[
\mathrm{KL}\!\left(
\mathcal{N}(m_x,\Sigma_x)\,\|\,\mathcal{N}(\mu_c,\tau^2 \Id_d)
\right)
=
\frac12\Bigl[
\tau^{-2}\mathrm{tr}(\Sigma_x)
+\tau^{-2}\|m_x-\mu_c\|^2
-d
+d\log\tau^2
-\log\det\Sigma_x
\Bigr].
\]
Taking expectation over \(p(x,c)\), we obtain
\[
J(\tau^2)
=
\frac12\left(
\frac{A}{\tau^2}
+
d\log\tau^2
\right)
+
\frac12\,\mathbb{E}_{p(x,c)}[-d-\log\det\Sigma_x].
\]
The second term is independent of \(\tau\), so we may write
\[
J(\tau^2)
=
\frac12\left(
\frac{A}{\tau^2}
+
d\log\tau^2
\right)
+\mathrm{const}.
\]

Now let \(u=\tau^2>0\). Then
\[
J(u)=\frac12\left(\frac{A}{u}+d\log u\right)+\mathrm{const}.
\]
Differentiate with respect to \(u\):
\[
J'(u)
=
\frac12\left(-\frac{A}{u^2}+\frac{d}{u}\right)
=
\frac{du-A}{2u^2}.
\]
If \(A=0\), then
\[
J'(u)=\frac{d}{2u}>0
\qquad\text{for all } u>0,
\]
so \(J\) is strictly increasing on \((0,\infty)\) and has no interior minimizer there;
its infimum is approached only in the boundary limit \(u\to 0^+\). Now consider the
nondegenerate case \(A>0\).
Since \(2u^2>0\) for all \(u>0\), the sign of \(J'(u)\) is determined entirely by \(du-A\):
\[
J'(u)<0 \quad \text{if } u<\frac{A}{d},
\qquad
J'(u)>0 \quad \text{if } u>\frac{A}{d}.
\]
Therefore \(u=A/d\) is the unique stationary point, and the derivative changes from negative
to positive at that point. Hence, it is the unique global minimizer:
\[
u_\star=\frac{A}{d}.
\]
Returning to \(u=\tau^2\), we obtain
\[
\tau_\star^2=\frac{A}{d}.
\]
\end{proof}

\paragraph{Interpretation.}
This proposition clarifies the role of the shared-scale parameterization. When one adopts a common isotropic variance across conditions, learning \(\tau\) mainly adjusts a global average dispersion level. This makes fixed \(\tau\) a practical and interpretable choice for keeping center geometry and variance adaptation disentangled.

\subsection{Task-constrained Pareto synthesis theorem}
\label{app:final-logic}

We now synthesize the preceding results into a single task-constrained Pareto statement.
The purpose of this subsection is not to introduce a new primitive ingredient,
but to collect the earlier decomposition, upper-bound, and insufficiency results into
one sharpened theorem that formalizes the control advantage of the dual objective.

\paragraph{Task-feasible set.}
Define the supervised task risk
\[
R_{\mathrm{task}}(\theta,\omega)
=
\mathbb{E}_{p(x,y)q_\theta(z\mid x)}
\bigl[\ell_{\mathrm{task}}(g_\omega(z),y)\bigr],
\]
and for any tolerance \(\varepsilon>0\), define the task-feasible set
\[
\mathcal{F}_\varepsilon
=
\bigl\{
(\theta,\omega,\psi):
R_{\mathrm{task}}(\theta,\omega)\le \varepsilon
\bigr\}.
\]

\paragraph{Two residual-information axes.}
Under the conditional nuisance regime of Proposition~\ref{prop:nuisance}, define the
visible nuisance dependence
\[
H_{\mathrm{vis}}(\theta)
=
I_\theta(Z;\nuis\mid \condvar).
\]
Under the optional refinement regime of
Proposition~\ref{prop:invisible_condition_level_encoding}, define the
conditional-invisible condition-level encoding
\[
H_{\mathrm{inv}}(\theta)
=
I_\theta(Z;B).
\]
For any positive weights \(\lambda_{\mathrm{vis}},\lambda_{\mathrm{inv}}>0\), let
\[
H_\lambda(\theta)
=
\lambda_{\mathrm{vis}}H_{\mathrm{vis}}(\theta)
+
\lambda_{\mathrm{inv}}H_{\mathrm{inv}}(\theta).
\]

\paragraph{Provable upper-bound surrogate.}
Define
\[
J_{\mathrm{std}}(\theta)
=
\mathbb{E}_{p(x)}
\mathrm{KL}\!\bigl(q_\theta(z\mid x)\,\|\,r_0(z)\bigr),
\qquad
J_{\mathrm{cond}}(\theta,\psi)
=
\mathbb{E}_{p(x,\condval)}
\mathrm{KL}\!\bigl(q_\theta(z\mid x)\,\|\,r_\psi(z\mid \condval)\bigr),
\]
and
\[
U_\lambda(\theta,\psi)
=
\lambda_{\mathrm{vis}}J_{\mathrm{cond}}(\theta,\psi)
+
\lambda_{\mathrm{inv}}J_{\mathrm{std}}(\theta).
\]

\paragraph{Supported families.}
Define the dual supported family
\[
\mathcal{A}^{\mathrm{dual}}_\varepsilon
=
\bigcup_{\substack{\beta_{\mathrm{std}},\beta_{\mathrm{cond}}\ge 0\\
\beta_{\mathrm{std}}+\beta_{\mathrm{cond}}>0}}
\operatorname*{arg\,min}_{(\theta,\omega,\psi)\in \mathcal{F}_\varepsilon}
\Bigl[
\beta_{\mathrm{std}}J_{\mathrm{std}}(\theta)
+
\beta_{\mathrm{cond}}J_{\mathrm{cond}}(\theta,\psi)
\Bigr].
\]
Likewise, define the two degenerate single-term subfamilies
\[
\mathcal{A}^{\mathrm{std}}_\varepsilon
=
\bigcup_{\beta_{\mathrm{std}}>0}
\operatorname*{arg\,min}_{(\theta,\omega,\psi)\in \mathcal{F}_\varepsilon}
\Bigl[
\beta_{\mathrm{std}}J_{\mathrm{std}}(\theta)
\Bigr],
\]
\[
\mathcal{A}^{\mathrm{cond}}_\varepsilon
=
\bigcup_{\beta_{\mathrm{cond}}>0}
\operatorname*{arg\,min}_{(\theta,\omega,\psi)\in \mathcal{F}_\varepsilon}
\Bigl[
\beta_{\mathrm{cond}}J_{\mathrm{cond}}(\theta,\psi)
\Bigr].
\]
Assume these supported families are nonempty for the weight ranges considered.
(If minimizers are not attained, the same comparison may be stated at the level of
infimizing sequences.)

\begin{theorem}[Task-constrained Pareto upper-bound dominance of dual regularization]
\label{thm:pareto_upper_bound}
Under the standing assumptions and encoder-induced Markov convention in
Appendix~\ref{app:notation}, together with the nuisance assumption in
Appendix~\ref{app:nuisance} (Proposition~\ref{prop:nuisance}) and the structural
refinement regime in Appendix~\ref{subsec:invisible_condition_level_encoding}
(Proposition~\ref{prop:invisible_condition_level_encoding}), the following hold.

\begin{enumerate}
    \item For every \((\theta,\omega,\psi)\in \mathcal{F}_\varepsilon\),
    \begin{equation}
    \label{eq:pareto_pointwise_bound}
    H_\lambda(\theta)\le U_\lambda(\theta,\psi).
    \end{equation}

    \item Define the optimal provable upper bounds on residual information
    \[
    \Phi_{\mathrm{dual}}^\star(\varepsilon;\lambda)
    =
    \inf_{(\theta,\omega,\psi)\in\mathcal{A}^{\mathrm{dual}}_\varepsilon}
    U_\lambda(\theta,\psi),
    \]
    \[
    \Phi_{\mathrm{std}}^\star(\varepsilon;\lambda)
    =
    \inf_{(\theta,\omega,\psi)\in\mathcal{A}^{\mathrm{std}}_\varepsilon}
    U_\lambda(\theta,\psi),
    \qquad
    \Phi_{\mathrm{cond}}^\star(\varepsilon;\lambda)
    =
    \inf_{(\theta,\omega,\psi)\in\mathcal{A}^{\mathrm{cond}}_\varepsilon}
    U_\lambda(\theta,\psi).
    \]
    Then
    \begin{equation}
    \label{eq:pareto_value_compare}
    \Phi_{\mathrm{dual}}^\star(\varepsilon;\lambda)
    \le
    \Phi_{\mathrm{std}}^\star(\varepsilon;\lambda),
    \qquad
    \Phi_{\mathrm{dual}}^\star(\varepsilon;\lambda)
    \le
    \Phi_{\mathrm{cond}}^\star(\varepsilon;\lambda).
    \end{equation}
\end{enumerate}
\end{theorem}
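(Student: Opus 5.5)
The plan is to prove part 1 termwise: bound each residual-information axis by its matching KL surrogate, then take the positive combination. Part 2 then follows from the set inclusions between the supported families. The task-feasibility constraint plays no role in part 1. The pointwise bound holds for every \((\theta,\psi)\), so restricting to \(\mathcal{F}_\varepsilon\) only restricts the domain.

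\textbf{Visible axis.} Under the nuisance assumption, Proposition~\ref{prop:nuisance} gives the conditional data-processing step \(I_\theta(Z;\nuis\mid\condvar)\le I_\theta(Z;X\mid\condvar)\). Theorem~\ref{thm:condkl} then gives \(I_\theta(Z;X\mid\condvar)\le J_{\mathrm{cond}}(\theta,\psi)\), because the condition-wise prior-mismatch term \(\mathbb{E}_{p(\condval)}\mathrm{KL}(\aggq(z\mid\condval)\|r_\psi(z\mid\condval))\) is nonnegative. Together these give \(H_{\mathrm{vis}}(\theta)\le J_{\mathrm{cond}}(\theta,\psi)\).

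\textbf{Invisible axis.} Here I use the encoder-induced Markov convention of Appendix~\ref{app:notation} with \(W=B\), which gives \(B\to X\to Z\). The unconditional data-processing inequality then yields \(I_\theta(Z;B)\le I_\theta(Z;X)\). Lemma~\ref{lem:stdkl} gives \(I_\theta(Z;X)\le J_{\mathrm{std}}(\theta)\), again by nonnegativity of the mismatch \(\mathrm{KL}(\aggq\|r_0)\). Hence \(H_{\mathrm{inv}}(\theta)\le J_{\mathrm{std}}(\theta)\). Multiplying the two bounds by \(\lambda_{\mathrm{vis}},\lambda_{\mathrm{inv}}>0\) and adding gives \eqref{eq:pareto_pointwise_bound}.

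\textbf{Part 2.} I will show \(\mathcal{A}^{\mathrm{std}}_\varepsilon\subseteq\mathcal{A}^{\mathrm{dual}}_\varepsilon\) and \(\mathcal{A}^{\mathrm{cond}}_\varepsilon\subseteq\mathcal{A}^{\mathrm{dual}}_\varepsilon\). Each std-only argmin set, indexed by \(\beta_{\mathrm{std}}>0\), is exactly the dual argmin set at the weight pair \((\beta_{\mathrm{std}},0)\). That pair is admissible, since \(\beta_{\mathrm{std}},\beta_{\mathrm{cond}}\ge0\) with positive sum. The same holds for cond-only with weight pair \((0,\beta_{\mathrm{cond}})\). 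Since \(U_\lambda\) is the same functional in all three definitions, an infimum over a larger set is no larger. This gives both inequalities in \eqref{eq:pareto_value_compare}, and hence Corollary~\ref{cor:family_inclusion}. If minimizers are not attained, the same inclusion argument is run on infimizing sequences.

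The only delicate step is the invisible axis, specifically justifying \(B\to X\to Z\). Proposition~\ref{prop:invisible_condition_level_encoding} has \(B\) entering through \(X=T(S,\nuis,B)\), and \(Z\) is generated from \(x\) alone, so the convention \(p_\theta(b,x,z)=p(b,x)q_\theta(z\mid x)\) applies directly. No finiteness issues arise under the standing regularity assumptions. Everything else is nonnegativity of KL plus set inclusion.
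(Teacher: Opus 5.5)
Your proposal is correct and matches the paper's proof essentially step for step. The visible axis goes through Proposition~\ref{prop:nuisance} and Theorem~\ref{thm:condkl}; the invisible axis goes through the encoder-induced chain $B\to X\to Z$, the data-processing inequality, and Lemma~\ref{lem:stdkl}; the two bounds are then combined with positive weights. Part 2 uses the inclusions $\mathcal{A}^{\mathrm{std}}_\varepsilon,\mathcal{A}^{\mathrm{cond}}_\varepsilon\subseteq\mathcal{A}^{\mathrm{dual}}_\varepsilon$, obtained by zeroing one weight, followed by monotonicity of the infimum over nested sets.
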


\begin{proof}
We prove the two claims in order.

\paragraph{Step 1: visible nuisance dependence is upper-bounded by the conditional KL term.}
By Proposition~\ref{prop:nuisance}, under the conditional Markov assumption
\[
\nuis \to X \to Z
\qquad\text{given } \condvar,
\]
we have
\begin{equation}
\label{eq:pareto_step_vis_1}
I_\theta(Z;\nuis\mid \condvar)
\le
I_\theta(Z;X\mid \condvar).
\end{equation}
By Theorem~\ref{thm:condkl},
\begin{equation}
\label{eq:pareto_step_vis_2}
I_\theta(Z;X\mid \condvar)
\le
\mathbb{E}_{p(x,\condval)}
\mathrm{KL}\!\bigl(q_\theta(z\mid x)\,\|\,r_\psi(z\mid \condval)\bigr)
=
J_{\mathrm{cond}}(\theta,\psi).
\end{equation}
Combining \eqref{eq:pareto_step_vis_1} and \eqref{eq:pareto_step_vis_2}, we obtain
\begin{equation}
\label{eq:pareto_step_vis_final}
H_{\mathrm{vis}}(\theta)
=
I_\theta(Z;\nuis\mid \condvar)
\le
J_{\mathrm{cond}}(\theta,\psi).
\end{equation}

\paragraph{Step 2: conditional-invisible condition-level encoding is upper-bounded by the standard KL term.}
Under the structural regime of Appendix~\ref{subsec:invisible_condition_level_encoding}
(Proposition~\ref{prop:invisible_condition_level_encoding}), the condition
\[
H(B\mid \condvar)=0
\]
specifies that \(B\) is a condition-level component determined by \(\condvar\).
This condition explains why such a component can be invisible to the conditional
axis \(I(Z;X\mid\condvar)\), but it is not needed to establish the data-processing
step. By the encoder-induced Markov convention in Appendix~\ref{app:notation},
the stochastic encoder \(q_\theta(z\mid x)\) gives
\[
B \to X \to Z.
\]
Therefore, by the data processing inequality,
\begin{equation}
\label{eq:pareto_step_inv_1}
I_\theta(Z;B)\le I_\theta(Z;X).
\end{equation}
By Lemma~\ref{lem:stdkl},
\begin{equation}
\label{eq:pareto_step_inv_2}
I_\theta(Z;X)
\le
\mathbb{E}_{p(x)}
\mathrm{KL}\!\bigl(q_\theta(z\mid x)\,\|\,r_0(z)\bigr)
=
J_{\mathrm{std}}(\theta).
\end{equation}
Combining \eqref{eq:pareto_step_inv_1} and \eqref{eq:pareto_step_inv_2}, we obtain
\begin{equation}
\label{eq:pareto_step_inv_final}
H_{\mathrm{inv}}(\theta)
=
I_\theta(Z;B)
\le
J_{\mathrm{std}}(\theta).
\end{equation}

\paragraph{Step 3: weighted combination of the two axes.}
Multiply \eqref{eq:pareto_step_vis_final} by \(\lambda_{\mathrm{vis}}>0\) and
\eqref{eq:pareto_step_inv_final} by \(\lambda_{\mathrm{inv}}>0\), then add the two inequalities:
\begin{align}
\lambda_{\mathrm{vis}}H_{\mathrm{vis}}(\theta)
+
\lambda_{\mathrm{inv}}H_{\mathrm{inv}}(\theta)
&\le
\lambda_{\mathrm{vis}}J_{\mathrm{cond}}(\theta,\psi)
+
\lambda_{\mathrm{inv}}J_{\mathrm{std}}(\theta)
\nonumber\\
&=
U_\lambda(\theta,\psi).
\label{eq:pareto_weighted_bound}
\end{align}
By the definition of \(H_\lambda(\theta)\), this is exactly
\[
H_\lambda(\theta)\le U_\lambda(\theta,\psi),
\]
which proves \eqref{eq:pareto_pointwise_bound}.

\paragraph{Step 4: family inclusions.}
By construction, the std-only family is obtained from the dual scalarization by setting
\(\beta_{\mathrm{cond}}=0\), and the cond-only family is obtained by setting
\(\beta_{\mathrm{std}}=0\). Therefore,
\begin{equation}
\label{eq:pareto_family_inclusion}
\mathcal{A}^{\mathrm{std}}_\varepsilon
\subseteq
\mathcal{A}^{\mathrm{dual}}_\varepsilon,
\qquad
\mathcal{A}^{\mathrm{cond}}_\varepsilon
\subseteq
\mathcal{A}^{\mathrm{dual}}_\varepsilon.
\end{equation}

\paragraph{Step 5: infimum comparison over nested families.}
A basic property of the infimum is the following:
if \(A\subseteq D\) and \(f\) is any extended-real-valued function, then
\begin{equation}
\label{eq:pareto_inf_property}
\inf_{x\in D} f(x)\le \inf_{x\in A} f(x).
\end{equation}
Apply \eqref{eq:pareto_inf_property} first with
\[
D=\mathcal{A}^{\mathrm{dual}}_\varepsilon,
\qquad
A=\mathcal{A}^{\mathrm{std}}_\varepsilon,
\qquad
f(\theta,\omega,\psi)=U_\lambda(\theta,\psi).
\]
Using \eqref{eq:pareto_family_inclusion}, we obtain
\[
\inf_{(\theta,\omega,\psi)\in\mathcal{A}^{\mathrm{dual}}_\varepsilon}
U_\lambda(\theta,\psi)
\le
\inf_{(\theta,\omega,\psi)\in\mathcal{A}^{\mathrm{std}}_\varepsilon}
U_\lambda(\theta,\psi),
\]
that is,
\[
\Phi_{\mathrm{dual}}^\star(\varepsilon;\lambda)
\le
\Phi_{\mathrm{std}}^\star(\varepsilon;\lambda).
\]

Apply \eqref{eq:pareto_inf_property} again with
\[
D=\mathcal{A}^{\mathrm{dual}}_\varepsilon,
\qquad
A=\mathcal{A}^{\mathrm{cond}}_\varepsilon,
\qquad
f(\theta,\omega,\psi)=U_\lambda(\theta,\psi).
\]
Using \eqref{eq:pareto_family_inclusion}, we obtain
\[
\inf_{(\theta,\omega,\psi)\in\mathcal{A}^{\mathrm{dual}}_\varepsilon}
U_\lambda(\theta,\psi)
\le
\inf_{(\theta,\omega,\psi)\in\mathcal{A}^{\mathrm{cond}}_\varepsilon}
U_\lambda(\theta,\psi),
\]
that is,
\[
\Phi_{\mathrm{dual}}^\star(\varepsilon;\lambda)
\le
\Phi_{\mathrm{cond}}^\star(\varepsilon;\lambda).
\]

Combining the two displayed inequalities proves \eqref{eq:pareto_value_compare}.
\end{proof}

\paragraph{Tightness and equality cases.}
The pointwise bound in Theorem~\ref{thm:pareto_upper_bound} is tight only when both the data-processing gaps and the variational prior-mismatch gaps vanish.

For the visible nuisance axis, the proof uses the chain
\[
I_\theta(Z;\nuis\mid \condvar)
\le
I_\theta(Z;X\mid \condvar)
\le
J_{\mathrm{cond}}(\theta,\psi).
\]
Under the conditional Markov chain \(\nuis\to X\to Z\) given \(\condvar\), the first inequality is tight when
\[
I_\theta(Z;X\mid \nuis,\condvar)=0.
\]
The second inequality is tight when the conditional prior is exactly calibrated to the condition-wise aggregated posterior,
\[
r_\psi(z\mid \condval)=\aggq(z\mid \condval)
\quad
\text{for }p(\condval)\text{-a.e. }\condval.
\]

For the conditional-invisible axis controlled through global capacity, the proof uses
\[
I_\theta(Z;B)
\le
I_\theta(Z;X)
\le
J_{\mathrm{std}}(\theta).
\]
Under the encoder-induced Markov chain \(B\to X\to Z\) from Appendix~\ref{app:notation}, the first inequality is tight when
\[
I_\theta(Z;X\mid B)=0.
\]
The second inequality is tight when the global prior is exactly calibrated to the aggregated posterior,
\[
r_0(z)=\aggq(z).
\]

Therefore, equality in the weighted pointwise bound
\[
H_\lambda(\theta)=U_\lambda(\theta,\psi)
\]
requires the corresponding tightness conditions on both axes, since
\(\lambda_{\mathrm{vis}},\lambda_{\mathrm{inv}}>0\).
The value comparison
\[
\Phi_{\mathrm{dual}}^\star(\varepsilon;\lambda)
\le
\Phi_{\mathrm{std}}^\star(\varepsilon;\lambda),
\qquad
\Phi_{\mathrm{dual}}^\star(\varepsilon;\lambda)
\le
\Phi_{\mathrm{cond}}^\star(\varepsilon;\lambda)
\]
is instead a family-inclusion statement. Equality there occurs only when enlarging the supported family from the single-term subfamily to the dual family does not reduce the infimum of \(U_\lambda\).

\paragraph{Interpretation.}
The theorem should be read as a statement about \emph{provable upper bounds on residual information},
not as a universal claim about realized residual information or realized task performance.
Its content is the following:
under a task-adequacy constraint, the dual objective provides provable upper bounds of two residual-information axes: visible nuisance dependence and conditional-invisible condition-level encoding,
whereas the two single-term families are only degenerate subfamilies of that
Pareto control class.

\paragraph{Connection to the dual objective.}
The theorem sharpens the earlier synthesis logic in three ways.
First, it makes explicit the two residual-information axes behind the dual objective:
visible nuisance dependence and conditional-invisible condition-level encoding.
Second, it turns the earlier qualitative complementarity argument into the pointwise upper-bound
statement \(H_\lambda(\theta)\le U_\lambda(\theta,\psi)\).
Third, it formalizes the family-level advantage of dual regularization through the value comparison
\eqref{eq:pareto_value_compare}, which is a constrained Pareto statement under task adequacy.

\subsection{Loss-agnostic interpretation of the dual regularizer}
\label{app:universality}

This subsection clarifies the sense in which the proposed dual-bottleneck objective admits a loss-agnostic information-theoretic interpretation. A central observation is that the two KL terms play consistent representational roles across supervised objectives: the standard KL regularizes global information capacity, whereas the conditional KL regularizes within-condition information relative to the label-induced partition.

\begin{proposition}[Regularizer-level universality]
\label{prop:loss_agnostic}
Consider any supervised objective with target \(Y\), and suppose:
(i) there exists a finite label-induced partition \(\condvar=\partmap(Y)\);
(ii) the encoder/prior families are such that the displayed KL terms are well-defined;
(iii) the aggregated posteriors \(\aggq(z)\) and \(\aggq(z\mid \condval)\) exist;
(iv) the task loss \(\ell_{\mathrm{task}}(g_\omega(z),y)\) is well-defined.

Then the dual-bottleneck objective is a well-defined regularized supervised objective.
Moreover, the information-theoretic roles of the two KL regularizers are independent of the
specific choice of \(\ell_{\mathrm{task}}\): the standard KL controls global information
bandwidth up to prior mismatch, while the conditional KL controls within-condition information
up to condition-wise prior mismatch.
\end{proposition}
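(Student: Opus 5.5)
The plan is to treat the proposition as two separate claims: well-definedness of the objective, and the loss-agnostic roles of the two KL terms. For well-definedness, I would expand \(\mathcal{L}(\theta,\omega,\psi)\) from Eq.~\eqref{eq:finalobjective} into its three summands. Assumption (iv) makes the task risk \(R_{\mathrm{task}}(\theta,\omega)=\mathbb{E}_{p(x,y)q_\theta(z\mid x)}[\ell_{\mathrm{task}}(g_\omega(z),y)]\) well-defined (possibly \(+\infty\), taken in the extended reals). Assumption (ii) makes \(J_{\mathrm{std}}(\theta)\) and \(J_{\mathrm{cond}}(\theta,\psi)\) well-defined and nonnegative. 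Assumption (i) ensures that \(\condval=\partmap(y)\) is a finite index, so \(r_\psi(z\mid\condval)\) is specified and \(p(x,\condval)\) is a legitimate joint law obtained by pushing \(p(x,y)\) forward through \(\partmap\). Since \(\beta_{\mathrm{std}},\beta_{\mathrm{cond}}\ge 0\) and the KL terms are nonnegative, no \(\infty-\infty\) ambiguity can occur. The sum is therefore a well-defined element of \((-\infty,\infty]\) whenever the task loss is bounded below. I would state that bounded-below assumption explicitly as the reading of (iv).

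For the loss-agnostic roles, the key observation is that neither \(J_{\mathrm{std}}\) nor \(J_{\mathrm{cond}}\) contains \(\omega\), \(g_\omega\), or \(\ell_{\mathrm{task}}\). Both are functionals of \((p(x,\condval), q_\theta, r_0, r_\psi)\) alone. Using assumption (iii), I would invoke Lemma~\ref{lem:stdkl} to write \(J_{\mathrm{std}}(\theta)=I_\theta(Z;X)+\mathrm{KL}(\aggq(z)\,\|\,r_0(z))\). I would then invoke Theorem~\ref{thm:condkl} to write \(J_{\mathrm{cond}}(\theta,\psi)=I_\theta(Z;X\mid\condvar)+\mathbb{E}_{p(\condval)}\mathrm{KL}(\aggq(z\mid\condval)\,\|\,r_\psi(z\mid\condval))\). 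Because these identities hold for every encoder and every prior, and their proofs never reference the task head or loss, they hold verbatim under any choice of \(\ell_{\mathrm{task}}\). This gives the precise meaning of ``controls global bandwidth up to prior mismatch'' and ``controls within-condition information up to condition-wise prior mismatch''. For context, I would add via Theorem~\ref{thm:decomp} that the two controlled quantities are tied by \(I(Z;X)=I(Z;\condvar)+I(Z;X\mid\condvar)\), again independently of the loss.

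The main obstacle is not technical but one of formalization. ``Information-theoretic roles independent of \(\ell_{\mathrm{task}}\)'' must be turned into a checkable statement. I would make it precise as follows: for two task losses \(\ell,\ell'\) and the same \((\theta,\psi)\), the regularizer values and their decompositions coincide. Any dependence on the loss enters only through which \(\theta\) the optimizer selects, not through what the regularizers measure. A secondary subtlety is measure-theoretic: both decompositions add and subtract \(\log\aggq\), which requires \(\aggq\) to have finite mismatch KL. I would handle this by noting that finiteness of \(J\) together with nonnegativity of both summands forces each summand to be finite, which justifies the splitting under the standing regularity assumptions.
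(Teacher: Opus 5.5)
Your proposal is correct and follows the paper's own argument. Well-definedness is read off from (i)--(iv), and loss-independence follows because Lemma~\ref{lem:stdkl} and Theorem~\ref{thm:condkl} are identities in \((p(x,\condval),q_\theta,r_0,r_\psi)\) alone and never involve \(g_\omega\) or \(\ell_{\mathrm{task}}\).

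Your added care is a sound refinement the paper omits: reading (iv) as a bounded-below loss so the sum lies in \((-\infty,\infty]\), and using nonnegative weights to rule out \(\infty-\infty\). One caveat: your finiteness-of-summands remark presupposes the split it is meant to justify. What actually licenses adding and subtracting \(\log\aggq\) is the paper's standing finiteness assumptions.
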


\begin{proof}
Well-definedness follows directly from assumptions (i)--(iv): a finite label-induced partition
\(\condvar=\partmap(Y)\), well-defined KL terms, existing aggregated posteriors, and a
well-defined task loss together ensures that each term in the dual-bottleneck objective is
properly defined.

For the regularizer-level loss-agnostic interpretation, apply the decomposition identities for
the two KL terms. By Lemma~\ref{lem:stdkl}, the unconditional KL controls \(I_\theta(Z;X)\)
up to a global prior mismatch; by Theorem~\ref{thm:condkl}, the conditional KL controls
\(I_\theta(Z;X\mid \condvar)\) up to a condition-wise prior mismatch.
These identities depend on the encoder and prior families rather than on the specific form of
\(\ell_{\mathrm{task}}\). Therefore, the representational roles of the two KL regularizers are
independent of the supervised loss choice at the regularizer level.
\end{proof}

\paragraph{Remark on loss dependence.}
The information-theoretic interpretation above concerns the regularizing action imposed on the
encoded representation. The eventual task-level behaviour of a learned model is jointly determined
by the regularizer, the predictive loss, the model family, and the optimization setting.
In particular, a well-defined supervised loss can still be task-agnostic; for example,
\[
\ell_{\mathrm{task}}(g(z),y)\equiv 0,
\]
or more generally \(\ell_{\mathrm{task}}(g(z),y)=h(g(z))\), which is independent of \(y\).
Such losses satisfy well-definedness but provide no task-discriminative learning signal.
Hence, this proposition is a regularizer-level universality statement, not a task-performance
invariance guarantee.

\paragraph{Interpretation.}
These results characterize the proposed method as a loss-agnostic representation regularizer in the following precise sense: the two KL terms preserve stable information-theoretic roles across supervised objectives, thereby defining a structured mutual-information-based mechanism for constraining encoded features. This interpretation is centered on how the regularizer shapes the learned representation through complementary control of global capacity and within-condition variability.

\subsection{Scope and non-claims}
\label{app:scope}

For clarity, we explicitly state what the present theory does and does not establish.

\paragraph{Established in this note.}
\begin{itemize}
    \item Exact decomposition of \(I(Z;X)\) under a label-induced partition.
    \item Existence-level insufficiency of a single global bottleneck for the stated design goal.
    \item Exact decomposition of the conditional KL term into within-condition mutual information and prior mismatch.
    \item A symmetric structured conditional prior with explicit center separation.
    \item An objective-level justification of the dual-bottleneck training criterion.
\end{itemize}

\paragraph{Not established in this note.}
\begin{itemize}
    \item Universal superiority of the conditional KL term over all possible condition-aware regularizers.
    \item A direct task-specific performance analysis under particular loss families, model classes, and optimization regimes.
    \item Optimization consistency or estimator optimality in minibatch finite-sample training.
    \item A task-specific comparative analysis of alternative prior geometries and initialization schemes.
\end{itemize}

Stating these non-claims explicitly improves the credibility of the method by aligning the theorem
strength with the actual proof strength.

\subsection{Family-restricted \texorpdfstring{$\Delta_{\mathrm{opt}}$}{Delta-opt}-optimality under a weighted target criterion}
\label{app:family_restricted_epsilon_optimality}

We now restate the comparison result in a form that makes its scope precise.
The result below is a family-restricted $\Delta_{\mathrm{opt}}$-optimality statement on a nonempty local
feasible family. It is not a global optimality theorem over all encoders.

\paragraph{Standing assumptions.}
Throughout this subsection, we inherit the standing regularity assumptions from Appendix~A.1:
all mutual informations and KL divergences below are assumed to be well-defined and finite,
and the aggregated posteriors $\bar q(z)$ and $\bar q(z\mid c)$ are assumed to exist.

\paragraph{Feasible family and target criterion.}
Fix a semantic floor $\kappa\ge 0$, and define
\[
\mathcal Q_\kappa
:=
\left\{
q:\ I_q(Z;C)\ge \kappa
\right\}.
\]
Let
\[
\emptyset \neq \mathcal F_\kappa \subseteq \mathcal Q_\kappa
\]
be a nonempty local feasible family.

For $q\in\mathcal F_\kappa$, define
\[
b(q):=I_q(Z;B),
\qquad
u(q):=I_q(Z;\nuis\mid C),
\]
and for weights satisfying
\[
0\le \lambda_B \le \lambda_U,
\]
define
\[
\mathcal J_{\kappa}^{\lambda_B,\lambda_U}(q)
:=
\lambda_B\,b(q)+\lambda_U\,u(q).
\]

\paragraph{Local assumptions.}
We assume the following.

\begin{enumerate}
    \item \textbf{Refined decomposition.}
    \[
    X=T(S,\nuis,B),\qquad Y=f(S),\qquad C=\partmap(Y)=\partmap(y^\star(X)).
    \]
    By the encoder-induced Markov convention in Appendix~\ref{app:notation}, the decomposition \(I_q(Z;X)=I_q(Z;C)+I_q(Z;X\mid C)\) applies.

    \item \textbf{Condition-level component.}
    The variable \(B\) denotes the conditional-invisible condition-level component from
    Appendix~\ref{subsec:invisible_condition_level_encoding}. We assume \(B\) is \(C\)-measurable, i.e.,
    \[
    H(B\mid C)=0.
    \]
    Thus \(b(q)=I_q(Z;B)\) measures condition-level encoding rather than within-condition
    residual information.

    \item \textbf{Interpretive nuisance visibility only.}
    Conditioned on $C$, the variables satisfy
    \[
    \nuis\to X\to Z.
    \]
    This assumption is used only to justify the interpretation of $u(q)=I_q(Z;\nuis\mid C)$ as a
    within-condition nuisance-retention quantity. It is not used in the algebraic proof below.

    \item \textbf{Bounded identification error.}
    There exist functions $\delta_B(q)$ and $\delta_U(q)$ such that for every
    $q\in\mathcal F_\kappa$,
    \[
    I_q(Z;C)=\kappa+b(q)+\delta_B(q),
    \qquad
    I_q(Z;X\mid C)=u(q)+\delta_U(q),
    \]
    with
    \[
    |\delta_B(q)|\le \eta_B,
    \qquad
    |\delta_U(q)|\le \eta_U
    \qquad \forall q\in\mathcal F_\kappa.
    \]

    \item \textbf{Bounded mismatch oscillation.}
    Define
    \[
    m_0(q):=\mathrm{KL}\!\bigl(\bar q(z)\,\|\,r_0(z)\bigr),
    \qquad
    m_c(q):=\mathbb E_{p(c)}\mathrm{KL}\!\bigl(\bar q(z\mid c)\,\|\,r_\psi(z\mid c)\bigr),
    \]
    and
    \[
    M(q):=\lambda_B\,m_0(q)+(\lambda_U-\lambda_B)\,m_c(q).
    \]
    Assume
    \[
    \operatorname{osc}_{\mathcal F_\kappa}(M)
    :=
    \sup_{q\in\mathcal F_\kappa}M(q)-\inf_{q\in\mathcal F_\kappa}M(q)
    \le \xi
    \]
    for some $\xi\ge 0$.

    \item \textbf{Existence of constrained minimizers.}
    Assume that $\mathcal F_\kappa$ is compact under the chosen topology and that the objectives
    defined below are lower semicontinuous on $\mathcal F_\kappa$.
\end{enumerate}

\paragraph{Dual and scalar regularizers.}
Set
\[
\beta_{\mathrm{std}}=\lambda_B,
\qquad
\beta_{\mathrm{cond}}=\lambda_U-\lambda_B.
\]
Define
\[
\mathcal R_{\mathrm{dual}}(q)
:=
\beta_{\mathrm{std}}\,
\mathbb E_{p(x)}\mathrm{KL}\!\left(q(z\mid x)\,\|\,r_0(z)\right)
+
\beta_{\mathrm{cond}}\,
\mathbb E_{p(x,c)}\mathrm{KL}\!\left(q(z\mid x)\,\|\,r_\psi(z\mid c)\right),
\]
and
\[
\mathcal R_{\mathrm{std}}(q)
:=
\mathbb E_{p(x)}
\mathrm{KL}\!\left(q(z\mid x)\,\|\,r_0(z)\right).
\]
Let
\[
q_{\mathrm{dual}}\in\arg\min_{q\in\mathcal F_\kappa}\mathcal R_{\mathrm{dual}}(q),
\qquad
q_{\mathrm{std}}\in\arg\min_{q\in\mathcal F_\kappa}\mathcal R_{\mathrm{std}}(q).
\]

\begin{proposition}[Family-restricted \texorpdfstring{$\Delta_{\mathrm{opt}}$}{Delta-opt}-optimality]
\label{prop:family_restricted_epsilon_optimality}
Define
\[
\Delta_{\mathrm{opt}}:=2(\lambda_B\eta_B+\lambda_U\eta_U)+\xi.
\]
Then, for every $q\in\mathcal F_\kappa$,
\[
\mathcal J_{\kappa}^{\lambda_B,\lambda_U}(q_{\mathrm{dual}})
\le
\mathcal J_{\kappa}^{\lambda_B,\lambda_U}(q)+\Delta_{\mathrm{opt}}.
\]
In particular,
\[
\mathcal J_{\kappa}^{\lambda_B,\lambda_U}(q_{\mathrm{dual}})
\le
\mathcal J_{\kappa}^{\lambda_B,\lambda_U}(q_{\mathrm{std}})+\Delta_{\mathrm{opt}}.
\]
Thus $q_{\mathrm{dual}}$ is a $\Delta_{\mathrm{opt}}$-optimal solution of the weighted target criterion on the
local feasible family $\mathcal F_\kappa$.
\end{proposition}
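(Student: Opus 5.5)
The plan is to express $\mathcal R_{\mathrm{dual}}$ as the target criterion $\mathcal J_{\kappa}^{\lambda_B,\lambda_U}$ plus a constant, an identification error, and the mismatch term $M$, and then to use the minimality of $q_{\mathrm{dual}}$. First, by Lemma~\ref{lem:stdkl} and Theorem~\ref{thm:condkl}, for every $q\in\mathcal F_\kappa$,
\[
\mathcal R_{\mathrm{dual}}(q)=\lambda_B\bigl(I_q(Z;X)+m_0(q)\bigr)+(\lambda_U-\lambda_B)\bigl(I_q(Z;X\mid C)+m_c(q)\bigr).
\]
The assumption $\lambda_U\ge\lambda_B$ guarantees $\beta_{\mathrm{cond}}\ge 0$, so this is a legitimate nonnegative combination. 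Next I apply Theorem~\ref{thm:decomp}, $I_q(Z;X)=I_q(Z;C)+I_q(Z;X\mid C)$. This gives
\[
\mathcal R_{\mathrm{dual}}(q)=\lambda_B I_q(Z;C)+\lambda_U I_q(Z;X\mid C)+M(q).
\]

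Second, I substitute the bounded identification assumption. With $I_q(Z;C)=\kappa+b(q)+\delta_B(q)$ and $I_q(Z;X\mid C)=u(q)+\delta_U(q)$, this becomes
\[
\mathcal R_{\mathrm{dual}}(q)=\lambda_B\kappa+\mathcal J_{\kappa}^{\lambda_B,\lambda_U}(q)+E(q)+M(q),\qquad E(q):=\lambda_B\delta_B(q)+\lambda_U\delta_U(q).
\]
Here $|E(q)|\le \lambda_B\eta_B+\lambda_U\eta_U$ uniformly on $\mathcal F_\kappa$, using $\lambda_B,\lambda_U\ge 0$. The constant $\lambda_B\kappa$ does not depend on $q$, so it cancels in comparisons.

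Third, the minimizer $q_{\mathrm{dual}}$ exists by the compactness and lower-semicontinuity assumption, and it satisfies $\mathcal R_{\mathrm{dual}}(q_{\mathrm{dual}})\le\mathcal R_{\mathrm{dual}}(q)$ for every $q\in\mathcal F_\kappa$. Rearranging,
\[
\mathcal J(q_{\mathrm{dual}})\le \mathcal J(q)+\bigl(E(q)-E(q_{\mathrm{dual}})\bigr)+\bigl(M(q)-M(q_{\mathrm{dual}})\bigr).
\]
The $E$-difference is at most $2(\lambda_B\eta_B+\lambda_U\eta_U)$. The $M$-difference is at most $\operatorname{osc}_{\mathcal F_\kappa}(M)\le\xi$. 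Together these give the bound $\Delta_{\mathrm{opt}}$. The particular case follows by taking $q=q_{\mathrm{std}}\in\mathcal F_\kappa$.

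I expect no real obstacle; the argument is purely algebraic. The only points needing care are the following:
\begin{itemize}
\item checking the sign condition $\beta_{\mathrm{cond}}\ge0$;
\item confirming that all quantities are finite, which is covered by the standing assumptions, so that subtracting $M(q_{\mathrm{dual}})$ and the $E$ terms is legitimate;
\item noting that the nuisance Markov assumption is not used, consistent with the statement.
\end{itemize}
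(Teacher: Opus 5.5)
Your proposal is correct and follows the paper's proof essentially step for step: the same two KL decompositions, the substitution of $I_q(Z;X)=I_q(Z;C)+I_q(Z;X\mid C)$, the identification substitution, and the comparison via minimality of $q_{\mathrm{dual}}$. The only difference is cosmetic: the paper folds $M$ into a single error term $E$ and bounds $\operatorname{osc}_{\mathcal F_\kappa}(E)$ in one step, whereas you bound the $\delta$-part and the $M$-part separately, which yields the same $\Delta_{\mathrm{opt}}$.
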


\begin{proof}
By the exact KL decompositions,
\[
\mathbb E_{p(x)}
\mathrm{KL}\!\left(q(z\mid x)\,\|\,r_0(z)\right)
=
I_q(Z;X)+m_0(q),
\]
and
\[
\mathbb E_{p(x,c)}
\mathrm{KL}\!\left(q(z\mid x)\,\|\,r_\psi(z\mid c)\right)
=
I_q(Z;X\mid C)+m_c(q).
\]
Hence
\[
\mathcal R_{\mathrm{dual}}(q)
=
\lambda_B\bigl(I_q(Z;X)+m_0(q)\bigr)
+
(\lambda_U-\lambda_B)\bigl(I_q(Z;X\mid C)+m_c(q)\bigr).
\]
Using
\[
I_q(Z;X)=I_q(Z;C)+I_q(Z;X\mid C),
\]
we obtain
\[
\mathcal R_{\mathrm{dual}}(q)
=
\lambda_B I_q(Z;C)+\lambda_U I_q(Z;X\mid C)+M(q).
\]
Substituting
\[
I_q(Z;C)=\kappa+b(q)+\delta_B(q),
\qquad
I_q(Z;X\mid C)=u(q)+\delta_U(q),
\]
gives
\[
\mathcal R_{\mathrm{dual}}(q)
=
\lambda_B\kappa+\mathcal J_{\kappa}^{\lambda_B,\lambda_U}(q)+E(q),
\]
where
\[
E(q):=\lambda_B\delta_B(q)+\lambda_U\delta_U(q)+M(q).
\]
Therefore
\[
\operatorname{osc}_{\mathcal F_\kappa}(E)
\le
2(\lambda_B\eta_B+\lambda_U\eta_U)+\operatorname{osc}_{\mathcal F_\kappa}(M)
\le
2(\lambda_B\eta_B+\lambda_U\eta_U)+\xi
=
\Delta_{\mathrm{opt}}.
\]

Since $q_{\mathrm{dual}}$ minimizes $\mathcal R_{\mathrm{dual}}$ on $\mathcal F_\kappa$, for every
$q\in\mathcal F_\kappa$,
\[
\lambda_B\kappa+\mathcal J_{\kappa}^{\lambda_B,\lambda_U}(q_{\mathrm{dual}})+E(q_{\mathrm{dual}})
\le
\lambda_B\kappa+\mathcal J_{\kappa}^{\lambda_B,\lambda_U}(q)+E(q).
\]
Canceling $\lambda_B\kappa$ yields
\[
\mathcal J_{\kappa}^{\lambda_B,\lambda_U}(q_{\mathrm{dual}})
\le
\mathcal J_{\kappa}^{\lambda_B,\lambda_U}(q)+E(q)-E(q_{\mathrm{dual}}).
\]
Finally,
\[
E(q)-E(q_{\mathrm{dual}})
\le
\operatorname{osc}_{\mathcal F_\kappa}(E)
\le
\Delta_{\mathrm{opt}},
\]
which proves
\[
\mathcal J_{\kappa}^{\lambda_B,\lambda_U}(q_{\mathrm{dual}})
\le
\mathcal J_{\kappa}^{\lambda_B,\lambda_U}(q)+\Delta_{\mathrm{opt}}.
\]
Taking $q=q_{\mathrm{std}}$ gives the second inequality.
\end{proof}

\begin{corollary}[Argmin invariance under constant mismatch remainder]
\label{cor:argmin_invariance_constant_remainder}
Assume in addition that
\[
\eta_B=0,\qquad \eta_U=0,
\]
and that there exists a constant $M_\ast\in\mathbb R$ such that
\[
M(q)\equiv M_\ast
\qquad \forall q\in\mathcal F_\kappa.
\]
Then
\[
\mathcal R_{\mathrm{dual}}(q)
=
\lambda_B\kappa+\mathcal J_{\kappa}^{\lambda_B,\lambda_U}(q)+M_\ast
\qquad \forall q\in\mathcal F_\kappa,
\]
and therefore
\[
\arg\min_{q\in\mathcal F_\kappa}\mathcal R_{\mathrm{dual}}(q)
=
\arg\min_{q\in\mathcal F_\kappa}\mathcal J_{\kappa}^{\lambda_B,\lambda_U}(q).
\]
\end{corollary}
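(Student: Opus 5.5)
The corollary follows from the identity already assembled in the proof of Proposition~\ref{prop:family_restricted_epsilon_optimality}, specialised to zero remainder. I would recompute that identity directly rather than cite the oscillation bound.

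First, for arbitrary $q\in\mathcal F_\kappa$, apply Lemma~\ref{lem:stdkl} to the standard KL term and Theorem~\ref{thm:condkl} to the conditional KL term. With $\beta_{\mathrm{std}}=\lambda_B$ and $\beta_{\mathrm{cond}}=\lambda_U-\lambda_B$ this gives
\[
\mathcal R_{\mathrm{dual}}(q)=\lambda_B\bigl(I_q(Z;X)+m_0(q)\bigr)+(\lambda_U-\lambda_B)\bigl(I_q(Z;X\mid C)+m_c(q)\bigr).
\]
Next, invoke Theorem~\ref{thm:decomp}, which applies by the refined-decomposition assumption and the encoder-induced Markov convention. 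Substituting $I_q(Z;X)=I_q(Z;C)+I_q(Z;X\mid C)$ and collecting terms yields
\[
\mathcal R_{\mathrm{dual}}(q)=\lambda_B I_q(Z;C)+\lambda_U I_q(Z;X\mid C)+M(q).
\]

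Now use the bounded-identification assumption with $\eta_B=\eta_U=0$. This forces $\delta_B\equiv\delta_U\equiv0$ on $\mathcal F_\kappa$, so $I_q(Z;C)=\kappa+b(q)$ and $I_q(Z;X\mid C)=u(q)$. Together with $M(q)\equiv M_\ast$, this gives the displayed identity $\mathcal R_{\mathrm{dual}}(q)=\lambda_B\kappa+\mathcal J_{\kappa}^{\lambda_B,\lambda_U}(q)+M_\ast$ for every $q\in\mathcal F_\kappa$.

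The argmin equality is then immediate. The two functions differ on $\mathcal F_\kappa$ by the $q$-independent finite constant $\lambda_B\kappa+M_\ast$, and finiteness follows from the standing assumptions. Adding a constant preserves the order of values, so $q$ minimises one function exactly when it minimises the other.

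Compactness and lower semicontinuity are needed only to guarantee that both argmin sets are nonempty. The set equality itself does not depend on this, since if one set were empty the other would be too.

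No step is genuinely hard. The only point needing care is that all quantities are finite, so that the subtraction of the constant is legitimate; this is guaranteed by the standing regularity assumptions inherited from Appendix~\ref{app:notation}.
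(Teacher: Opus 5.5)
Your proof is correct and follows the paper's route. The paper likewise sets $\delta_B\equiv\delta_U\equiv 0$, reuses the identity $\mathcal R_{\mathrm{dual}}(q)=\lambda_B I_q(Z;C)+\lambda_U I_q(Z;X\mid C)+M(q)$ from the proof of Proposition~\ref{prop:family_restricted_epsilon_optimality}, and concludes because adding the constant $\lambda_B\kappa+M_\ast$ does not change the minimizer set; you simply re-derive that identity explicitly instead of citing it, and your remarks on finiteness and nonemptiness are harmless additions.
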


\begin{proof}
Under $\eta_B=\eta_U=0$, we have
\[
I_q(Z;C)=\kappa+b(q),\qquad I_q(Z;X\mid C)=u(q)
\]
for all $q\in\mathcal F_\kappa$. The identity then follows from the proof of
Proposition~\ref{prop:family_restricted_epsilon_optimality}. Since $\lambda_B\kappa+M_\ast$ is constant
on $\mathcal F_\kappa$, adding it does not change the minimizer set.
\end{proof}

\begin{corollary}[Exact reduction under zero mismatch]
\label{cor:exact_reduction_zero_mismatch}
Assume in addition that
\[
\eta_B=0,\qquad \eta_U=0,
\]
and
\[
m_0(q)=0,\qquad m_c(q)=0
\qquad \forall q\in\mathcal F_\kappa.
\]
Then
\[
\mathcal R_{\mathrm{dual}}(q)
=
\lambda_B\kappa+\mathcal J_{\kappa}^{\lambda_B,\lambda_U}(q)
\qquad \forall q\in\mathcal F_\kappa,
\]
and consequently
\[
\arg\min_{q\in\mathcal F_\kappa}\mathcal R_{\mathrm{dual}}(q)
=
\arg\min_{q\in\mathcal F_\kappa}\mathcal J_{\kappa}^{\lambda_B,\lambda_U}(q).
\]
\end{corollary}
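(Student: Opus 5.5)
The statement to prove is the last corollary, Corollary~\ref{cor:exact_reduction_zero_mismatch}. I will prove it as a direct specialization of the algebraic identity obtained in the proof of Proposition~\ref{prop:family_restricted_epsilon_optimality}. That proof shows, for every \(q\in\mathcal F_\kappa\) and with no further hypotheses beyond the local assumptions,
\[
\mathcal R_{\mathrm{dual}}(q)=\lambda_B\kappa+\mathcal J_{\kappa}^{\lambda_B,\lambda_U}(q)+E(q),
\qquad
E(q)=\lambda_B\delta_B(q)+\lambda_U\delta_U(q)+M(q).
\]
I will recall this identity first. It rests on Lemma~\ref{lem:stdkl}, Theorem~\ref{thm:condkl}, and the decomposition of Theorem~\ref{thm:decomp}, together with the choice \(\beta_{\mathrm{std}}=\lambda_B\) and \(\beta_{\mathrm{cond}}=\lambda_U-\lambda_B\ge 0\).

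The second step is to kill the remainder \(E\). The hypothesis \(\eta_B=\eta_U=0\) together with \(|\delta_B(q)|\le\eta_B\) and \(|\delta_U(q)|\le\eta_U\) forces \(\delta_B\equiv\delta_U\equiv 0\) on \(\mathcal F_\kappa\). The hypothesis \(m_0\equiv m_c\equiv 0\) gives
\[
M(q)=\lambda_B m_0(q)+(\lambda_U-\lambda_B)m_c(q)=0
\]
for every \(q\). Hence \(E\equiv 0\), and the displayed identity reduces to \(\mathcal R_{\mathrm{dual}}(q)=\lambda_B\kappa+\mathcal J_{\kappa}^{\lambda_B,\lambda_U}(q)\), which is the first claim. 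Alternatively, this case is Corollary~\ref{cor:argmin_invariance_constant_remainder} with \(M_\ast=0\), and I will note this to shorten the argument.

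The third step is the argmin equality. The two objectives on \(\mathcal F_\kappa\) differ by the constant \(\lambda_B\kappa\), which does not depend on \(q\). So \(q\) minimizes one exactly when it minimizes the other, and the two argmin sets coincide as sets. This holds even if they are empty, although assumption~6 guarantees they are not.

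There is no real obstacle. The only point needing care is that the finiteness assumptions inherited from Appendix~A.1 make all terms finite, so subtracting the constant and cancelling are legitimate and no \(\infty-\infty\) issues arise.
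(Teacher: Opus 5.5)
Your proposal is correct and follows the paper's route. The paper observes that $M\equiv 0$ under the hypotheses and invokes Corollary~\ref{cor:argmin_invariance_constant_remainder} with $M_\ast=0$; that corollary rests on the same identity $\mathcal R_{\mathrm{dual}}(q)=\lambda_B\kappa+\mathcal J_{\kappa}^{\lambda_B,\lambda_U}(q)+E(q)$ that you recall, and your direct check that $E\equiv 0$ simply unpacks that reduction one level.
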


\begin{proof}
Under the stated assumptions, $M(q)\equiv 0$ on $\mathcal F_\kappa$, so this is the special case
$M_\ast=0$ of Corollary~\ref{cor:argmin_invariance_constant_remainder}.
\end{proof}

\paragraph{Interpretation.}
The main proposition is a local feasible-family result, not a global optimality theorem.
Its force is the following: on a nonempty local family where the $B/\nuis$ identification error and
prior-mismatch variation are controlled, the representation selected by the dual regularizer is
$\Delta_{\mathrm{opt}}$-optimal for the weighted target criterion. When the identification error vanishes, and
the mismatch remainder is constant, one obtains argmin invariance; when the mismatch remainder
vanishes pointwise, one obtains the exact reduction
\[
\mathcal R_{\mathrm{dual}}=\lambda_B\kappa+\mathcal J_{\kappa}^{\lambda_B,\lambda_U}.
\]

\section{Full experimental results}
\label{app:full_exp}

This appendix is organized into three parts: experimental results, experimental setup, and visualization-based interpretation.

\subsection{Experimental results}
\label{app:exp_results}

Exact protocol details for all tables in this subsection are centralized in Appendix~\ref{app:experimental_setup}.

\subsubsection{BloodMNIST.}
Tab.~\ref{tab:app_bloodmnist_results} reports the full classification results under standard, 1\%-training-data, and 4-shot regimes. Exact protocol details for this table are provided in Appendix~\ref{app:setup_bloodmnist}. Both single-KL variants improve over the \noreg{} baseline in all three settings, while \dualkl{} consistently gives the best BAcc and Macro-F1. The improvement is already visible in the standard regime and becomes substantially larger in the low-data settings, especially for 1\% training data, which is consistent with the view that limited supervision makes the model more vulnerable to unstable residual cues and increases the value of representation-level regularization.

\begin{table}[h]
\centering
\caption{BloodMNIST classification results reported as mean $\pm$ standard deviation. }
\label{tab:app_bloodmnist_results}
\small
\setlength{\tabcolsep}{4.5pt}
\begin{tabular}{llcccc}
\toprule
Setting & Metric & \noreg{} & \condkl{} & \stdkl{} & \dualkl{} \\
\midrule
\multirow{2}{*}{Standard}
& BAcc     & 92.90 $\pm$ 0.34 & 94.36 $\pm$ 0.53 & 94.35 $\pm$ 0.17 & \textbf{94.52 $\pm$ 0.22} \\
& Macro-F1 & 93.14 $\pm$ 0.26 & 94.51 $\pm$ 0.50 & 94.53 $\pm$ 0.18 & \textbf{94.72 $\pm$ 0.19} \\
\midrule
\multirow{2}{*}{1\% Training Data}
& BAcc     & 63.49 $\pm$ 1.07 & 71.84 $\pm$ 1.78 & 68.11 $\pm$ 2.65 & \textbf{72.22 $\pm$ 1.38} \\
& Macro-F1 & 61.43 $\pm$ 0.74 & 69.97 $\pm$ 1.83 & 66.25 $\pm$ 3.01 & \textbf{69.98 $\pm$ 1.85} \\
\midrule
\multirow{2}{*}{4-shot}
& BAcc     & 45.31 $\pm$ 1.77 & 46.50 $\pm$ 3.09 & 47.10 $\pm$ 2.97 & \textbf{48.72 $\pm$ 2.59} \\
& Macro-F1 & 43.19 $\pm$ 1.97 & 44.47 $\pm$ 2.90 & 45.02 $\pm$ 2.44 & \textbf{46.57 $\pm$ 2.58} \\
\bottomrule
\end{tabular}
\end{table}

\subsubsection{External BloodMNIST baselines.}
To test whether the BloodMNIST gains can be explained by generic regularization or empirical representation geometry, we further compare against strong weight decay, head dropout, label smoothing, and Center Loss in Tab.~\ref{tab:app_bloodmnist_external_baselines}. Strong weight decay and dropout represent standard capacity-control or stochastic regularization strategies~\citep{krogh1991simple,srivastava2014dropout}, Label Smoothing is a standard target-smoothing regularizer~\citep{szegedy2016rethinking}, while Center Loss represents an empirical class-compactness objective~\citep{wen2016discriminative}. These baselines are intentionally chosen to preserve the same backbone and task pipeline, avoiding protocol changes that would confound the comparison. Although Center Loss is the strongest external baseline, \textsc{DualKL} consistently outperforms it across all three regimes, with the largest margin under 1\% training data.

\begin{table}[h]
\centering
\caption{External baseline comparison on BloodMNIST. Results are reported as mean $\pm$ standard deviation.}
\label{tab:app_bloodmnist_external_baselines}
\scriptsize
\setlength{\tabcolsep}{5pt}
\begin{tabular}{llccccc}
\toprule
Setting & Metric & Strong WD & Head Dropout & Label Smoothing & Center Loss & \dualkl{} \\
\midrule
\multirow{2}{*}{Standard}
& BAcc
& 93.02 $\pm$ 0.35
& 93.08 $\pm$ 0.63
& 93.05 $\pm$ 0.35
& 93.81 $\pm$ 0.46
& \textbf{94.70 $\pm$ 0.30} \\
& Macro-F1
& 93.11 $\pm$ 0.39
& 93.23 $\pm$ 0.64
& 93.33 $\pm$ 0.32
& 93.95 $\pm$ 0.43
& \textbf{94.88 $\pm$ 0.23} \\
\midrule
\multirow{2}{*}{1\% Training Data}
& BAcc
& 62.21 $\pm$ 1.26
& 63.32 $\pm$ 1.45
& 65.05 $\pm$ 0.97
& 68.85 $\pm$ 1.56
& \textbf{71.29 $\pm$ 1.57} \\
& Macro-F1
& 60.40 $\pm$ 1.53
& 61.79 $\pm$ 1.78
& 62.85 $\pm$ 1.07
& 66.99 $\pm$ 1.23
& \textbf{68.98 $\pm$ 1.36} \\
\midrule
\multirow{2}{*}{4-shot}
& BAcc
& 43.88 $\pm$ 1.36
& 45.22 $\pm$ 2.74
& 45.74 $\pm$ 3.46
& 46.80 $\pm$ 1.95
& \textbf{48.01 $\pm$ 2.95} \\
& Macro-F1
& 42.08 $\pm$ 1.10
& 42.51 $\pm$ 2.60
& 43.75 $\pm$ 3.22
& 44.72 $\pm$ 2.14
& \textbf{45.91 $\pm$ 2.98} \\
\bottomrule
\end{tabular}
\end{table}

\subsubsection{CIFAR-100 low-data natural-image check.}
\label{app:cifar100_lowdata}
To test whether the low-data trend is specific to medical images, we additionally
evaluate CIFAR-100~\citep{krizhevsky2009learning}. Full public natural-image
benchmarks can provide broader class coverage, which may reduce the relative
effect of idiosyncratic residual cues; therefore, we focus on 4\% dataset and 4-shot
settings as a low-data natural-image check. Tab.~\ref{tab:cifar100_lowdata_results}
shows that all KL-regularized variants improve over \noreg{}, and \dualkl{}
achieves the best mean BAcc and Macro-F1 in both settings.

\begin{table}[h]
\centering
\caption{CIFAR-100 low-data classification results. We report mean $\pm$ standard deviation over seeds.}
\label{tab:cifar100_lowdata_results}
\small
\setlength{\tabcolsep}{4.2pt}
\begin{tabular}{llcccc}
\toprule
Setting & Metric & \noreg{} & \condkl{} & \stdkl{} & \dualkl{} \\
\midrule
\multirow{2}{*}{4\% dataset}
& BAcc
& 56.01 $\pm$ 0.54
& 57.79 $\pm$ 0.31
& 57.95 $\pm$ 0.36
& \textbf{58.04 $\pm$ 0.19} \\
& Macro-F1
& 55.87 $\pm$ 0.52
& 57.43 $\pm$ 0.32
& 57.62 $\pm$ 0.38
& \textbf{57.76 $\pm$ 0.19} \\
\midrule
\multirow{2}{*}{4-shot}
& BAcc
& 28.86 $\pm$ 0.92
& 30.04 $\pm$ 0.90
& 30.12 $\pm$ 1.02
& \textbf{30.18 $\pm$ 1.27} \\
& Macro-F1
& 28.79 $\pm$ 0.80
& 29.45 $\pm$ 0.85
& 29.52 $\pm$ 0.96
& \textbf{29.63 $\pm$ 1.24} \\
\bottomrule
\end{tabular}
\end{table}

\subsubsection{CIFAR-100 external baselines.}
We further compare \textsc{DualKL} with generic regularization and empirical
geometry baselines under the same CIFAR-100 low-data protocols. 
Tab.~\ref{tab:cifar100_external_baselines} shows that \textsc{DualKL} gives the
best mean BAcc in both settings and remains comparable to the strongest
external baseline on Macro-F1.

\begin{table}[h]
\centering
\caption{External baseline comparison on CIFAR-100 low-data classification. Results are reported as mean $\pm$ standard deviation over evaluated seeds.}
\label{tab:cifar100_external_baselines}
\small
\setlength{\tabcolsep}{3.2pt}
\begin{tabular}{llccccc}
\toprule
Setting & Metric & Strong WD & Head Dropout & Label Smoothing & Center Loss & \dualkl{} \\
\midrule
\multirow{2}{*}{4\% dataset}
& BAcc
& 55.54 $\pm$ 0.45
& 55.50 $\pm$ 0.27
& 57.14 $\pm$ 0.12
& 57.98 $\pm$ 0.37
& \textbf{58.14 $\pm$ 0.14} \\
& Macro-F1
& 55.44 $\pm$ 0.51
& 55.35 $\pm$ 0.25
& 56.73 $\pm$ 0.19
& 57.80 $\pm$ 0.35
& \textbf{57.86 $\pm$ 0.17} \\
\midrule
\multirow{2}{*}{4-shot}
& BAcc
& 28.94 $\pm$ 1.03
& 29.11 $\pm$ 1.07
& 29.62 $\pm$ 1.47
& 30.50 $\pm$ 1.19
& \textbf{30.51 $\pm$ 1.37} \\
& Macro-F1
& 28.81 $\pm$ 0.89
& 29.02 $\pm$ 1.01
& 28.91 $\pm$ 1.46
& \textbf{29.96 $\pm$ 1.13}
& 29.94 $\pm$ 1.35 \\
\bottomrule
\end{tabular}
\end{table}

\subsubsection{BloodMNIST prior and $\tau$ ablations.}
\label{app:bloodmnist_prior_tau_results}

Together with the 1\% ablation shown in the main text (Sec.~\ref{sec:experiments}, Tab.~\ref{tab:bloodmnist_ablation_1pct}), Tabs.~\ref{tab:4shot_tuned_ablation_meanstd} and \ref{tab:full_ablation_subset_meanstd} report additional BloodMNIST ablations that isolate the practical role of the conditional-prior instantiation and the $\tau$ parameterization within the \textsc{DualKL} family. Across the reported settings, the default configuration with a simplex prior and shared fixed $\tau$ remains the strongest overall among the tested variants. These results support the default simplex-plus-shared-$\tau$ design as a stable and interpretable practical instantiation, motivating its use as the default configuration in our experiments. Exact protocol details are provided in Appendix~\ref{app:setup_bloodmnist_prior_tau}.

\begin{table}[H]
\centering
\caption{4-shot BloodMNIST ablation results reported as mean $\pm$ standard deviation.}
\label{tab:4shot_tuned_ablation_meanstd}
\small
\setlength{\tabcolsep}{4.5pt}
\begin{tabular}{lcc}
\toprule
Configuration & BAcc & Macro-F1 \\
\midrule
\dualkl{} + simplex + shared fixed $\tau$ 
& \textbf{48.66 $\pm$ 1.99} 
& \textbf{46.59 $\pm$ 2.25} \\
\dualkl{} + random + shared fixed $\tau$ 
& 46.91 $\pm$ 2.40 
& 44.51 $\pm$ 2.82 \\
\dualkl{} + empirical + shared fixed $\tau$ 
& 46.97 $\pm$ 2.83 
& 45.19 $\pm$ 0.70 \\
\dualkl{} + simplex + classwise learned $\tau$ 
& 46.43 $\pm$ 1.39 
& 44.55 $\pm$ 0.38 \\
\bottomrule
\end{tabular}
\end{table}

\begin{table}[H]
\centering
\caption{Full-training BloodMNIST ablation results reported as mean $\pm$ standard deviation.}
\label{tab:full_ablation_subset_meanstd}
\small
\setlength{\tabcolsep}{4.5pt}
\begin{tabular}{lcc}
\toprule
Configuration & BAcc & Macro-F1 \\
\midrule
\dualkl{} + simplex + shared fixed $\tau$ & \textbf{94.74 $\pm$ 0.28} & \textbf{94.86 $\pm$ 0.12} \\
\dualkl{} + random + shared fixed $\tau$ & 94.56 $\pm$ 0.24 & 94.63 $\pm$ 0.11 \\
\dualkl{} + empirical + shared fixed $\tau$ & 94.52 $\pm$ 0.16 & 94.62 $\pm$ 0.16 \\
\dualkl{} + simplex + classwise learned $\tau$ & 94.68 $\pm$ 0.35 & 94.79 $\pm$ 0.32 \\
\bottomrule
\end{tabular}
\end{table}
\FloatBarrier

\subsubsection{Retinal vessel segmentation.}
Tab.~\ref{tab:app_retinal_results} shows a consistent pattern across DRIVE, STARE, and CHASEDB1. Exact protocol details for this table are provided in Appendix~\ref{app:setup_retinal}. \textsc{DualKL} achieves the best IoU and Dice on all three datasets, while both single-KL variants also improve over the unregularized baseline. Together with the larger low-data classification gains, these dense-prediction results show that the proposed regularizer provides stable improvements across task types.

\begin{table}[!htbp]
\centering
\caption{Repeated-seed retinal vessel segmentation results reported as mean $\pm$ standard deviation.}
\label{tab:app_retinal_results}
\small
\setlength{\tabcolsep}{4.5pt}
\begin{tabular}{llcccc}
\toprule
Dataset & Metric & \noreg{} & \stdkl{} & \condkl{} & \dualkl{} \\
\midrule
\multirow{2}{*}{DRIVE}
& IoU  & 0.5558$\pm$0.0061 & 0.5823$\pm$0.0037 & 0.5820$\pm$0.0044 & \textbf{0.5826$\pm$0.0053} \\
& Dice & 0.7143$\pm$0.0051 & 0.7359$\pm$0.0030 & 0.7356$\pm$0.0035 & \textbf{0.7362$\pm$0.0042} \\
\midrule
\multirow{2}{*}{STARE}
& IoU  & 0.5049$\pm$0.0311 & 0.5302$\pm$0.0121 & 0.5312$\pm$0.0156 & \textbf{0.5318$\pm$0.0145} \\
& Dice & 0.6701$\pm$0.0278 & 0.6926$\pm$0.0104 & 0.6934$\pm$0.0134 & \textbf{0.6939$\pm$0.0125} \\
\midrule
\multirow{2}{*}{CHASEDB1}
& IoU  & 0.5525$\pm$0.0066 & 0.5724$\pm$0.0043 & 0.5720$\pm$0.0034 & \textbf{0.5739$\pm$0.0044} \\
& Dice & 0.7116$\pm$0.0054 & 0.7280$\pm$0.0035 & 0.7276$\pm$0.0028 & \textbf{0.7291$\pm$0.0036} \\
\bottomrule
\end{tabular}
\end{table}

\subsubsection{GLAS.}
Tab.~\ref{tab:app_glas_results} extends the segmentation evaluation to both full-data and few-shot regimes. Exact protocol details for this table are provided in Appendix~\ref{app:setup_glas}. \textsc{DualKL} consistently outperforms the unregularized baseline and \textsc{StdKL} in all reported settings, with the clearest gain appearing in the 4-shot regime. This mirrors the BloodMNIST trend and provides additional evidence for the same low-supervision advantage beyond classification, suggesting that the proposed regularizer is particularly beneficial when supervision is limited.

\begin{table}[!htbp]
\centering
\caption{GLAS segmentation results in full-data and few-shot regimes, reported as mean $\pm$ standard deviation.}
\label{tab:app_glas_results}
\small
\setlength{\tabcolsep}{4.5pt}
\begin{tabular}{llcccc}
\toprule
Setting & Metric & \noreg{} & \stdkl{} & \condkl{} & \dualkl{} \\
\midrule
\multirow{2}{*}{Full}
& IoU  & 0.8131$\pm$0.0008 & 0.8149$\pm$0.0040 & 0.8118$\pm$0.0051 & \textbf{0.8158$\pm$0.0017} \\
& Dice & 0.8957$\pm$0.0005 & 0.8970$\pm$0.0032 & 0.8949$\pm$0.0031 & \textbf{0.8976$\pm$0.0011} \\
\midrule
\multirow{2}{*}{8-shot}
& IoU  & 0.6009$\pm$0.0145 & 0.6039$\pm$0.0141 & 0.5439$\pm$0.0471 & \textbf{0.6106$\pm$0.0111} \\
& Dice & 0.7489$\pm$0.0116 & 0.7506$\pm$0.0117 & 0.7015$\pm$0.0394 & \textbf{0.7562$\pm$0.0087} \\
\midrule
\multirow{2}{*}{4-shot}
& IoU  & 0.5368$\pm$0.0248 & 0.5460$\pm$0.0221 & 0.5440$\pm$0.0202 & \textbf{0.5478$\pm$0.0220} \\
& Dice & 0.6958$\pm$0.0218 & 0.7036$\pm$0.0188 & 0.7019$\pm$0.0175 & \textbf{0.7049$\pm$0.0190} \\
\bottomrule
\end{tabular}
\end{table}

\subsection{Experimental Setup}
\label{app:experimental_setup}

All experiments are implemented in PyTorch~\citep{paszke2019pytorch} using one
GeForce RTX 5090 GPU, 32GB RAM, and an AMD Ryzen 9 9950X3D CPU. All experiments are trained with automatic mixed precision (AMP) using PyTorch autocast and gradient scaling~\citep{micikevicius2018mixed}. The approximate training speed is 0.3--1.0 seconds per epoch for classification task and 0.2--0.7 seconds per epoch for the segmentation benchmarks, depending on the dataset and data regime. The total training compute for the reported experiments is approximately 50 GPU-hours, excluding failed or preliminary debugging runs.
Within each dataset and experimental regime, we keep the data split, backbone family, optimizer, training schedule, and evaluation metrics fixed unless explicitly stated otherwise. 
For the objective-level comparisons in Tabs.~\ref{tab:app_bloodmnist_results}, \ref{tab:cifar100_lowdata_results}, \ref{tab:app_retinal_results}, and \ref{tab:app_glas_results}, the compared variants differ only in the enabled KL controls and their coefficients. Unless otherwise stated, classification experiments use the identity partition \(C=Y\), while segmentation experiments use the \(3\times3\) foreground-count condition \(C_u=\sum_{v\in\mathcal N(u)}Y_v\) with \(K=10\). For auxiliary ablations and external baselines, the same base protocol is used while varying only the factor named in the corresponding table. External baseline strengths are validation-selected and fixed; for segmentation, KL coefficients are selected on training-only internal splits before evaluation on the fixed validation splits below. We use 10 seeds for the main objective-level ablations, 3 seeds for prior/\(\tau\)-parameterization ablations, and 5 seeds for external-baseline comparisons; within each table, all methods share the same seed count, split protocol, and fixed hyperparameters.

\subsubsection{BloodMNIST classification}
\label{app:setup_bloodmnist}

For the BloodMNIST classification experiments in Tab.~\ref{tab:app_bloodmnist_results}, we use the same ResNet-based classification architecture~\citep{he2016resnet} for all compared methods, with input resolution $28\times 28$ and a batch size of 1024. All models are optimized using Adam~\citep{kingma2015adam} with an initial learning rate of $10^{-4}$, weight decay $10^{-4}$, and a cosine annealing schedule~\citep{loshchilov2017sgdr} with a minimum learning rate of 0. The base classification loss is CrossEntropyLoss, and performance is evaluated using balanced accuracy (BAcc) and Macro-F1.

In the standard regime, all methods use the full BloodMNIST split, with 11{,}959 training images, 1{,}712 validation images, and 3{,}421 test images. We train all methods for 100 epochs, and compare \noreg{} $(0,0)$, \condkl{} $(0,0.003)$, \stdkl{} $(0.003,0)$, and \dualkl{} $(0.003,0.003)$.

In the 1\%-training-data regime, the validation and test sets remain unchanged, while the training set is class-balanced subsampled from the original training split to 20 samples per class, resulting in 160 training images in total. All methods use the same subsampling indices. Under this protocol, we train all methods for 1600 epochs, and compare \noreg{} $(0,0)$, \condkl{} $(0,0.0027)$, \stdkl{} $(0.003,0)$, and \dualkl{} $(0.0003,0.0027)$.

In the 4-shot regime, the validation and test sets again remain unchanged, while the training set is reduced to 4 samples per class, resulting in 32 training images in total. All methods share the same few-shot sampling result. Under this protocol, we train all methods for 1600 epochs, and compare \noreg{} $(0,0)$, \condkl{} $(0,0.000667)$, \stdkl{} $(0.001333,0)$, and \dualkl{} $(0.001333,0.000667)$.

\subsubsection{CIFAR-100 low-data classification}
\label{app:setup_cifar100}

For the CIFAR-100~\citep{krizhevsky2009learning} results in
Tab.~\ref{tab:cifar100_lowdata_results}, we use fine labels as the supervised
targets. All methods use the same
VIB-ResNet50 classifier with latent dimension 2048, $32\times32$ inputs, optimizer, data split protocol, and metrics.

Training uses random crop with padding 4 and horizontal flip; validation and
test use no random augmentation. We optimize with Adam, initial learning rate
$10^{-4}$, weight decay $10^{-4}$, cosine annealing, CrossEntropyLoss, batch
size 256, and 60 epochs. Performance is evaluated using balanced accuracy
(BAcc) and Macro-F1 on the official CIFAR-100 test split.

The 4\% dataset setting uses 20 samples per class, resulting in 2{,}000 training
images; the 4-shot setting uses 4 samples per class, resulting in 400 training
images. We compare \noreg{} $(0,0)$, \condkl{} $(0,0.0001)$, \stdkl{} $(0.0001,0)$, and 
\dualkl{} $(0.0001,0.0001)$.

For CIFAR-100 external baselines, we use the same data split, VIB-ResNet50
backbone, input resolution, batch size, optimizer, training schedule,
augmentation, and evaluation metrics as the corresponding CIFAR-100
objective-level runs. For these runs, the KL coefficients are set to zero and
only one external regularization mechanism is enabled at a time. Strong WD
increases the weight-decay coefficient, Head Dropout inserts dropout at the
representation head, Label Smoothing uses a smoothed cross-entropy target, and
Center Loss adds a class-center compactness penalty on the latent mean.

\subsubsection{External BloodMNIST baselines.}
The external baselines in Tab.~\ref{tab:app_bloodmnist_external_baselines} use the same BloodMNIST splits, ResNet-based backbone, input resolution, batch size, optimizer, training schedules, and evaluation metrics as Tab.~\ref{tab:app_bloodmnist_results}. 
For these runs, the KL coefficients are set to zero and only one external regularization mechanism is enabled at a time. 
Strong WD~\citep{krogh1991simple} increases the weight-decay coefficient, Head Dropout~\citep{srivastava2014dropout} inserts dropout only at the representation head, Label Smoothing~\citep{szegedy2016rethinking} uses a smoothed cross-entropy target, and Center Loss~\citep{wen2016discriminative} adds a class-center compactness penalty on the latent representation.
We focus on these matched controls because they test generic regularization and empirical feature-geometry explanations without changing the data modality, backbone family, supervision protocol, or requiring additional multi-view or coding-rate training pipelines.
External-baseline strengths are selected once using the same validation protocol as the KL coefficients and then fixed before the 5-seed repeated evaluation. For BloodMNIST full/1\% and CIFAR-100 20-shot/4-shot, we use Strong WD \(=10^{-3}\), Head Dropout \(=0.2\), Label Smoothing \(=0.1\), and Center Loss weight \(=10^{-3}\); for BloodMNIST 4-shot, we use \(3\times10^{-4}\), \(0.1\), \(0.05\), and \(3\times10^{-4}\), respectively.

\subsubsection{BloodMNIST prior and $\tau$ ablation setup}
\label{app:setup_bloodmnist_prior_tau}

For the BloodMNIST prior and $\tau$ ablation experiments reported in
Tab.~\ref{tab:bloodmnist_ablation_1pct} and Tabs.~\ref{tab:4shot_tuned_ablation_meanstd}--\ref{tab:full_ablation_subset_meanstd},
we use a ResNet-based classification architecture with input resolution $28\times 28$
and batch size 1024. All models are optimized using Adam with an initial learning
rate of $10^{-4}$, weight decay $10^{-4}$, and a cosine annealing schedule with
minimum learning rate 0. The base classification loss is CrossEntropyLoss, and
performance is evaluated using balanced accuracy (BAcc) and Macro-F1. In all cases,
we use the \textsc{DualKL} objective and only vary the conditional-prior
instantiation or the $\tau$ parameterization.

We compare the default setting, \textit{\dualkl{} + simplex prior + shared fixed \(\tau\)}, against the alternative configurations reported in Tab.~\ref{tab:bloodmnist_ablation_1pct} and Tabs.~\ref{tab:4shot_tuned_ablation_meanstd}--\ref{tab:full_ablation_subset_meanstd}. For the simplex variant, \(\rho_{\mathrm{raw}}\), \(R_{\mathrm{raw}}\), and \(\priorbias\) are learned, with \(\rho=\operatorname{softplus}(\rho_{\mathrm{raw}})\) and \(\embedmat=\operatorname{qf}(R_{\mathrm{raw}})\), the \(Q\)-factor of a thin QR decomposition. For random and empirical variants, the conditional centers are initialized differently but are then optimized as unconstrained trainable parameters. Random centers are sampled from \(\mathcal{N}(0,0.02^2\Id)\) using a fixed prior seed; empirical centers are initialized as class-wise mean encoder posterior means on the effective training split. All shared-fixed-\(\tau\) variants use \(\tau=1\) without learning \(\tau\); the classwise-\(\tau\) variant learns one positive scalar variance per class.

In the 1\% training data regime, the training split is class-balanced
subsampled to 20 samples per class, resulting in 160 training images in total,
while the validation and test sets remain unchanged. All compared variants use the
same subsampling indices. Under this protocol, all models are trained for 1600
epochs using $(\beta_{\mathrm{std}}, \beta_{\mathrm{cond}}) = (0.0003, 0.0027)$.

In the 4-shot regime, the training split is class-balanced subsampled to 4 samples
per class, resulting in 32 training images in total, while the validation and test
sets remain unchanged. All compared variants use the same subsampling indices.
Under this protocol, all models are trained for 1600 epochs using
$(\beta_{\mathrm{std}}, \beta_{\mathrm{cond}}) = (0.001333, 0.000667)$.

All ablation results are reported as mean $\pm$ standard deviation over the same
set of repeated runs used to summarize the corresponding low-data ablation tables.

\subsubsection{Retinal vessel segmentation}
\label{app:setup_retinal}

For the retinal vessel segmentation experiments in Tab.~\ref{tab:app_retinal_results}, we use the same UNet-based architecture~\citep{ronneberger2015unet} for all compared methods, with input resolution $224\times 224$ and a batch size of 4. Images and masks are directly resized to \(224\times224\), without patch extraction or sliding-window evaluation. The same preprocessing is used for all compared methods. All models are optimized using Adam~\citep{kingma2015adam} with an initial learning rate of $10^{-4}$, weight decay $10^{-4}$, and a cosine annealing schedule~\citep{loshchilov2017sgdr} with a minimum learning rate of $10^{-5}$, and are trained for 800 epochs. The base segmentation loss is BCEDiceLoss, defined as
\begin{equation}
\label{eq:app_bce_dice_loss}
\mathcal{L}_{\mathrm{seg}}=0.5\,\mathcal{L}_{\mathrm{BCE}}+\mathcal{L}_{\mathrm{Dice}},
\end{equation}
where $\mathcal{L}_{\mathrm{BCE}}=\mathrm{BCEWithLogitsLoss}$ and $\mathcal{L}_{\mathrm{Dice}}=\mathrm{DiceLoss}$. The BCE term provides pixel-wise supervision and the Dice term improves overlap-oriented optimization under class imbalance. We report validation IoU and Dice for all retinal vessel experiments.

On DRIVE, all methods use the same fixed official split with 20 training images and 20 validation images. The compared variants are \noreg{} with $(\beta_{\mathrm{std}},\beta_{\mathrm{cond}})=(0,0)$, \stdkl{} with $(0.001,0)$, \condkl{} with $(0,0.001)$, and \dualkl{} with $(0.0004,0.0006)$.

On STARE, all methods use the same split configuration with $\mathrm{stare\_mask\_variant}=\mathrm{ah}$ and $\mathrm{val\_ratio}=0.4$, which yields 12 training images and 8 validation images. Under this protocol, we compare \noreg{} $(0,0)$, \stdkl{} $(0.001,0)$, \condkl{} $(0,0.001)$, and \dualkl{} $(0.0007,0.0005)$.

On CHASEDB1, all methods use the same split configuration with $\mathrm{val\_ratio}=0.27$, resulting in 20 training images and 8 validation images. The compared variants are \noreg{} $(0,0)$, \stdkl{} $(0.001,0)$, \condkl{} $(0,0.001)$, and \dualkl{} $(0.0007,0.0003)$.

\subsubsection{GLAS segmentation}
\label{app:setup_glas}

For the GLAS segmentation experiments in Tab.~\ref{tab:app_glas_results}, we use the same UNet-based architecture~\citep{ronneberger2015unet} for all compared methods, with input resolution $256\times 256$ and a batch size of 4. Images and masks are directly resized to \(256\times256\), without patch extraction or sliding-window evaluation. All models are optimized using Adam~\citep{kingma2015adam} with an initial learning rate of $10^{-4}$, weight decay $10^{-4}$, and a cosine annealing schedule~\citep{loshchilov2017sgdr} with a minimum learning rate of $10^{-5}$. The base segmentation loss is the same BCEDiceLoss used in the retinal vessel experiments (Eq.~\eqref{eq:app_bce_dice_loss}), and performance is evaluated using IoU and Dice.

In the full-data regime, all methods use the same fixed split rather than the official dataset split. Specifically, all 165 images are pooled together, and 45 images are used for validation, resulting in 120 training images and 45 validation images. All GLAS results are reported as validation-set IoU/Dice under this fixed train/validation protocol. We train all methods for 200 epochs. The compared variants are \noreg{} $(0,0)$, \stdkl{} $(10^{-4},0)$, \condkl{} $(0,10^{-4})$, and \dualkl{} $(10^{-4},10^{-4})$.

In the few-shot regimes, we keep the same 120/45 base split fixed and subsample either 4 or 8 training images from the 120-image training pool, while leaving the 45-image validation set unchanged. Thus, 4-shot and 8-shot refer to the total number of training images rather than the number of images per class. We again train all methods for 200 epochs, and compare \noreg{} $(0,0)$, \stdkl{} $(10^{-4},0)$, \condkl{} $(0,10^{-4})$, and \dualkl{} $(10^{-4},10^{-4})$.

\subsection{Visualization-Based Interpretation}
\label{app:vis}

\subsubsection{Visualization Construction and Proxy Interpretation}
\label{app:vis_proxy}

This subsection clarifies how we construct the two visualization families used in this paper: \emph{PCA latent geometry} and \emph{Latent-statistics geometry}. In both families, the plotted features are extracted from the learned representation immediately before the task head $g_\omega$, namely the latent variable $Z$ used in Sec.~\ref{subsec:objective} and Eq.~\eqref{eq:finalobjective}, as indicated by the pipeline in Fig.~\ref{fig:framework}. Concretely, under deterministic inference we use the encoder mean $\mu(x)$ as the latent feature, so the plots reflect the learned representation itself rather than extra sampling noise.

We use a unified representation-unit view across tasks. For classification, one sample corresponds to one latent vector. For segmentation, one valid pixel corresponds to one latent vector at the pre-head feature stage. Therefore, the geometric point density follows the task information granularity (sample-level for classification, pixel-level for segmentation), and all reported geometric summaries are computed on the same representation units used for plotting.

\paragraph{PCA latent geometry.}
Let $\mathcal U$ be the set of representation units, with latent vectors $\{z_u\in\mathbb R^d\}_{u\in\mathcal U}$ and labels $\{y_u\}_{u\in\mathcal U}$. For each dataset/regime, we fit one shared PCA projection jointly over the compared KL settings and then project each setting into this common 2D coordinate system. Thus, cross-panel differences reflect representation changes rather than projection mismatch.
This PCA geometry view is the empirical counterpart of the KL-complementarity comparison illustrated conceptually in Fig.~\ref{fig:latent_compare}.
The two PCA axes are treated as visualization coordinates only; we do not assign direct information-theoretic semantics to these axes.

Given class index set $\mathcal K$ and $S_k=\{u\in\mathcal U: y_u=k\}$, define centroid
\begin{equation}
\label{eq:app_vis_centroid}
c_k = \frac{1}{|S_k|}\sum_{u\in S_k} z_u.
\end{equation}
Let $\mathcal P=\{(a,b): a,b\in\mathcal K,\ a<b\}$ be unordered class pairs. We report:
\begin{equation}
\label{eq:app_vis_within}
\mathrm{Within}
=
\frac{1}{|\mathcal U|}\sum_{u\in\mathcal U}\|z_u-c_{y_u}\|_2^2,
\end{equation}
\begin{equation}
\label{eq:app_vis_between}
\mathrm{Between}
=
\frac{1}{|\mathcal P|}\sum_{(a,b)\in\mathcal P}\|c_a-c_b\|_2,
\end{equation}
\begin{equation}
\label{eq:app_vis_ratio}
\mathrm{Ratio}
=
\frac{\mathrm{Within}}{\max(\mathrm{Between}^2,\delta_{\mathrm{num}})},\qquad \delta_{\mathrm{num}}>0.
\end{equation}
All three quantities are computed in the original latent space before PCA projection. Smaller \textit{Within} and \textit{Ratio}, together with larger \textit{Between}, indicate better compactness--separation geometry. For clarity and to avoid redundant notation, we do not introduce additional anchor-based terms in this subsection; all geometric summaries are directly computed from pre-head representations.
Accordingly, whenever panel text/captions report \textit{Within}, \textit{Between}, or \textit{Ratio}, they refer exactly to Eqs.~\eqref{eq:app_vis_within}--\eqref{eq:app_vis_ratio}.

\paragraph{Latent-statistics geometry.}
In this view, each plotted point is one latent coordinate (not one sample). For latent dimension $d$, let $v_d=\{z_{u,d}:u\in\mathcal U\}$ be its pooled values over representation units. We compute three groups of statistics.

First, feature--label association:
\begin{equation}
\label{eq:app_vis_featlabel}
\mathrm{FeatLabel}_d
=
\sqrt{\frac{\mathrm{SS}_{\mathrm{between}}(v_d)}{\mathrm{SS}_{\mathrm{total}}(v_d)}}.
\end{equation}

\begin{equation}
\mathrm{SS}_{\mathrm{total}}(v_d)
=
\sum_{u \in \mathcal U} (z_{u,d}-\bar z_d)^2,
\qquad
\bar z_d = \frac{1}{|\mathcal U|}\sum_{u \in \mathcal U} z_{u,d},
\end{equation}

\begin{equation}
\mathrm{SS}_{\mathrm{between}}(v_d)
=
\sum_{k\in\mathcal K} n_k(\bar z_{k,d}-\bar z_d)^2,
\qquad
\bar z_{k,d}=\frac{1}{n_k}\sum_{u:y_u=k} z_{u,d}.
\end{equation}

Second, class-separation strength using average pairwise symmetric KL divergence:
\begin{equation}
\label{eq:app_vis_labelsep}
\mathrm{LabelSep}_d
=
\frac{1}{|\mathcal P|}\sum_{(a,b)\in\mathcal P}
\frac{1}{2}
\Bigl(\mathrm{KL}(p_{a,d}\|p_{b,d})+\mathrm{KL}(p_{b,d}\|p_{a,d})\Bigr),
\end{equation}
where $p_{k,d}$ is the class-conditional one-dimensional distribution of dimension $d$ estimated from the corresponding histogram.

Third, label-unexplained dispersion. Let $\mathrm{std}_d$, $\mathrm{cv}_d$, and $\mathrm{mad}_d$ denote standard deviation, coefficient of variation, and median absolute deviation of $v_d$, respectively. Define
\begin{equation}
\label{eq:app_vis_dispersion}
\mathrm{DispersionScore}_d
=
\frac{1}{3}\Bigl(
\operatorname{ranknorm}(\mathrm{std}_d)
+\operatorname{ranknorm}(\mathrm{cv}_d)
+\operatorname{ranknorm}(\mathrm{mad}_d)
\Bigr),
\end{equation}
\begin{equation}
\label{eq:app_vis_intersample}
\mathrm{InterSampleSep}_d
=
\mathrm{DispersionScore}_d\cdot\bigl(1-\mathrm{FeatLabel}_d^2\bigr).
\end{equation}
We then place each latent dimension at
\[
x_d=\mathrm{InterSampleSep}_d,\qquad y_d=\mathrm{LabelSep}_d,
\]
and color it by $\mathrm{FeatLabel}_d$.
Here, $\operatorname{ranknorm}(\cdot)$ denotes rank normalization to $[0,1]$ within the latent dimensions of the same model.

For visualization summarization, we additionally define a heuristic score for each latent dimension:
\begin{equation}
\label{eq:app_vis_skei}
\mathrm{sKEI}_d
=
\frac{\mathrm{LabelSep}_d \cdot \mathrm{FeatLabel}_d}
{(\mathrm{InterSampleSep}_d + 10^{-6}) \cdot \bigl(1 + \sigma^{\mathrm{label}}_d + \mathrm{CV}^{\mathrm{sep}}_d \bigr)},
\end{equation}
where $\sigma^{\mathrm{label}}_d$ denotes the split-wise standard deviation of the feature--label association score, and $\mathrm{CV}^{\mathrm{sep}}_d$ denotes the split-wise coefficient of variation of label separation. Concretely, over the evaluation splits $s\in\mathcal S$ used for visualization statistics,
\[
\sigma^{\mathrm{label}}_d
=
\operatorname{Std}_{s\in\mathcal S}\!\bigl(\mathrm{FeatLabel}^{(s)}_d\bigr),
\qquad
\mathrm{CV}^{\mathrm{sep}}_d
=
\frac{\operatorname{Std}_{s\in\mathcal S}\!\bigl(\mathrm{LabelSep}^{(s)}_d\bigr)}
{\operatorname{Mean}_{s\in\mathcal S}\!\bigl(\mathrm{LabelSep}^{(s)}_d\bigr)+10^{-6}}.
\]
This score is not an estimator of mutual information or downstream task performance. Rather, it serves as a heuristic proxy that rewards latent dimensions with strong label-separable structure and strong feature--label association, while penalizing label-unexplained dispersion and split instability. All sKEI values reported in panel text/captions follow Eq.~\eqref{eq:app_vis_skei}.

\paragraph{Proxy semantics and relation to theory.}
The axis interpretation is proxy-based rather than estimator identity. The x-axis is not a direct estimator of $I(Z;X\mid \condvar)$, but it emphasizes label-unexplained dispersion and therefore serves as an empirical proxy for within-condition residual variation. This matches the informational role of $I(Z;X\mid \condvar)$ in the encoder-induced decomposition of Theorem~\ref{thm:decomp} (Appendix~\ref{app:thm1}) and in the conditional-KL decomposition of Theorem~\ref{thm:condkl} (Appendix~\ref{app:thm3}).

Similarly, the y-axis is not a direct estimator of $I(Z;\condvar)$, but it quantifies condition-level separability and thus serves as an empirical proxy for condition-level structure under
\[
I(Z;X)=I(Z;\condvar)+I(Z;X\mid \condvar).
\]
The color value is also non-redundant with the y-axis: \textit{FeatLabel} measures association strength, while \textit{LabelSep} measures distributional separation. They are complementary summaries rather than duplicates. In this sense, the latent-statistics plot provides an empirical counterpart to the same two-role coordinate system presented schematically in Fig.~\ref{fig:info_plane}. It also complements the KL-complementarity comparison in Fig.~\ref{fig:latent_compare}, while Fig.~\ref{fig:framework} only indicates where the plotted features are extracted in the pipeline, namely the pre-head latent representation $Z$. Under matched protocols (Appendix~\ref{app:experimental_setup}), these visualization families therefore serve as diagnostic support for the decomposition-motivated reading in Sec.~\ref{subsec:interpretation} and Sec.~\ref{subsec:why_both}, and for the optional nuisance interpretation in Sec.~\ref{subsec:nuisance} and Appendix~\ref{app:nuisance} (Proposition~\ref{prop:nuisance}). Correspondingly, Fig.~\ref{fig:bloodmnist_feature_heterogeneity} plays the same interpretive role on fixed low-level input statistics, whereas the latent-statistics geometry extends that logic to learned latent coordinates before the task head.

\subsubsection{Visualization Results Across Four Regimes}
\label{app:vis_results}

Across all four regimes, the two visualization families show a consistent representation-level pattern. The PCA panels are sample-level for classification and pixel-level for segmentation, and indicate that the learned latent geometry becomes more compact within classes and more separable across classes, consistent with the latent-space complementarity direction in Fig.~\ref{fig:latent_compare}; this geometric trend is also consistent with the structured conditional-prior view in Sec.~\ref{subsec:simplex_prior} and Appendix~\ref{app:simplex} (Proposition~\ref{prop:simplex}). The latent-statistics panels are latent-dimension-level summaries and, under the proxy construction in Appendix~\ref{app:vis_proxy} (B3.1), show a redistribution toward lower label-unexplained dispersion, stronger label-separable structure, and stronger feature--label alignment, extending the fixed-statistics intuition in Fig.~\ref{fig:bloodmnist_feature_heterogeneity}. Most importantly, the repeated \textsc{DualKL}-over-single-KL pattern is consistent with the complementarity claim in Sec.~\ref{subsec:why_both}: global-only control is insufficient for isolating within-condition information (Appendix~\ref{app:thm2}, Theorems~\ref{thm:insufficient}--\ref{thm:productspace}), while purely conditional control is insufficient for constraining total global capacity (Theorems~\ref{thm:conditionalinsufficient}--\ref{thm:conditionalinsufficient_product}).

\paragraph{BloodMNIST (standard): PCA latent geometry.}
As shown in Fig.~\ref{fig:app_vis_bloodmnist_pca}, the PCA panels indicate a clear compactness--separation improvement under KL regularization. \stdkl{} and \condkl{} already improve over \noreg{}, while \dualkl{} gives the strongest overall geometric summary in this regime.

\begin{figure}[H]
\centering
\setlength{\tabcolsep}{1pt}
\begin{tabular}{cccc}
\includegraphics[width=0.24\linewidth]{figs/bloodmnist/embedgeo/vib_no_kl.pdf} &
\includegraphics[width=0.24\linewidth]{figs/bloodmnist/embedgeo/vib_std_kl.pdf} &
\includegraphics[width=0.24\linewidth]{figs/bloodmnist/embedgeo/vib_cond_kl.pdf} &
\includegraphics[width=0.24\linewidth]{figs/bloodmnist/embedgeo/vib_std_cond_kl.pdf} \\
\scriptsize \noreg{} & \scriptsize \stdkl{} & \scriptsize \condkl{} & \scriptsize \dualkl{}
\end{tabular}
\caption{BloodMNIST (standard), PCA latent geometry. Representative panel numbers: \noreg{} (BAcc $0.926$, \textit{Within} $709.3$, \textit{Ratio} $0.4369$), \dualkl{} (BAcc $0.952$, \textit{Within} $3.4$, \textit{Ratio} $0.0488$).}
\label{fig:app_vis_bloodmnist_pca}
\end{figure}

\paragraph{BloodMNIST (standard): latent-statistics geometry.}
In Fig.~\ref{fig:app_vis_bloodmnist_stats}, the latent-dimension population is strongly shifted away from high inter-sample separation and toward higher label separation. This is consistent with lower label-unexplained dispersion and stronger condition-level structure under the proxy interpretation in Appendix~\ref{app:vis_proxy}.

\begin{figure}[H]
\centering
\setlength{\tabcolsep}{1pt}
\begin{tabular}{cccc}
\includegraphics[width=0.24\linewidth]{figs/bloodmnist/latentgeo/vib_no_kl_latent_geometry.pdf} &
\includegraphics[width=0.24\linewidth]{figs/bloodmnist/latentgeo/vib_std_kl_latent_geometry.pdf} &
\includegraphics[width=0.24\linewidth]{figs/bloodmnist/latentgeo/vib_cond_kl_latent_geometry.pdf} &
\includegraphics[width=0.24\linewidth]{figs/bloodmnist/latentgeo/vib_std_cond_kl_latent_geometry.pdf} \\
\scriptsize \noreg{} & \scriptsize \stdkl{} & \scriptsize \condkl{} & \scriptsize \dualkl{}
\end{tabular}
\caption{BloodMNIST (standard), latent-statistics geometry. Representative panel medians: \noreg{} ($0.236$, $2.271$, $0.722$, $6.1$) vs \dualkl{} ($0.021$, $8.860$, $0.977$, $320.6$) for (inter-sample, label-sep, feat-label, sKEI).}
\label{fig:app_vis_bloodmnist_stats}
\end{figure}

\paragraph{BloodMNIST (1\% training data): PCA latent geometry.}
The low-data PCA trend in Fig.~\ref{fig:app_vis_bloodmnist1pct_pca} remains consistent with the standard-data regime. \stdkl{} and \condkl{} already improve over \noreg{}, but \dualkl{} provides the strongest combined geometric summary in this low-supervision setting.

\begin{figure}[H]
\centering
\setlength{\tabcolsep}{1pt}
\begin{tabular}{cccc}
\includegraphics[width=0.24\linewidth]{figs/bloodmnist1percent/embedgeo/vib_no_kl.pdf} &
\includegraphics[width=0.24\linewidth]{figs/bloodmnist1percent/embedgeo/vib_std_kl.pdf} &
\includegraphics[width=0.24\linewidth]{figs/bloodmnist1percent/embedgeo/vib_cond_kl.pdf} &
\includegraphics[width=0.24\linewidth]{figs/bloodmnist1percent/embedgeo/vib_std_cond_kl.pdf} \\
\scriptsize \noreg{} & \scriptsize \stdkl{} & \scriptsize \condkl{} & \scriptsize \dualkl{}
\end{tabular}
\caption{BloodMNIST (1\% training data), PCA latent geometry. Representative panel numbers: \noreg{} (BAcc $0.618$, \textit{Within} $632.0$, \textit{Ratio} $1.9702$), \dualkl{} (BAcc $0.737$, \textit{Within} $10.3$, \textit{Ratio} $0.3538$).}
\label{fig:app_vis_bloodmnist1pct_pca}
\end{figure}

\paragraph{BloodMNIST (1\% training data): latent-statistics geometry.}
In Fig.~\ref{fig:app_vis_bloodmnist1pct_stats}, latent-statistics coordinates show a clear population-level redistribution toward a more favorable upper-left regime. This pattern is consistent with the larger low-data gains reported in Tab.~\ref{tab:app_bloodmnist_results}.

\begin{figure}[H]
\centering
\setlength{\tabcolsep}{1pt}
\begin{tabular}{cccc}
\includegraphics[width=0.24\linewidth]{figs/bloodmnist1percent/latentgeo/vib_no_kl_latent_geometry.pdf} &
\includegraphics[width=0.24\linewidth]{figs/bloodmnist1percent/latentgeo/vib_std_kl_latent_geometry.pdf} &
\includegraphics[width=0.24\linewidth]{figs/bloodmnist1percent/latentgeo/vib_cond_kl_latent_geometry.pdf} &
\includegraphics[width=0.24\linewidth]{figs/bloodmnist1percent/latentgeo/vib_std_cond_kl_latent_geometry.pdf} \\
\scriptsize \noreg{} & \scriptsize \stdkl{} & \scriptsize \condkl{} & \scriptsize \dualkl{}
\end{tabular}
\caption{BloodMNIST (1\% training data), latent-statistics geometry. Representative panel medians: \noreg{} ($0.390$, $0.487$, $0.450$, $0.3$) vs \dualkl{} ($0.202$, $2.461$, $0.765$, $4.7$) for (inter-sample, label-sep, feat-label, sKEI).}
\label{fig:app_vis_bloodmnist1pct_stats}
\end{figure}

\paragraph{CHASEDB1: PCA latent geometry.}
Figure~\ref{fig:app_vis_chasedb1_pca} shows the same compactness--separation direction. \textsc{DualKL} gives the strongest compactness--separation trade-off in this regime.

\begin{figure}[H]
\centering
\setlength{\tabcolsep}{1pt}
\begin{tabular}{cccc}
\includegraphics[width=0.24\linewidth]{figs/chasedb1/embedgeo/chasedb1_core_chasedb1_full_v1_none.pdf} &
\includegraphics[width=0.24\linewidth]{figs/chasedb1/embedgeo/chasedb1_core_chasedb1_full_v1_stdkl.pdf} &
\includegraphics[width=0.24\linewidth]{figs/chasedb1/embedgeo/chasedb1_core_chasedb1_full_v1_condkl.pdf} &
\includegraphics[width=0.24\linewidth]{figs/chasedb1/embedgeo/chasedb1_core_chasedb1_full_v1_stdkl_condkl.pdf} \\
\scriptsize \noreg{} & \scriptsize \stdkl{} & \scriptsize \condkl{} & \scriptsize \dualkl{}
\end{tabular}
\caption{CHASEDB1, PCA latent geometry. Representative panel numbers: \noreg{} (IoU $0.557$, \textit{Within} $19.75$, \textit{Ratio} $0.4314$), \dualkl{} (IoU $0.574$, \textit{Within} $18.38$, \textit{Ratio} $0.3082$).}
\label{fig:app_vis_chasedb1_pca}
\end{figure}

\paragraph{CHASEDB1: latent-statistics geometry.}
In Fig.~\ref{fig:app_vis_chasedb1_stats}, KL regularization again moves features away from the \noreg{} regime toward a higher-separation and higher-association region, confirming consistent population-level reorganization.

\begin{figure}[H]
\centering
\setlength{\tabcolsep}{1pt}
\begin{tabular}{cccc}
\includegraphics[width=0.24\linewidth]{figs/chasedb1/latentgeo/chasedb1_core_chasedb1_full_v1_none.pdf} &
\includegraphics[width=0.24\linewidth]{figs/chasedb1/latentgeo/chasedb1_core_chasedb1_full_v1_stdkl.pdf} &
\includegraphics[width=0.24\linewidth]{figs/chasedb1/latentgeo/chasedb1_core_chasedb1_full_v1_condkl.pdf} &
\includegraphics[width=0.24\linewidth]{figs/chasedb1/latentgeo/chasedb1_core_chasedb1_full_v1_stdkl_condkl.pdf} \\
\scriptsize \noreg{} & \scriptsize \stdkl{} & \scriptsize \condkl{} & \scriptsize \dualkl{}
\end{tabular}
\caption{CHASEDB1, latent-statistics geometry. Panel medians show \noreg{} ($0.343$, $1.909$, $0.623$, $2.6$) vs KL ranges: inter-sample $0.195$--$0.211$, label-sep $2.892$--$3.125$, feat-label $0.783$--$0.794$, sKEI $7.3$--$8.9$.}
\label{fig:app_vis_chasedb1_stats}
\end{figure}

\paragraph{STARE: PCA latent geometry.}
As shown in Fig.~\ref{fig:app_vis_stare_pca}, KL-regularized models reduce the elongated \noreg{} spread and improve the compactness--separation summary. This agrees with the separation-oriented PCA trend while pixel-BAcc remains in a narrow range.

\begin{figure}[H]
\centering
\setlength{\tabcolsep}{1pt}
\begin{tabular}{cccc}
\includegraphics[width=0.24\linewidth]{figs/stare/embedgeo/stare_core_stare_try_20260328_none.pdf} &
\includegraphics[width=0.24\linewidth]{figs/stare/embedgeo/stare_core_stare_try_20260328_stdkl.pdf} &
\includegraphics[width=0.24\linewidth]{figs/stare/embedgeo/stare_core_stare_try_20260328_condkl.pdf} &
\includegraphics[width=0.24\linewidth]{figs/stare/embedgeo/stare_core_stare_try_20260328_stdkl_condkl.pdf} \\
\scriptsize \noreg{} & \scriptsize \stdkl{} & \scriptsize \condkl{} & \scriptsize \dualkl{}
\end{tabular}
\caption{STARE, PCA latent geometry. Representative panel numbers: \noreg{} (IoU $0.506$, \textit{Ratio} $0.5400$), \dualkl{} (IoU $0.538$, \textit{Ratio} $0.3840$).}
\label{fig:app_vis_stare_pca}
\end{figure}

\paragraph{STARE: latent-statistics geometry.}
In Fig.~\ref{fig:app_vis_stare_stats}, all KL settings move far from the \noreg{} low-separation regime. Among the KL variants, \condkl{} and \dualkl{} occupy the strongest overall feat-label/sKEI regime.

\begin{figure}[H]
\centering
\setlength{\tabcolsep}{1pt}
\begin{tabular}{cccc}
\includegraphics[width=0.24\linewidth]{figs/stare/latentgeo/stare_core_stare_try_20260328_none.pdf} &
\includegraphics[width=0.24\linewidth]{figs/stare/latentgeo/stare_core_stare_try_20260328_stdkl.pdf} &
\includegraphics[width=0.24\linewidth]{figs/stare/latentgeo/stare_core_stare_try_20260328_condkl.pdf} &
\includegraphics[width=0.24\linewidth]{figs/stare/latentgeo/stare_core_stare_try_20260328_stdkl_condkl.pdf} \\
\scriptsize \noreg{} & \scriptsize \stdkl{} & \scriptsize \condkl{} & \scriptsize \dualkl{}
\end{tabular}
\caption{STARE, latent-statistics geometry. Panel medians show the \noreg{} to KL-regularized transition: inter-sample $0.329\rightarrow0.231$--$0.237$, label-sep $0.904\rightarrow2.748$--$2.857$, feat-label $0.562\rightarrow0.723$--$0.731$, sKEI $1.3\rightarrow4.4$--$5.2$.}
\label{fig:app_vis_stare_stats}
\end{figure}

\subsection{Effect of the Label-Induced Partition}
\label{app:partition_choice}

\paragraph{Dimensionality for high-\(K\) partitions.}
The main segmentation experiments use the foreground-count condition by default, where
\(C_u=\sum_{v\in\mathcal N(u)}Y_v\) and \(K=10\). Thus the default segmentation setting only
requires \(d\ge 9\) for the exact simplex construction. The high-granularity pattern condition
in Tab.~\ref{tab:partition_choice_ablation} is an auxiliary partition-choice ablation rather than the default
segmentation configuration. Since the exact simplex prior requires \(d\ge K-1\), all variants in
this ablation use a matched final stochastic latent/reparameterization dimension \(d=512\), which
satisfies the requirement even for the pattern partition with \(K=512\). The backbone and
segmentation decoder are unchanged; only the final stochastic latent projection/reparameterization
layer is set to the matched dimension for this ablation. Therefore, the comparison in
Tab.~\ref{tab:partition_choice_ablation} varies the condition map \(\pi\) while keeping the non-latent
segmentation architecture matched across the three partition choices. These ablation numbers are
intended to compare feasible choices of \(\pi\), not to redefine the default segmentation protocol
used in the main objective-level comparisons.

Segmentation provides a natural setting where \(C=\pi(Y)\) can be instantiated beyond the identity label partition. In classification, \(Y\) is a
scalar class label and \(C=Y\) is the most direct choice. In segmentation,
however, \(Y\) is a dense mask: pixels with the same foreground/background label
may have different structural roles depending on whether they lie inside a
region, near a boundary, or on a thin structure.

We therefore define a pixel-wise condition from the local target neighborhood,
\[
C_u=\pi(Y_{\mathcal N(u)}),
\]
where \(u\) is a pixel location and \(\mathcal N(u)\) is its \(3\times3\)
ground-truth mask neighborhood. Table~\ref{tab:partition_definitions} summarizes
the three partitions considered in this ablation. Within this ablation, the task loss, backbone, data split,
optimizer, KL weights, and segmentation target are unchanged; only the condition
variable used in the conditional KL term is varied.

\begin{table}[h]
\centering
\caption{Segmentation condition constructions. For each pixel \(u\), the
condition \(C_u\) is induced from the ground-truth mask in a \(3\times3\)
neighborhood \(\mathcal N(u)\).}
\label{tab:partition_definitions}
\small
\setlength{\tabcolsep}{4.5pt}
\begin{tabular}{llcl}
\toprule
Partition & Definition & \(K\) & Interpretation \\
\midrule
Label &
\(C_u = Y_u\) &
2 &
Direct foreground/background pixel label. \\
Count &
\(C_u = \sum_{v\in\mathcal N(u)}Y_v\) &
10 &
Coarse local occupancy and boundary-proximity proxy. \\
Pattern &
\(C_u = \mathrm{bin}(Y_{\mathcal N(u)})\) &
512 &
Exact local shape, boundary, and thin-structure pattern. \\
\bottomrule
\end{tabular}
\end{table}

Tables~\ref{tab:coarse_partition_statistics} and
\ref{tab:pattern_partition_statistics} show that the induced conditions have
substantially different distributions. On the vessel datasets, the direct label
condition is strongly dominated by background pixels, while the count partition
reduces this concentration by separating pure-background, interior, and
boundary-neighborhood cases. The pattern partition exposes a much richer local
condition space, with hundreds of observed nontrivial patterns on CHASEDB1,
DRIVE, and STARE. This richer partition is also naturally long-tailed: many
boundary-related patterns occur sparsely. GLAS shows a different profile, with
large pure-background and pure-foreground masses, reflecting its larger interior
regions. Thus, increasing the granularity of \(C\) makes local structural
differences more explicit, but also introduces stronger sparsity and
dataset-dependent granularity effects.

\begin{table}[h]
\centering
\caption{Train-split concentration statistics for coarse segmentation
conditions. Top-1 mass is reported in percentage; \(H_{\mathrm{norm}}\) denotes
normalized entropy.}
\label{tab:coarse_partition_statistics}
\small
\setlength{\tabcolsep}{5pt}
\begin{tabular}{lcccc}
\toprule
Dataset
& \multicolumn{2}{c}{Label \(K=2\)}
& \multicolumn{2}{c}{Count \(K=10\)} \\
\cmidrule(lr){2-3}\cmidrule(lr){4-5}
& Top-1 & \(H_{\rm norm}\)
& Top-1 & \(H_{\rm norm}\) \\
\midrule
CHASEDB1 & 92.9 & 0.37 & 81.3 & 0.37 \\
DRIVE    & 91.4 & 0.42 & 75.8 & 0.44 \\
GLAS     & 50.3 & 1.00 & 46.5 & 0.45 \\
STARE    & 92.7 & 0.38 & 80.6 & 0.38 \\
\bottomrule
\end{tabular}
\end{table}

\begin{table}[h]
\centering
\caption{Train-split statistics for the exact \(3\times3\) pattern partition
(\(K=512\)). Nontrivial patterns exclude pure-background and pure-foreground
neighborhoods. Mass values are percentages.}
\label{tab:pattern_partition_statistics}
\small
\setlength{\tabcolsep}{4pt}
\begin{tabular}{lccccc}
\toprule
Dataset
& Obs. \(K\)
& Nontriv. \(K\)
& BG mass
& FG mass
& Nontriv. mass \\
\midrule
CHASEDB1 & 489 & 487 & 81.3 & 0.3  & 18.4 \\
DRIVE    & 508 & 506 & 75.8 & 0.3  & 23.9 \\
GLAS     & 269 & 267 & 46.5 & 46.4 & 7.2  \\
STARE    & 480 & 478 & 80.6 & 0.4  & 19.0 \\
\bottomrule
\end{tabular}
\end{table}

Table~\ref{tab:partition_choice_ablation} shows that the choice of \(\pi\)
affects the conditional bottleneck in practice. Non-trivial local partitions are
competitive with, and often better than, the direct pixel-label condition. The
pattern partition gives the best average IoU/Dice and performs best on GLAS and
STARE, whereas the coarser count partition is strongest on CHASEDB1 and DRIVE.
This dataset-dependent ranking is consistent with the statistics above:
fine-grained patterns expose richer local structure, but their long-tailed
distribution can make coarser partitions more stable in some vessel datasets.

These results support the use of the general notation \(C=\pi(Y)\). In dense
prediction, pixels with the same binary label can correspond to different local
structural roles, especially near boundaries or thin structures. The condition
variable is therefore not introduced only for mathematical generality; it can
encode target-induced local structure. The choice of \(\pi\) therefore serves as a practical design dimension of the proposed dual information-control framework.

\begin{table}[h!]
\centering
\caption{Partition-choice ablation for segmentation. We compare the direct
pixel label (\(K=2\)), \(3\times3\) foreground-count condition (\(K=10\)),
and exact binary local-mask pattern (\(K=512\)). Results are validation
IoU/Dice (\%).}
\label{tab:partition_choice_ablation}
\small
\setlength{\tabcolsep}{4.5pt}
\begin{tabular}{lccc}
\toprule
Dataset & Label \(K=2\) & Count \(K=10\) & Pattern \(K=512\) \\
\midrule
CHASEDB1 & 56.75 / 72.38 & \textbf{57.58 / 73.06} & 57.16 / 72.73 \\
DRIVE    & 58.27 / 73.63 & \textbf{58.29 / 73.64} & 58.13 / 73.51 \\
GLAS     & 81.83 / 89.89 & 81.93 / 89.93 & \textbf{82.25 / 90.18} \\
STARE    & 52.77 / 69.06 & 52.85 / 69.13 & \textbf{53.47 / 69.67} \\
\midrule
Average  & 62.40 / 76.24 & 62.66 / 76.44 & \textbf{62.75 / 76.52} \\
\bottomrule
\end{tabular}
\end{table}

\subsection{Sensitivity Experiment to Regularization Weights}
\label{app:beta_sensitivity}
We examine the sensitivity of the objective to the two regularization weights
$(\beta_{\rm std}, \beta_{\rm cond})$ on the full-data BloodMNIST setting.
All other training choices, including the backbone, data split, optimizer,
training schedule, simplex conditional prior, and shared fixed $\tau$, are kept
unchanged. The grid includes the unregularized point, the two single-axis edges,
and the full dual-regularization region.

Fig.~\ref{fig:beta_sensitivity_perf} reports the change in BAcc and Macro-F1
relative to the mean performance over the beta grid. The unregularized point is
below the grid mean, while several moderate regularization settings achieve
above-average performance. Very large weights do not uniformly improve the
scores, indicating that the gains are not obtained by simply increasing the
regularization strength. Overall, the sweep suggests that the reported behavior
is not driven by an isolated choice of
$(\beta_{\rm std}, \beta_{\rm cond})$.

\begin{figure}[h]
    \centering
    \vspace{-5pt}
    \begin{minipage}[t]{0.45\linewidth}
        \centering
        \includegraphics[width=\linewidth]{figs/sensitivity/beta_sensitivity_bacc_delta.pdf}
        \vspace{-5mm}
        {\small (a) $\Delta$BAcc}
    \end{minipage}
    \hspace{0mm}
    \begin{minipage}[t]{0.45\linewidth}
        \centering
        \includegraphics[width=\linewidth]{figs/sensitivity/beta_sensitivity_macro_f1_delta.pdf}
        \vspace{-5mm}
        {\small (b) $\Delta$Macro-F1}
    \end{minipage}
    \vspace{8pt}
    \caption{
    Sensitivity to $(\beta_{\rm std}, \beta_{\rm cond})$ on full-data BloodMNIST.
    Each cell reports the change relative to the grid mean.
    }
    \label{fig:beta_sensitivity_perf}
\end{figure}
\FloatBarrier

\section{Literature review}

\paragraph{Relation to conditional bottlenecks and conditional priors.}
Our formulation is related to prior uses of variational bottlenecks,
conditional information quantities, and conditional latent priors. Deep VIB
\citep{alemi2017deep} provides a variational KL surrogate for global information
bottleneck regularization, and our standard term $J_{\rm std}$ follows this
global-capacity role. However, our motivation is different: under a
label-induced condition variable $C=\pi(Y)$, the global budget $I(Z;X)$
decomposes into condition-level structure $I(Z;C)$ and within-condition
information $I(Z;X\mid C)$, so a single global bottleneck does not specify how
the retained information is allocated.

Conditional bottleneck-style objectives have also appeared in top-venue
representation-learning work. Hwang et al.~\citep{hwang2021multiview} derive
conditional VIB-style terms from a total-correlation objective for multi-view
representation learning, where the conditions correspond to observed views
rather than supervised label-induced partitions. Lee et al.~\citep{lee2021compressive}
incorporate conditional information bottleneck-style  compression into SimCLR and
BYOL to improve self-supervised visual representations and robustness. These
works show that conditional information control is a meaningful design
principle, but their objectives target multi-view calibration or self-supervised
compression. In contrast, our condition variable is induced by supervised labels,
and the conditional KL is used to control within-condition information in a
supervised task representation.

Our use of a condition-dependent prior is also distinct from conditional
generative models such as CVAEs~\citep{sohn2015learning} and semi-supervised
deep generative models~\citep{kingma2014semi}. In those models, conditional
latent variables or priors are part of a generative likelihood or ELBO. In our
method, the conditional prior is not introduced for generative modelling; it is
a variational information-control prior inside a supervised regularizer. Thus,
the novelty is not the existence of a conditional prior alone, but the supervised
dual-bottleneck construction: $J_{\rm std}$ controls global capacity,
$J_{\rm cond}$ controls within-condition information through an exact
decomposition, and the simplex conditional prior provides a symmetric and
controllable geometric instantiation.

\begin{table}[h]
\centering
\caption{Comparison with related bottleneck and conditional-prior formulations.}
\label{tab:related_method_differences}
\small
\setlength{\tabcolsep}{4pt}
\begin{tabular}{p{3.6cm}p{8.1cm}}
\toprule
Related method & How our method differs \\
\midrule

Deep VIB \citep{alemi2017deep}
& Deep VIB provides a variational KL surrogate for a single global information
bottleneck, mainly controlling $I(Z;X)$. Our method keeps this global control
through $J_{\rm std}$, but adds a label-induced conditional control axis
$J_{\rm cond}$ motivated by
$I(Z;X)=I(Z;C)+I(Z;X\mid C)$. Thus, our objective separates global capacity
control from within-condition information control. \\
\midrule

Multi-view conditional VIB-style objectives
\citep{hwang2021multiview}
& These objectives use conditional information terms to calibrate shared
representations across multiple observed views and to handle view imbalance or
missing views. Our conditioning variable is not a view but a supervised
label-induced partition $C=\pi(Y)$, and our goal is not multi-view calibration
but supervised regularization of within-condition residual information. \\
\midrule

Self-supervised compressive visual representations~\citep{lee2021compressive}
& This line of work uses conditional information bottleneck-style objectives to compress
self-supervised representations, typically between augmented views. Our method
uses a supervised condition variable $C=\pi(Y)$ and combines conditional
information control with an explicit global KL term. Hence, the method is a
supervised dual-bottleneck regularizer rather than a self-supervised compression
objective. \\
\midrule

Conditional VAE / deep conditional generative models
\citep{sohn2015learning}
& Conditional VAEs use condition-dependent latent variables or priors inside a
generative likelihood or ELBO for structured output modelling. Our conditional
prior is not used for generation; it is a variational information-control prior
inside a supervised regularizer, where the conditional KL decomposes into
$I(Z;X\mid C)$ plus condition-wise prior mismatch. \\
\midrule

Semi-supervised deep generative models
\citep{kingma2014semi}
& Semi-supervised generative models use class variables and latent variables to
model data density and exploit unlabeled data. Our method does not model
$p(x)$ or require unlabeled data; it regularizes a supervised task representation
by jointly controlling global capacity and within-condition information. \\

\bottomrule
\end{tabular}
\end{table}

%%%%%%%%%%%%%%%%%%%%%%%%%%%%%%%%%%%%%%%%%%%%%%%%%%%%%%%%%%%%

\clearpage

%%%%%%%%%%%%%%%%%%%%%%%%%%%%%%%%%%%%%%%%%%%%%%%%%%%%%%%%%%%%

\end{document}

% --- supplement: appendix.tex ---

\maketitle

\begin{abstract}
In supervised representation learning, the Variational Information Bottleneck (VIB) is a commonly used regularization method that improves generalization by compressing the overall information bandwidth between inputs and latent representations. However, such uniform global compression does not explicitly distinguish task-relevant label-induced structure from unnecessary within-condition variation. To address this limitation, we propose a Conditional Information Bottleneck regularization framework as a complementary extension to standard VIB. Specifically, we formulate a dual-bottleneck objective consisting of a standard KL term and a conditional KL term, which jointly controls the overall representation capacity while further constraining information retained within each condition. Theoretically, we show that the conditional KL term admits an exact decomposition into a within-condition information term and a prior-mismatch term, providing an information-theoretic objective-level justification for the proposed mechanism. In addition, we introduce a structured conditional prior based on regular-simplex prototypes, which injects a symmetric and controllable separability bias into the latent space. Preliminary experiments suggest that the proposed dual-bottleneck structure yields promising regularization effects across multiple supervised vision tasks.
\end{abstract}

\section{Introduction}
\label{sec:intro}

Learning latent representations with strong generalization capabilities is a central objective in deep supervised learning. To this end, the Information Bottleneck (IB) principle and its variational form (VIB) have been widely adopted as a powerful regularization paradigm. The standard VIB limits the overall information bandwidth by minimizing $I(Z;X)$, which essentially performs a uniform and indiscriminate compression of the input information. However, in many supervised scenarios, such a ``one-size-fits-all'' compression is suboptimal. What we truly desire is to preserve the separability induced by a finite label-induced partition, while purposefully suppressing unnecessary within-condition variation.

A single global information bottleneck tends to conflate fundamentally different types of information: it captures both the structural information $I(Z;C)$ required to maintain label-induced separability and the within-condition information $I(Z;X\mid C)$, which may contain nuisance variation or memorization-heavy details. Consequently, relying solely on a single global bottleneck cannot explicitly distinguish these two types of information, and is therefore generally inadequate for independently controlling within-condition nuisance bandwidth.

To address this conceptual limitation, we define a condition variable $C=\phi(Y)$, which represents a finite partition of the input space induced by the labels. Based on this, we propose a dual-bottleneck regularization framework with a rigorous mathematical foundation. Under this framework, the standard KL divergence term is responsible for restricting the overall information bandwidth, while the conditional KL divergence term is used to more directly constrain information retention within each condition, thereby penalizing fine-grained variations that have no direct correspondence to label-induced separability. Furthermore, to stably implement this conditional regularization, we introduce a structured prior initialization scheme based on a regular simplex. Without directly relying on empirical data to construct class prototypes, this approach injects a symmetric and uniform separability bias into the latent space.

Theoretically, we first demonstrate that, for the design goal of preserving task separability while suppressing within-condition nuisance factors, a single scalar quantity $I(Z;X)$ is generally insufficient. More importantly, we show that the conditional KL divergence term is not merely a heuristic design: it can be exactly decomposed into the sum of a within-condition information term and a prior-mismatch term. Under a structural assumption in which task signals and nuisance factors are separable, this conditional information term further yields a tractable upper bound on nuisance dependence. Together, these results provide a clear objective-level theoretical justification for the proposed regularization objective.

The main contributions of this paper are summarized as follows:
\begin{itemize}
    \item \textbf{Dual-Bottleneck Framework:} We propose a dual information bottleneck regularization method based on label-induced partitions. By combining a standard KL term with a conditional KL term, the proposed framework enables decoupled yet synergistic control over global information bandwidth and within-condition redundant variation.
    \item \textbf{Theoretical Justification for Conditional Regularization:} We show that a single global IB term is generally insufficient for disentangling condition-level structure from within-condition nuisance variation. We further provide an exact decomposition theorem for the conditional KL term, and under suitable structural assumptions, establish its role as a tractable upper bound on nuisance dependence.
    \item \textbf{Structured Conditional Prior:} We introduce a prior initialization method based on regular-simplex geometry. This design injects a symmetric and controllable separability bias into the latent space, avoids the potential fragility of directly constructing priors from empirical class prototypes, and provides a more stable geometric template for optimization.
\end{itemize}

\section{Related Work}

\section{Method}

\section{Experiments}

\section{Conclusion}

\section{Submission of papers to NeurIPS 2025}

Please read the instructions below carefully and follow them faithfully.

\subsection{Style}

Papers to be submitted to NeurIPS 2025 must be prepared according to the
instructions presented here. Papers may only be up to {\bf nine} pages long,
including figures.
% Additional pages \emph{containing only acknowledgments and references} are allowed.
Additional pages \emph{containing references, checklist, and the optional technical appendices} do not count as content pages.
Papers that exceed the page limit will not be
reviewed, or in any other way considered for presentation at the conference.

The margins in 2025 are the same as those in previous years.

Authors are required to use the NeurIPS \LaTeX{} style files obtainable at the
NeurIPS website as indicated below. Please make sure you use the current files
and not previous versions. Tweaking the style files may be grounds for
rejection.

\subsection{Retrieval of style files}

The style files for NeurIPS and other conference information are available on
the website at
\begin{center}
  \url{https://neurips.cc}
\end{center}
The file \verb+neurips_2025.pdf+ contains these instructions and illustrates the
various formatting requirements your NeurIPS paper must satisfy.

The only supported style file for NeurIPS 2025 is \verb+neurips_2025.sty+,
rewritten for \LaTeXe{}.  \textbf{Previous style files for \LaTeX{} 2.09,
  Microsoft Word, and RTF are no longer supported!}

The \LaTeX{} style file contains three optional arguments: \verb+final+, which
creates a camera-ready copy, \verb+preprint+, which creates a preprint for
submission to, e.g., arXiv, and \verb+nonatbib+, which will not load the
\verb+natbib+ package for you in case of package clash.

\paragraph{Preprint option}
If you wish to post a preprint of your work online, e.g., on arXiv, using the
NeurIPS style, please use the \verb+preprint+ option. This will create a
nonanonymized version of your work with the text ``Preprint. Work in progress.''
in the footer. This version may be distributed as you see fit, as long as you do not say which conference it was submitted to. Please \textbf{do
  not} use the \verb+final+ option, which should \textbf{only} be used for
papers accepted to NeurIPS.

At submission time, please omit the \verb+final+ and \verb+preprint+
options. This will anonymize your submission and add line numbers to aid
review. Please do \emph{not} refer to these line numbers in your paper as they
will be removed during generation of camera-ready copies.

The file \verb+neurips_2025.tex+ may be used as a ``shell'' for writing your
paper. All you have to do is replace the author, title, abstract, and text of
the paper with your own.

The formatting instructions contained in these style files are summarized in
Sections \ref{gen_inst}, \ref{headings}, and \ref{others} below.

\section{General formatting instructions}
\label{gen_inst}

The text must be confined within a rectangle 5.5~inches (33~picas) wide and
9~inches (54~picas) long. The left margin is 1.5~inch (9~picas).  Use 10~point
type with a vertical spacing (leading) of 11~points.  Times New Roman is the
preferred typeface throughout, and will be selected for you by default.
Paragraphs are separated by \nicefrac{1}{2}~line space (5.5 points), with no
indentation.

The paper title should be 17~point, initial caps/lower case, bold, centered
between two horizontal rules. The top rule should be 4~points thick and the
bottom rule should be 1~point thick. Allow \nicefrac{1}{4}~inch space above and
below the title to rules. All pages should start at 1~inch (6~picas) from the
top of the page.

For the final version, authors' names are set in boldface, and each name is
centered above the corresponding address. The lead author's name is to be listed
first (left-most), and the co-authors' names (if different address) are set to
follow. If there is only one co-author, list both author and co-author side by
side.

Please pay special attention to the instructions in Section \ref{others}
regarding figures, tables, acknowledgments, and references.

\section{Headings: first level}
\label{headings}

All headings should be lower case (except for first word and proper nouns),
flush left, and bold.

First-level headings should be in 12-point type.

\subsection{Headings: second level}

Second-level headings should be in 10-point type.

\subsubsection{Headings: third level}

Third-level headings should be in 10-point type.

\paragraph{Paragraphs}

There is also a \verb+\paragraph+ command available, which sets the heading in
bold, flush left, and inline with the text, with the heading followed by 1\,em
of space.

\section{Citations, figures, tables, references}
\label{others}

These instructions apply to everyone.

\subsection{Citations within the text}

The \verb+natbib+ package will be loaded for you by default.  Citations may be
author/year or numeric, as long as you maintain internal consistency.  As to the
format of the references themselves, any style is acceptable as long as it is
used consistently.

The documentation for \verb+natbib+ may be found at
\begin{center}
  \url{http://mirrors.ctan.org/macros/latex/contrib/natbib/natnotes.pdf}
\end{center}
Of note is the command \verb+\citet+, which produces citations appropriate for
use in inline text.  For example,
\begin{verbatim}
   \citet{hasselmo} investigated\dots
\end{verbatim}
produces
\begin{quote}
  Hasselmo, et al.\ (1995) investigated\dots
\end{quote}

If you wish to load the \verb+natbib+ package with options, you may add the
following before loading the \verb+neurips_2025+ package:
\begin{verbatim}
   \PassOptionsToPackage{options}{natbib}
\end{verbatim}

If \verb+natbib+ clashes with another package you load, you can add the optional
argument \verb+nonatbib+ when loading the style file:
\begin{verbatim}
   \usepackage[nonatbib]{neurips_2025}
\end{verbatim}

As submission is double blind, refer to your own published work in the third
person. That is, use ``In the previous work of Jones et al.\ [4],'' not ``In our
previous work [4].'' If you cite your other papers that are not widely available
(e.g., a journal paper under review), use anonymous author names in the
citation, e.g., an author of the form ``A.\ Anonymous'' and include a copy of the anonymized paper in the supplementary material.

\subsection{Footnotes}

Footnotes should be used sparingly.  If you do require a footnote, indicate
footnotes with a number\footnote{Sample of the first footnote.} in the
text. Place the footnotes at the bottom of the page on which they appear.
Precede the footnote with a horizontal rule of 2~inches (12~picas).

Note that footnotes are properly typeset \emph{after} punctuation
marks.\footnote{As in this example.}

\subsection{Figures}

\begin{figure}
  \centering
  \fbox{\rule[-.5cm]{0cm}{4cm} \rule[-.5cm]{4cm}{0cm}}
  \caption{Sample figure caption.}
\end{figure}

All artwork must be neat, clean, and legible. Lines should be dark enough for
purposes of reproduction. The figure number and caption always appear after the
figure. Place one line space before the figure caption and one line space after
the figure. The figure caption should be lower case (except for first word and
proper nouns); figures are numbered consecutively.

You may use color figures.  However, it is best for the figure captions and the
paper body to be legible if the paper is printed in either black/white or in
color.

\subsection{Tables}

All tables must be centered, neat, clean and legible.  The table number and
title always appear before the table.  See Table~\ref{sample-table}.

Place one line space before the table title, one line space after the
table title, and one line space after the table. The table title must
be lower case (except for first word and proper nouns); tables are
numbered consecutively.

Note that publication-quality tables \emph{do not contain vertical rules.} We
strongly suggest the use of the \verb+booktabs+ package, which allows for
typesetting high-quality, professional tables:
\begin{center}
  \url{https://www.ctan.org/pkg/booktabs}
\end{center}
This package was used to typeset Table~\ref{sample-table}.

\begin{table}
  \caption{Sample table title}
  \label{sample-table}
  \centering
  \begin{tabular}{lll}
    \toprule
    \multicolumn{2}{c}{Part}                   \\
    \cmidrule(r){1-2}
    Name     & Description     & Size ($\mu$m) \\
    \midrule
    Dendrite & Input terminal  & $\sim$100     \\
    Axon     & Output terminal & $\sim$10      \\
    Soma     & Cell body       & up to $10^6$  \\
    \bottomrule
  \end{tabular}
\end{table}

\subsection{Math}
Note that display math in bare TeX commands will not create correct line numbers for submission. Please use LaTeX (or AMSTeX) commands for unnumbered display math. (You really shouldn't be using \$\$ anyway; see \url{https://tex.stackexchange.com/questions/503/why-is-preferable-to} and \url{https://tex.stackexchange.com/questions/40492/what-are-the-differences-between-align-equation-and-displaymath} for more information.)

\subsection{Final instructions}

Do not change any aspects of the formatting parameters in the style files.  In
particular, do not modify the width or length of the rectangle the text should
fit into, and do not change font sizes (except perhaps in the
\textbf{References} section; see below). Please note that pages should be
numbered.

\section{Preparing PDF files}

Please prepare submission files with paper size ``US Letter,'' and not, for
example, ``A4.''

Fonts were the main cause of problems in the past years. Your PDF file must only
contain Type 1 or Embedded TrueType fonts. Here are a few instructions to
achieve this.

\begin{itemize}

\item You should directly generate PDF files using \verb+pdflatex+.

\item You can check which fonts a PDF files uses.  In Acrobat Reader, select the
  menu Files$>$Document Properties$>$Fonts and select Show All Fonts. You can
  also use the program \verb+pdffonts+ which comes with \verb+xpdf+ and is
  available out-of-the-box on most Linux machines.

\item \verb+xfig+ "patterned" shapes are implemented with bitmap fonts.  Use
  "solid" shapes instead.

\item The \verb+\bbold+ package almost always uses bitmap fonts.  You should use
  the equivalent AMS Fonts:
\begin{verbatim}
   \usepackage{amsfonts}
\end{verbatim}
followed by, e.g., \verb+\mathbb{R}+, \verb+\mathbb{N}+, or \verb+\mathbb{C}+
for $\mathbb{R}$, $\mathbb{N}$ or $\mathbb{C}$.  You can also use the following
workaround for reals, natural and complex:
\begin{verbatim}
   \newcommand{\RR}{I\!\!R} %real numbers
   \newcommand{\Nat}{I\!\!N} %natural numbers
   \newcommand{\CC}{I\!\!\!\!C} %complex numbers
\end{verbatim}
Note that \verb+amsfonts+ is automatically loaded by the \verb+amssymb+ package.

\end{itemize}

If your file contains type 3 fonts or non embedded TrueType fonts, we will ask
you to fix it.

\subsection{Margins in \LaTeX{}}

Most of the margin problems come from figures positioned by hand using
\verb+\special+ or other commands. We suggest using the command
\verb+\includegraphics+ from the \verb+graphicx+ package. Always specify the
figure width as a multiple of the line width as in the example below:
\begin{verbatim}
   \usepackage[pdftex]{graphicx} ...
   \includegraphics[width=0.8\linewidth]{myfile.pdf}
\end{verbatim}
See Section 4.4 in the graphics bundle documentation
(\url{http://mirrors.ctan.org/macros/latex/required/graphics/grfguide.pdf})

A number of width problems arise when \LaTeX{} cannot properly hyphenate a
line. Please give LaTeX hyphenation hints using the \verb+\-+ command when
necessary.

\begin{ack}
Use unnumbered first level headings for the acknowledgments. All acknowledgments
go at the end of the paper before the list of references. Moreover, you are required to declare
funding (financial activities supporting the submitted work) and competing interests (related financial activities outside the submitted work).
More information about this disclosure can be found at: \url{https://neurips.cc/Conferences/2025/PaperInformation/FundingDisclosure}.

Do {\bf not} include this section in the anonymized submission, only in the final paper. You can use the \texttt{ack} environment provided in the style file to automatically hide this section in the anonymized submission.
\end{ack}

\section*{References}

References follow the acknowledgments in the camera-ready paper. Use unnumbered first-level heading for
the references. Any choice of citation style is acceptable as long as you are
consistent. It is permissible to reduce the font size to \verb+small+ (9 point)
when listing the references.
Note that the Reference section does not count towards the page limit.
\medskip

{
\small

[1] Alexander, J.A.\ \& Mozer, M.C.\ (1995) Template-based algorithms for
connectionist rule extraction. In G.\ Tesauro, D.S.\ Touretzky and T.K.\ Leen
(eds.), {\it Advances in Neural Information Processing Systems 7},
pp.\ 609--616. Cambridge, MA: MIT Press.

[2] Bower, J.M.\ \& Beeman, D.\ (1995) {\it The Book of GENESIS: Exploring
  Realistic Neural Models with the GEneral NEural SImulation System.}  New York:
TELOS/Springer--Verlag.

[3] Hasselmo, M.E., Schnell, E.\ \& Barkai, E.\ (1995) Dynamics of learning and
recall at excitatory recurrent synapses and cholinergic modulation in rat
hippocampal region CA3. {\it Journal of Neuroscience} {\bf 15}(7):5249-5262.
}

%%%%%%%%%%%%%%%%%%%%%%%%%%%%%%%%%%%%%%%%%%%%%%%%%%%%%%%%%%%%

\appendix

\section{Detailed Proofs for the Label-Induced Conditional Bottleneck}
\label{app:proofs}

This appendix provides detailed derivations for the mathematical claims used in the main method.
Our goal is not only to state the final objective, but also to make explicit:
(i) which statements are exact identities,
(ii) which statements are upper bounds,
(iii) which statements are existence/insufficiency results,
and (iv) which parts are modeling choices rather than universal optimality claims.

\subsection{Notation and proof roadmap}
\label{app:notation}

Let \(X \in \mathcal{X}\) be the input, \(Y \in \mathcal{Y}\) the supervised target, and
\(Z \in \mathbb{R}^d\) the stochastic latent variable generated by the encoder \(q_\theta(z\mid x)\).

We assume a \emph{label-induced finite condition}
\[
C = \phi(Y) = \phi(y^\star(X)),
\]
where \(y^\star : \mathcal{X}\to\mathcal{Y}\) is a deterministic target rule and
\(\phi : \mathcal{Y}\to\{1,\dots,C\}\) is a finite partition map.

The proof chain is organized as follows:
\begin{enumerate}
    \item Theorem~\ref{thm:decomp}: exact decomposition of \(I(Z;X)\) under a label-induced condition.
    \item Theorem~\ref{thm:insufficient}: a single global bottleneck is insufficient for separately controlling within-condition information.
    \item Theorem~\ref{thm:condkl}: exact decomposition of the conditional KL term.
    \item Proposition~\ref{prop:nuisance}: nuisance suppression interpretation under an optional structural assumption.
    \item Lemma~\ref{lem:stdkl}: exact decomposition of the standard KL term.
    \item Proposition~\ref{prop:simplex}: symmetric separability induced by the simplex prior.
    \item Proposition~\ref{prop:tau}: corrected minimization argument for the shared variance parameter \(\tau\).
\end{enumerate}

\subsection{Exact decomposition under a label-induced condition}
\label{app:thm1}

\begin{theorem}
\label{thm:decomp}
If \(C=\phi(y^\star(X))\), then
\[
I(Z;X)=I(Z;C)+I(Z;X\mid C).
\]
\end{theorem}

\begin{proof}
Start from the chain rule for mutual information:
\[
I(Z;X,C)=I(Z;C)+I(Z;X\mid C).
\]
On the other hand, the chain rule also gives
\[
I(Z;X,C)=I(Z;X)+I(Z;C\mid X).
\]
Since \(C\) is a deterministic function of \(X\), conditioning on \(X\) fully determines \(C\).
Therefore,
\[
I(Z;C\mid X)=0.
\]
Substituting this into the second identity yields
\[
I(Z;X,C)=I(Z;X).
\]
Combining the two expressions for \(I(Z;X,C)\), we obtain
\[
I(Z;X)=I(Z;C)+I(Z;X\mid C).
\]
This proves the claim.
\end{proof}

\paragraph{Interpretation.}
The decomposition is exact. It shows that the total information budget \(I(Z;X)\) mixes:
\[
\underbrace{I(Z;C)}_{\text{condition-level / label-induced structure}}
\quad\text{and}\quad
\underbrace{I(Z;X\mid C)}_{\text{within-condition information}}.
\]
Therefore, a single global information budget does not by itself distinguish these two roles.

\subsection{Why a single global bottleneck is insufficient}
\label{app:thm2}

\begin{theorem}
\label{thm:insufficient}
There exist two encoders \(q_1(z\mid x)\) and \(q_2(z\mid x)\) such that
\[
I_{q_1}(Z;X)=I_{q_2}(Z;X),
\qquad
I_{q_1}(Z;X\mid C)\neq I_{q_2}(Z;X\mid C).
\]
Hence, any regularizer depending only on \(I(Z;X)\) cannot, in general, separately control
within-condition information.
\end{theorem}

\begin{proof}
Let \(C\sim \mathrm{Bernoulli}(1/2)\) and \(N\sim \mathrm{Bernoulli}(1/2)\) be independent random variables.
Define the input
\[
X=(C,N).
\]
Now define two deterministic encoders:
\[
Z_1=C,\qquad Z_2=N.
\]

Since \(Z_1\) is a deterministic one-bit function of \(X\),
\[
I(Z_1;X)=H(Z_1)-H(Z_1\mid X)=H(Z_1)=1.
\]
Similarly, since \(Z_2\) is also a deterministic one-bit function of \(X\),
\[
I(Z_2;X)=H(Z_2)-H(Z_2\mid X)=H(Z_2)=1.
\]
Therefore,
\[
I(Z_1;X)=I(Z_2;X).
\]

We next compare the conditional mutual informations.

For \(Z_1=C\), once we condition on \(C\), the latent variable \(Z_1\) is fully determined.
Hence
\[
H(Z_1\mid C)=0,
\qquad
H(Z_1\mid X,C)=0,
\]
and therefore
\[
I(Z_1;X\mid C)=H(Z_1\mid C)-H(Z_1\mid X,C)=0.
\]

For \(Z_2=N\), independence of \(N\) and \(C\) gives
\[
H(Z_2\mid C)=H(N\mid C)=H(N)=1.
\]
Moreover, since \(Z_2=N\) is determined by \(X=(C,N)\),
\[
H(Z_2\mid X,C)=0.
\]
Thus
\[
I(Z_2;X\mid C)=H(Z_2\mid C)-H(Z_2\mid X,C)=1.
\]

We have shown
\[
I(Z_1;X)=I(Z_2;X)=1,
\qquad
I(Z_1;X\mid C)=0,\quad I(Z_2;X\mid C)=1.
\]
Hence two encoders can have the same global information budget but different within-condition
information. Therefore, a regularizer depending only on \(I(Z;X)\) cannot in general distinguish them.
\end{proof}

\paragraph{What this theorem proves.}
The theorem proves a necessity statement in the correct sense:
a \emph{condition-aware} term is mathematically necessary if the design goal is to preserve
label-induced separability while separately suppressing within-condition variation.

\paragraph{What this theorem does \emph{not} prove.}
It does not prove that the conditional KL term is the \emph{only} possible condition-aware choice.
That second step is addressed separately by the exact decomposition in
Theorem~\ref{thm:condkl}.

\subsection{Exact decomposition of the conditional KL term}
\label{app:thm3}

Define the condition-wise aggregated posterior
\[
q_\theta(z\mid c)
=
\int q_\theta(z\mid x)\,p(x\mid c)\,dx.
\]

\begin{theorem}
\label{thm:condkl}
For any encoder \(q_\theta(z\mid x)\) and any conditional prior \(r_\psi(z\mid c)\),
\[
\mathbb{E}_{p(x,c)}
\mathrm{KL}\!\bigl(q_\theta(z\mid x)\,\|\,r_\psi(z\mid c)\bigr)
=
I_\theta(Z;X\mid C)
+
\mathbb{E}_{p(c)}
\mathrm{KL}\!\bigl(q_\theta(z\mid c)\,\|\,r_\psi(z\mid c)\bigr).
\]
Consequently,
\[
I_\theta(Z;X\mid C)
\le
\mathbb{E}_{p(x,c)}
\mathrm{KL}\!\bigl(q_\theta(z\mid x)\,\|\,r_\psi(z\mid c)\bigr).
\]
\end{theorem}

\begin{proof}
Start from the definition of conditional KL:
\[
\mathbb{E}_{p(x,c)}
\mathrm{KL}\!\bigl(q_\theta(z\mid x)\,\|\,r_\psi(z\mid c)\bigr)
=
\mathbb{E}_{p(x,c)}\mathbb{E}_{q_\theta(z\mid x)}
\left[
\log\frac{q_\theta(z\mid x)}{r_\psi(z\mid c)}
\right].
\]
Add and subtract \(\log q_\theta(z\mid c)\) inside the logarithm:
\[
\log\frac{q_\theta(z\mid x)}{r_\psi(z\mid c)}
=
\log\frac{q_\theta(z\mid x)}{q_\theta(z\mid c)}
+
\log\frac{q_\theta(z\mid c)}{r_\psi(z\mid c)}.
\]
Therefore,
\begin{align*}
&\mathbb{E}_{p(x,c)}
\mathrm{KL}\!\bigl(q_\theta(z\mid x)\,\|\,r_\psi(z\mid c)\bigr) \\
&=
\mathbb{E}_{p(x,c)}\mathbb{E}_{q_\theta(z\mid x)}
\left[
\log\frac{q_\theta(z\mid x)}{q_\theta(z\mid c)}
\right]
+
\mathbb{E}_{p(x,c)}\mathbb{E}_{q_\theta(z\mid x)}
\left[
\log\frac{q_\theta(z\mid c)}{r_\psi(z\mid c)}
\right].
\end{align*}

The first term is exactly the conditional mutual information:
\[
\mathbb{E}_{p(x,c)}\mathbb{E}_{q_\theta(z\mid x)}
\left[
\log\frac{q_\theta(z\mid x)}{q_\theta(z\mid c)}
\right]
=
I_\theta(Z;X\mid C).
\]

For the second term, the integrand depends on \(x\) only through \(q_\theta(z\mid x)\).
Integrating out \(x\) conditional on \(c\) yields
\[
\mathbb{E}_{p(x,c)}\mathbb{E}_{q_\theta(z\mid x)}
\left[
\log\frac{q_\theta(z\mid c)}{r_\psi(z\mid c)}
\right]
=
\mathbb{E}_{p(c)}\mathbb{E}_{q_\theta(z\mid c)}
\left[
\log\frac{q_\theta(z\mid c)}{r_\psi(z\mid c)}
\right].
\]
By definition, this is
\[
\mathbb{E}_{p(c)}
\mathrm{KL}\!\bigl(q_\theta(z\mid c)\,\|\,r_\psi(z\mid c)\bigr).
\]

Combining the two terms gives
\[
\mathbb{E}_{p(x,c)}
\mathrm{KL}\!\bigl(q_\theta(z\mid x)\,\|\,r_\psi(z\mid c)\bigr)
=
I_\theta(Z;X\mid C)
+
\mathbb{E}_{p(c)}
\mathrm{KL}\!\bigl(q_\theta(z\mid c)\,\|\,r_\psi(z\mid c)\bigr).
\]
Finally, since KL divergence is nonnegative,
\[
\mathbb{E}_{p(c)}
\mathrm{KL}\!\bigl(q_\theta(z\mid c)\,\|\,r_\psi(z\mid c)\bigr)\ge 0,
\]
which implies
\[
I_\theta(Z;X\mid C)
\le
\mathbb{E}_{p(x,c)}
\mathrm{KL}\!\bigl(q_\theta(z\mid x)\,\|\,r_\psi(z\mid c)\bigr).
\]
\end{proof}

\paragraph{Interpretation.}
This is the key closure property of the method:
\[
\text{conditional KL}
=
\text{within-condition information}
+
\text{prior mismatch}.
\]
Thus the conditional KL term is not merely heuristic; it is an exact decomposition plus a tractable upper bound on \(I(Z;X\mid C)\).

\subsection{Optional nuisance interpretation}
\label{app:nuisance}

We now add an optional structural interpretation.
Assume there exist latent variables \(S\) and \(N\) such that
\[
X=T(S,N),\qquad Y=f(S),\qquad C=\phi(Y).
\]
Here \(S\) denotes task-relevant signal and \(N\) denotes within-condition nuisance variation.

\begin{proposition}
\label{prop:nuisance}
Suppose that conditioned on \(C\), the variables satisfy the Markov chain
\[
N \to X \to Z.
\]
Then
\[
I(Z;N\mid C)\le I(Z;X\mid C).
\]
Combining this with Theorem~\ref{thm:condkl} yields
\[
I(Z;N\mid C)
\le
I(Z;X\mid C)
\le
\mathbb{E}_{p(x,c)}
\mathrm{KL}\!\bigl(q_\theta(z\mid x)\,\|\,r_\psi(z\mid c)\bigr).
\]
\end{proposition}

\begin{proof}
Fix any value \(c\) of \(C\).
Under the conditional Markov chain
\[
N \to X \to Z \quad \text{given } C=c,
\]
the conditional data processing inequality gives
\[
I(Z;N\mid C=c)\le I(Z;X\mid C=c).
\]
Taking expectation over \(c\) yields
\[
I(Z;N\mid C)\le I(Z;X\mid C).
\]
The second inequality follows directly from Theorem~\ref{thm:condkl}.
\end{proof}

\paragraph{Interpretation.}
This proposition does \emph{not} claim that \(I(Z;X\mid C)\) is identical to nuisance information
in complete generality. The correct and sufficient statement is weaker:
under the structural assumption above, the conditional KL term upper-bounds nuisance dependence
through \(I(Z;X\mid C)\).

\subsection{Exact decomposition of the standard KL term}
\label{app:stdkl}

The standard VIB term also admits an exact decomposition, completely parallel to the conditional case.

\begin{lemma}
\label{lem:stdkl}
Let
\[
q_\theta(z)=\int q_\theta(z\mid x)p(x)\,dx
\]
be the aggregated posterior. Then for any unconditional prior \(r_0(z)\),
\[
\mathbb{E}_{p(x)}
\mathrm{KL}\!\bigl(q_\theta(z\mid x)\,\|\,r_0(z)\bigr)
=
I_\theta(Z;X)
+
\mathrm{KL}\!\bigl(q_\theta(z)\,\|\,r_0(z)\bigr).
\]
Consequently,
\[
I_\theta(Z;X)\le
\mathbb{E}_{p(x)}
\mathrm{KL}\!\bigl(q_\theta(z\mid x)\,\|\,r_0(z)\bigr).
\]
\end{lemma}

\begin{proof}
We expand:
\[
\mathbb{E}_{p(x)}
\mathrm{KL}\!\bigl(q_\theta(z\mid x)\,\|\,r_0(z)\bigr)
=
\mathbb{E}_{p(x)}\mathbb{E}_{q_\theta(z\mid x)}
\left[
\log\frac{q_\theta(z\mid x)}{r_0(z)}
\right].
\]
Add and subtract \(\log q_\theta(z)\):
\[
\log\frac{q_\theta(z\mid x)}{r_0(z)}
=
\log\frac{q_\theta(z\mid x)}{q_\theta(z)}
+
\log\frac{q_\theta(z)}{r_0(z)}.
\]
Therefore,
\begin{align*}
&\mathbb{E}_{p(x)}
\mathrm{KL}\!\bigl(q_\theta(z\mid x)\,\|\,r_0(z)\bigr) \\
&=
\mathbb{E}_{p(x)}\mathbb{E}_{q_\theta(z\mid x)}
\left[
\log\frac{q_\theta(z\mid x)}{q_\theta(z)}
\right]
+
\mathbb{E}_{p(x)}\mathbb{E}_{q_\theta(z\mid x)}
\left[
\log\frac{q_\theta(z)}{r_0(z)}
\right].
\end{align*}
The first term is
\[
I_\theta(Z;X).
\]
The second term becomes, after integrating out \(x\),
\[
\mathbb{E}_{q_\theta(z)}
\left[
\log\frac{q_\theta(z)}{r_0(z)}
\right]
=
\mathrm{KL}\!\bigl(q_\theta(z)\,\|\,r_0(z)\bigr).
\]
Combining the two terms proves the identity. The inequality follows from nonnegativity of KL.
\end{proof}

\paragraph{Interpretation.}
The standard KL term controls global bandwidth:
\[
\text{standard KL}
=
\text{global information}
+
\text{global prior mismatch}.
\]
This makes its role fully parallel to the conditional KL term.

\subsection{A structured conditional prior and its induced geometry}
\label{app:simplex}

We choose
\[
r_\psi(z\mid c)=\mathcal{N}(\mu_c,\tau^2 I),
\qquad
\mu_c=\rho R s_c+b,
\]
where:
\begin{itemize}
    \item \(s_1,\dots,s_C\in\mathbb{R}^{C-1}\) are regular simplex vertices,
    \item \(R^\top R=I\),
    \item \(\rho>0\), \(b\in\mathbb{R}^d\), and \(\tau>0\).
\end{itemize}
For a regular simplex,
\[
\|s_c\|=1,
\qquad
s_c^\top s_{c'}=-\frac{1}{C-1}\quad (c\neq c').
\]

\begin{proposition}
\label{prop:simplex}
For any \(c\neq c'\),
\[
\|\mu_c-\mu_{c'}\|^2
=
2\rho^2\frac{C}{C-1}.
\]
Hence all condition centers are equally separated.
\end{proposition}

\begin{proof}
By definition,
\[
\mu_c-\mu_{c'}=\rho R(s_c-s_{c'}).
\]
Therefore,
\[
\|\mu_c-\mu_{c'}\|^2
=
\|\rho R(s_c-s_{c'})\|^2
=
\rho^2\|R(s_c-s_{c'})\|^2.
\]
Since \(R^\top R=I\), the linear map \(R\) preserves Euclidean norm on the simplex subspace, so
\[
\|R(s_c-s_{c'})\|^2=\|s_c-s_{c'}\|^2.
\]
Hence
\[
\|\mu_c-\mu_{c'}\|^2=\rho^2\|s_c-s_{c'}\|^2.
\]

Now expand the squared distance:
\[
\|s_c-s_{c'}\|^2
=
\|s_c\|^2+\|s_{c'}\|^2-2s_c^\top s_{c'}.
\]
Using the simplex identities,
\[
\|s_c-s_{c'}\|^2
=
1+1-2\left(-\frac{1}{C-1}\right)
=
2+\frac{2}{C-1}
=
2\frac{C}{C-1}.
\]
Therefore,
\[
\|\mu_c-\mu_{c'}\|^2
=
\rho^2\cdot 2\frac{C}{C-1}
=
2\rho^2\frac{C}{C-1}.
\]
\end{proof}

\paragraph{Interpretation.}
This proposition establishes a precise and modest claim:
the simplex prior injects a symmetric separability bias.
It does not claim universal optimality of the simplex construction.

\subsection{A corrected minimization argument for the shared variance}
\label{app:tau}

\begin{proposition}
\label{prop:tau}
Suppose the encoder family \(q_\theta(z\mid x)=\mathcal{N}(m_x,\Sigma_x)\), the centers \(\{\mu_c\}\),
and the data distribution \(p(x,c)\) are temporarily fixed.
Define
\[
A
:=
\mathbb{E}_{p(x,c)}
\bigl[
\mathrm{tr}(\Sigma_x)+\|m_x-\mu_c\|^2
\bigr].
\]
Let
\[
J(\tau^2)
:=
\mathbb{E}_{p(x,c)}
\mathrm{KL}\!\left(
\mathcal{N}(m_x,\Sigma_x)\,\|\,\mathcal{N}(\mu_c,\tau^2 I)
\right).
\]
Then
\[
J(\tau^2)
=
\frac12\left(\frac{A}{\tau^2}+d\log\tau^2\right)+\mathrm{const},
\]
where \(\mathrm{const}\) does not depend on \(\tau\).
Moreover, the unique minimizer is
\[
\tau_\star^2=\frac{A}{d}.
\]
\end{proposition}

\begin{proof}
For Gaussian distributions,
\[
\mathrm{KL}\!\left(
\mathcal{N}(m_x,\Sigma_x)\,\|\,\mathcal{N}(\mu_c,\tau^2 I)
\right)
=
\frac12\Bigl[
\tau^{-2}\mathrm{tr}(\Sigma_x)
+\tau^{-2}\|m_x-\mu_c\|^2
-d
+d\log\tau^2
-\log\det\Sigma_x
\Bigr].
\]
Taking expectation over \(p(x,c)\), we obtain
\[
J(\tau^2)
=
\frac12\left(
\frac{A}{\tau^2}
+
d\log\tau^2
\right)
+
\frac12\,\mathbb{E}_{p(x,c)}[-d-\log\det\Sigma_x].
\]
The second term is independent of \(\tau\), so we may write
\[
J(\tau^2)
=
\frac12\left(
\frac{A}{\tau^2}
+
d\log\tau^2
\right)
+\mathrm{const}.
\]

Now let \(u=\tau^2>0\). Then
\[
J(u)=\frac12\left(\frac{A}{u}+d\log u\right)+\mathrm{const}.
\]
Differentiate with respect to \(u\):
\[
J'(u)
=
\frac12\left(-\frac{A}{u^2}+\frac{d}{u}\right)
=
\frac{du-A}{2u^2}.
\]
Since \(2u^2>0\) for all \(u>0\), the sign of \(J'(u)\) is determined entirely by \(du-A\):
\[
J'(u)<0 \quad \text{if } u<\frac{A}{d},
\qquad
J'(u)>0 \quad \text{if } u>\frac{A}{d}.
\]
Therefore \(u=A/d\) is the unique stationary point, and the derivative changes from negative
to positive at that point. Hence it is the unique global minimizer:
\[
u_\star=\frac{A}{d}.
\]
Returning to \(u=\tau^2\), we obtain
\[
\tau_\star^2=\frac{A}{d}.
\]
\end{proof}

\paragraph{Interpretation.}
This proposition is intentionally weak.
It does not claim that fixing \(\tau\) is universally optimal.
Its narrower meaning is:
if one wants to keep a shared isotropic scale across conditions, then learning \(\tau\)
only adjusts a single global average dispersion level.
Therefore, fixing \(\tau\) is a coherent modeling choice when one wishes to avoid
coupling center geometry with variance adaptation.

\subsection{How the proofs justify the final dual-bottleneck objective}
\label{app:final-logic}

The final objective is
\[
L(\theta,\omega,\psi)
=
\mathbb{E}_{p(x,y)}\mathbb{E}_{q_\theta(z\mid x)}
\bigl[\ell_{\mathrm{task}}(g_\omega(z),y)\bigr]
+
\beta_{\mathrm{std}}
\mathbb{E}_{p(x)}
\mathrm{KL}\!\bigl(q_\theta(z\mid x)\,\|\,r_0(z)\bigr)
+
\beta_{\mathrm{cond}}
\mathbb{E}_{p(x,c)}
\mathrm{KL}\!\bigl(q_\theta(z\mid x)\,\|\,r_\psi(z\mid c)\bigr),
\]
with \(c=\phi(y)\).

The mathematical role of each term is now explicit:
\begin{itemize}
    \item the task loss preserves supervised prediction;
    \item by Lemma~\ref{lem:stdkl}, the standard KL controls global information bandwidth;
    \item by Theorem~\ref{thm:condkl}, the conditional KL controls within-condition information up to a prior mismatch term;
    \item by Theorem~\ref{thm:insufficient}, both terms are needed if the design goal is to separately preserve label-induced structure and suppress within-condition variation.
\end{itemize}

Hence the dual-bottleneck construction is not an ad hoc addition of two penalties:
it is a mathematically targeted decomposition of the regularization objective.

\subsection{Scope and non-claims}
\label{app:scope}

For clarity, we explicitly state what the present theory does and does not establish.

\paragraph{Established in this note.}
\begin{itemize}
    \item Exact decomposition of \(I(Z;X)\) under a label-induced partition.
    \item Existence-level insufficiency of a single global bottleneck for the stated design goal.
    \item Exact decomposition of the conditional KL term into within-condition mutual information and prior mismatch.
    \item A symmetric structured conditional prior with explicit center separation.
    \item An objective-level justification of the dual-bottleneck training criterion.
\end{itemize}

\paragraph{Not established in this note.}
\begin{itemize}
    \item Universal superiority of the conditional KL term over all possible condition-aware regularizers.
    \item Universal task-level gains in accuracy or generalization.
    \item Optimization consistency or estimator optimality in minibatch finite-sample training.
    \item Global optimality of the simplex prior construction for all tasks.
\end{itemize}

Stating these non-claims explicitly improves the credibility of the method by aligning the theorem
strength with the actual proof strength.

%%%%%%%%%%%%%%%%%%%%%%%%%%%%%%%%%%%%%%%%%%%%%%%%%%%%%%%%%%%%

\newpage
\section*{NeurIPS Paper Checklist}

%%% BEGIN INSTRUCTIONS %%%
The checklist is designed to encourage best practices for responsible machine learning research, addressing issues of reproducibility, transparency, research ethics, and societal impact. Do not remove the checklist: {\bf The papers not including the checklist will be desk rejected.} The checklist should follow the references and follow the (optional) supplemental material.  The checklist does NOT count towards the page
limit. 

Please read the checklist guidelines carefully for information on how to answer these questions. For each question in the checklist:
\begin{itemize}
    \item You should answer \answerYes{}, \answerNo{}, or \answerNA{}.
    \item \answerNA{} means either that the question is Not Applicable for that particular paper or the relevant information is Not Available.
    \item Please provide a short (1–2 sentence) justification right after your answer (even for NA). 
   % \item {\bf The papers not including the checklist will be desk rejected.}
\end{itemize}

{\bf The checklist answers are an integral part of your paper submission.} They are visible to the reviewers, area chairs, senior area chairs, and ethics reviewers. You will be asked to also include it (after eventual revisions) with the final version of your paper, and its final version will be published with the paper.

The reviewers of your paper will be asked to use the checklist as one of the factors in their evaluation. While "\answerYes{}" is generally preferable to "\answerNo{}", it is perfectly acceptable to answer "\answerNo{}" provided a proper justification is given (e.g., "error bars are not reported because it would be too computationally expensive" or "we were unable to find the license for the dataset we used"). In general, answering "\answerNo{}" or "\answerNA{}" is not grounds for rejection. While the questions are phrased in a binary way, we acknowledge that the true answer is often more nuanced, so please just use your best judgment and write a justification to elaborate. All supporting evidence can appear either in the main paper or the supplemental material, provided in appendix. If you answer \answerYes{} to a question, in the justification please point to the section(s) where related material for the question can be found.

IMPORTANT, please:
\begin{itemize}
    \item {\bf Delete this instruction block, but keep the section heading ``NeurIPS Paper Checklist"},
    \item  {\bf Keep the checklist subsection headings, questions/answers and guidelines below.}
    \item {\bf Do not modify the questions and only use the provided macros for your answers}.
\end{itemize}

%%% END INSTRUCTIONS %%%

\begin{enumerate}

\item {\bf Claims}
    \item[] Question: Do the main claims made in the abstract and introduction accurately reflect the paper's contributions and scope?
    \item[] Answer: \answerTODO{} % Replace by \answerYes{}, \answerNo{}, or \answerNA{}.
    \item[] Justification: \justificationTODO{}
    \item[] Guidelines:
    \begin{itemize}
        \item The answer NA means that the abstract and introduction do not include the claims made in the paper.
        \item The abstract and/or introduction should clearly state the claims made, including the contributions made in the paper and important assumptions and limitations. A No or NA answer to this question will not be perceived well by the reviewers. 
        \item The claims made should match theoretical and experimental results, and reflect how much the results can be expected to generalize to other settings. 
        \item It is fine to include aspirational goals as motivation as long as it is clear that these goals are not attained by the paper. 
    \end{itemize}

\item {\bf Limitations}
    \item[] Question: Does the paper discuss the limitations of the work performed by the authors?
    \item[] Answer: \answerTODO{} % Replace by \answerYes{}, \answerNo{}, or \answerNA{}.
    \item[] Justification: \justificationTODO{}
    \item[] Guidelines:
    \begin{itemize}
        \item The answer NA means that the paper has no limitation while the answer No means that the paper has limitations, but those are not discussed in the paper. 
        \item The authors are encouraged to create a separate "Limitations" section in their paper.
        \item The paper should point out any strong assumptions and how robust the results are to violations of these assumptions (e.g., independence assumptions, noiseless settings, model well-specification, asymptotic approximations only holding locally). The authors should reflect on how these assumptions might be violated in practice and what the implications would be.
        \item The authors should reflect on the scope of the claims made, e.g., if the approach was only tested on a few datasets or with a few runs. In general, empirical results often depend on implicit assumptions, which should be articulated.
        \item The authors should reflect on the factors that influence the performance of the approach. For example, a facial recognition algorithm may perform poorly when image resolution is low or images are taken in low lighting. Or a speech-to-text system might not be used reliably to provide closed captions for online lectures because it fails to handle technical jargon.
        \item The authors should discuss the computational efficiency of the proposed algorithms and how they scale with dataset size.
        \item If applicable, the authors should discuss possible limitations of their approach to address problems of privacy and fairness.
        \item While the authors might fear that complete honesty about limitations might be used by reviewers as grounds for rejection, a worse outcome might be that reviewers discover limitations that aren't acknowledged in the paper. The authors should use their best judgment and recognize that individual actions in favor of transparency play an important role in developing norms that preserve the integrity of the community. Reviewers will be specifically instructed to not penalize honesty concerning limitations.
    \end{itemize}

\item {\bf Theory assumptions and proofs}
    \item[] Question: For each theoretical result, does the paper provide the full set of assumptions and a complete (and correct) proof?
    \item[] Answer: \answerTODO{} % Replace by \answerYes{}, \answerNo{}, or \answerNA{}.
    \item[] Justification: \justificationTODO{}
    \item[] Guidelines:
    \begin{itemize}
        \item The answer NA means that the paper does not include theoretical results. 
        \item All the theorems, formulas, and proofs in the paper should be numbered and cross-referenced.
        \item All assumptions should be clearly stated or referenced in the statement of any theorems.
        \item The proofs can either appear in the main paper or the supplemental material, but if they appear in the supplemental material, the authors are encouraged to provide a short proof sketch to provide intuition. 
        \item Inversely, any informal proof provided in the core of the paper should be complemented by formal proofs provided in appendix or supplemental material.
        \item Theorems and Lemmas that the proof relies upon should be properly referenced. 
    \end{itemize}

    \item {\bf Experimental result reproducibility}
    \item[] Question: Does the paper fully disclose all the information needed to reproduce the main experimental results of the paper to the extent that it affects the main claims and/or conclusions of the paper (regardless of whether the code and data are provided or not)?
    \item[] Answer: \answerTODO{} % Replace by \answerYes{}, \answerNo{}, or \answerNA{}.
    \item[] Justification: \justificationTODO{}
    \item[] Guidelines:
    \begin{itemize}
        \item The answer NA means that the paper does not include experiments.
        \item If the paper includes experiments, a No answer to this question will not be perceived well by the reviewers: Making the paper reproducible is important, regardless of whether the code and data are provided or not.
        \item If the contribution is a dataset and/or model, the authors should describe the steps taken to make their results reproducible or verifiable. 
        \item Depending on the contribution, reproducibility can be accomplished in various ways. For example, if the contribution is a novel architecture, describing the architecture fully might suffice, or if the contribution is a specific model and empirical evaluation, it may be necessary to either make it possible for others to replicate the model with the same dataset, or provide access to the model. In general. releasing code and data is often one good way to accomplish this, but reproducibility can also be provided via detailed instructions for how to replicate the results, access to a hosted model (e.g., in the case of a large language model), releasing of a model checkpoint, or other means that are appropriate to the research performed.
        \item While NeurIPS does not require releasing code, the conference does require all submissions to provide some reasonable avenue for reproducibility, which may depend on the nature of the contribution. For example
        \begin{enumerate}
            \item If the contribution is primarily a new algorithm, the paper should make it clear how to reproduce that algorithm.
            \item If the contribution is primarily a new model architecture, the paper should describe the architecture clearly and fully.
            \item If the contribution is a new model (e.g., a large language model), then there should either be a way to access this model for reproducing the results or a way to reproduce the model (e.g., with an open-source dataset or instructions for how to construct the dataset).
            \item We recognize that reproducibility may be tricky in some cases, in which case authors are welcome to describe the particular way they provide for reproducibility. In the case of closed-source models, it may be that access to the model is limited in some way (e.g., to registered users), but it should be possible for other researchers to have some path to reproducing or verifying the results.
        \end{enumerate}
    \end{itemize}

\item {\bf Open access to data and code}
    \item[] Question: Does the paper provide open access to the data and code, with sufficient instructions to faithfully reproduce the main experimental results, as described in supplemental material?
    \item[] Answer: \answerTODO{} % Replace by \answerYes{}, \answerNo{}, or \answerNA{}.
    \item[] Justification: \justificationTODO{}
    \item[] Guidelines:
    \begin{itemize}
        \item The answer NA means that paper does not include experiments requiring code.
        \item Please see the NeurIPS code and data submission guidelines (\url{https://nips.cc/public/guides/CodeSubmissionPolicy}) for more details.
        \item While we encourage the release of code and data, we understand that this might not be possible, so “No” is an acceptable answer. Papers cannot be rejected simply for not including code, unless this is central to the contribution (e.g., for a new open-source benchmark).
        \item The instructions should contain the exact command and environment needed to run to reproduce the results. See the NeurIPS code and data submission guidelines (\url{https://nips.cc/public/guides/CodeSubmissionPolicy}) for more details.
        \item The authors should provide instructions on data access and preparation, including how to access the raw data, preprocessed data, intermediate data, and generated data, etc.
        \item The authors should provide scripts to reproduce all experimental results for the new proposed method and baselines. If only a subset of experiments are reproducible, they should state which ones are omitted from the script and why.
        \item At submission time, to preserve anonymity, the authors should release anonymized versions (if applicable).
        \item Providing as much information as possible in supplemental material (appended to the paper) is recommended, but including URLs to data and code is permitted.
    \end{itemize}

\item {\bf Experimental setting/details}
    \item[] Question: Does the paper specify all the training and test details (e.g., data splits, hyperparameters, how they were chosen, type of optimizer, etc.) necessary to understand the results?
    \item[] Answer: \answerTODO{} % Replace by \answerYes{}, \answerNo{}, or \answerNA{}.
    \item[] Justification: \justificationTODO{}
    \item[] Guidelines:
    \begin{itemize}
        \item The answer NA means that the paper does not include experiments.
        \item The experimental setting should be presented in the core of the paper to a level of detail that is necessary to appreciate the results and make sense of them.
        \item The full details can be provided either with the code, in appendix, or as supplemental material.
    \end{itemize}

\item {\bf Experiment statistical significance}
    \item[] Question: Does the paper report error bars suitably and correctly defined or other appropriate information about the statistical significance of the experiments?
    \item[] Answer: \answerTODO{} % Replace by \answerYes{}, \answerNo{}, or \answerNA{}.
    \item[] Justification: \justificationTODO{}
    \item[] Guidelines:
    \begin{itemize}
        \item The answer NA means that the paper does not include experiments.
        \item The authors should answer "Yes" if the results are accompanied by error bars, confidence intervals, or statistical significance tests, at least for the experiments that support the main claims of the paper.
        \item The factors of variability that the error bars are capturing should be clearly stated (for example, train/test split, initialization, random drawing of some parameter, or overall run with given experimental conditions).
        \item The method for calculating the error bars should be explained (closed form formula, call to a library function, bootstrap, etc.)
        \item The assumptions made should be given (e.g., Normally distributed errors).
        \item It should be clear whether the error bar is the standard deviation or the standard error of the mean.
        \item It is OK to report 1-sigma error bars, but one should state it. The authors should preferably report a 2-sigma error bar than state that they have a 96\% CI, if the hypothesis of Normality of errors is not verified.
        \item For asymmetric distributions, the authors should be careful not to show in tables or figures symmetric error bars that would yield results that are out of range (e.g. negative error rates).
        \item If error bars are reported in tables or plots, The authors should explain in the text how they were calculated and reference the corresponding figures or tables in the text.
    \end{itemize}

\item {\bf Experiments compute resources}
    \item[] Question: For each experiment, does the paper provide sufficient information on the computer resources (type of compute workers, memory, time of execution) needed to reproduce the experiments?
    \item[] Answer: \answerTODO{} % Replace by \answerYes{}, \answerNo{}, or \answerNA{}.
    \item[] Justification: \justificationTODO{}
    \item[] Guidelines:
    \begin{itemize}
        \item The answer NA means that the paper does not include experiments.
        \item The paper should indicate the type of compute workers CPU or GPU, internal cluster, or cloud provider, including relevant memory and storage.
        \item The paper should provide the amount of compute required for each of the individual experimental runs as well as estimate the total compute. 
        \item The paper should disclose whether the full research project required more compute than the experiments reported in the paper (e.g., preliminary or failed experiments that didn't make it into the paper). 
    \end{itemize}
    
\item {\bf Code of ethics}
    \item[] Question: Does the research conducted in the paper conform, in every respect, with the NeurIPS Code of Ethics \url{https://neurips.cc/public/EthicsGuidelines}?
    \item[] Answer: \answerTODO{} % Replace by \answerYes{}, \answerNo{}, or \answerNA{}.
    \item[] Justification: \justificationTODO{}
    \item[] Guidelines:
    \begin{itemize}
        \item The answer NA means that the authors have not reviewed the NeurIPS Code of Ethics.
        \item If the authors answer No, they should explain the special circumstances that require a deviation from the Code of Ethics.
        \item The authors should make sure to preserve anonymity (e.g., if there is a special consideration due to laws or regulations in their jurisdiction).
    \end{itemize}

\item {\bf Broader impacts}
    \item[] Question: Does the paper discuss both potential positive societal impacts and negative societal impacts of the work performed?
    \item[] Answer: \answerTODO{} % Replace by \answerYes{}, \answerNo{}, or \answerNA{}.
    \item[] Justification: \justificationTODO{}
    \item[] Guidelines:
    \begin{itemize}
        \item The answer NA means that there is no societal impact of the work performed.
        \item If the authors answer NA or No, they should explain why their work has no societal impact or why the paper does not address societal impact.
        \item Examples of negative societal impacts include potential malicious or unintended uses (e.g., disinformation, generating fake profiles, surveillance), fairness considerations (e.g., deployment of technologies that could make decisions that unfairly impact specific groups), privacy considerations, and security considerations.
        \item The conference expects that many papers will be foundational research and not tied to particular applications, let alone deployments. However, if there is a direct path to any negative applications, the authors should point it out. For example, it is legitimate to point out that an improvement in the quality of generative models could be used to generate deepfakes for disinformation. On the other hand, it is not needed to point out that a generic algorithm for optimizing neural networks could enable people to train models that generate Deepfakes faster.
        \item The authors should consider possible harms that could arise when the technology is being used as intended and functioning correctly, harms that could arise when the technology is being used as intended but gives incorrect results, and harms following from (intentional or unintentional) misuse of the technology.
        \item If there are negative societal impacts, the authors could also discuss possible mitigation strategies (e.g., gated release of models, providing defenses in addition to attacks, mechanisms for monitoring misuse, mechanisms to monitor how a system learns from feedback over time, improving the efficiency and accessibility of ML).
    \end{itemize}
    
\item {\bf Safeguards}
    \item[] Question: Does the paper describe safeguards that have been put in place for responsible release of data or models that have a high risk for misuse (e.g., pretrained language models, image generators, or scraped datasets)?
    \item[] Answer: \answerTODO{} % Replace by \answerYes{}, \answerNo{}, or \answerNA{}.
    \item[] Justification: \justificationTODO{}
    \item[] Guidelines:
    \begin{itemize}
        \item The answer NA means that the paper poses no such risks.
        \item Released models that have a high risk for misuse or dual-use should be released with necessary safeguards to allow for controlled use of the model, for example by requiring that users adhere to usage guidelines or restrictions to access the model or implementing safety filters. 
        \item Datasets that have been scraped from the Internet could pose safety risks. The authors should describe how they avoided releasing unsafe images.
        \item We recognize that providing effective safeguards is challenging, and many papers do not require this, but we encourage authors to take this into account and make a best faith effort.
    \end{itemize}

\item {\bf Licenses for existing assets}
    \item[] Question: Are the creators or original owners of assets (e.g., code, data, models), used in the paper, properly credited and are the license and terms of use explicitly mentioned and properly respected?
    \item[] Answer: \answerTODO{} % Replace by \answerYes{}, \answerNo{}, or \answerNA{}.
    \item[] Justification: \justificationTODO{}
    \item[] Guidelines:
    \begin{itemize}
        \item The answer NA means that the paper does not use existing assets.
        \item The authors should cite the original paper that produced the code package or dataset.
        \item The authors should state which version of the asset is used and, if possible, include a URL.
        \item The name of the license (e.g., CC-BY 4.0) should be included for each asset.
        \item For scraped data from a particular source (e.g., website), the copyright and terms of service of that source should be provided.
        \item If assets are released, the license, copyright information, and terms of use in the package should be provided. For popular datasets, \url{paperswithcode.com/datasets} has curated licenses for some datasets. Their licensing guide can help determine the license of a dataset.
        \item For existing datasets that are re-packaged, both the original license and the license of the derived asset (if it has changed) should be provided.
        \item If this information is not available online, the authors are encouraged to reach out to the asset's creators.
    \end{itemize}

\item {\bf New assets}
    \item[] Question: Are new assets introduced in the paper well documented and is the documentation provided alongside the assets?
    \item[] Answer: \answerTODO{} % Replace by \answerYes{}, \answerNo{}, or \answerNA{}.
    \item[] Justification: \justificationTODO{}
    \item[] Guidelines:
    \begin{itemize}
        \item The answer NA means that the paper does not release new assets.
        \item Researchers should communicate the details of the dataset/code/model as part of their submissions via structured templates. This includes details about training, license, limitations, etc. 
        \item The paper should discuss whether and how consent was obtained from people whose asset is used.
        \item At submission time, remember to anonymize your assets (if applicable). You can either create an anonymized URL or include an anonymized zip file.
    \end{itemize}

\item {\bf Crowdsourcing and research with human subjects}
    \item[] Question: For crowdsourcing experiments and research with human subjects, does the paper include the full text of instructions given to participants and screenshots, if applicable, as well as details about compensation (if any)? 
    \item[] Answer: \answerTODO{} % Replace by \answerYes{}, \answerNo{}, or \answerNA{}.
    \item[] Justification: \justificationTODO{}
    \item[] Guidelines:
    \begin{itemize}
        \item The answer NA means that the paper does not involve crowdsourcing nor research with human subjects.
        \item Including this information in the supplemental material is fine, but if the main contribution of the paper involves human subjects, then as much detail as possible should be included in the main paper. 
        \item According to the NeurIPS Code of Ethics, workers involved in data collection, curation, or other labor should be paid at least the minimum wage in the country of the data collector. 
    \end{itemize}

\item {\bf Institutional review board (IRB) approvals or equivalent for research with human subjects}
    \item[] Question: Does the paper describe potential risks incurred by study participants, whether such risks were disclosed to the subjects, and whether Institutional Review Board (IRB) approvals (or an equivalent approval/review based on the requirements of your country or institution) were obtained?
    \item[] Answer: \answerTODO{} % Replace by \answerYes{}, \answerNo{}, or \answerNA{}.
    \item[] Justification: \justificationTODO{}
    \item[] Guidelines:
    \begin{itemize}
        \item The answer NA means that the paper does not involve crowdsourcing nor research with human subjects.
        \item Depending on the country in which research is conducted, IRB approval (or equivalent) may be required for any human subjects research. If you obtained IRB approval, you should clearly state this in the paper. 
        \item We recognize that the procedures for this may vary significantly between institutions and locations, and we expect authors to adhere to the NeurIPS Code of Ethics and the guidelines for their institution. 
        \item For initial submissions, do not include any information that would break anonymity (if applicable), such as the institution conducting the review.
    \end{itemize}

\item {\bf Declaration of LLM usage}
    \item[] Question: Does the paper describe the usage of LLMs if it is an important, original, or non-standard component of the core methods in this research? Note that if the LLM is used only for writing, editing, or formatting purposes and does not impact the core methodology, scientific rigorousness, or originality of the research, declaration is not required.
    %this research? 
    \item[] Answer: \answerTODO{} % Replace by \answerYes{}, \answerNo{}, or \answerNA{}.
    \item[] Justification: \justificationTODO{}
    \item[] Guidelines:
    \begin{itemize}
        \item The answer NA means that the core method development in this research does not involve LLMs as any important, original, or non-standard components.
        \item Please refer to our LLM policy (\url{https://neurips.cc/Conferences/2025/LLM}) for what should or should not be described.
    \end{itemize}

\end{enumerate}